\documentclass{article}

 \usepackage[preprint]{neurips_2026}

\usepackage[utf8]{inputenc} 
\usepackage[T1]{fontenc}    
\usepackage{hyperref}       
\usepackage{url}            
\usepackage{booktabs}       
\usepackage{amsfonts}       
\usepackage{nicefrac}       
\usepackage{microtype}      
\usepackage{xcolor}         

\usepackage{amsmath}
\usepackage{amssymb}
\usepackage{amsthm}

\newtheorem{theorem}{Theorem}

\newtheorem{proposition}{Proposition}

\theoremstyle{definition}
\newtheorem{definition}{Definition}
\newtheorem{example}{Example}
\theoremstyle{remark}
\newtheorem{remark}{Remark}
\newtheorem{assumption}{Assumption}

\usepackage{rotating}
\usepackage{fancyvrb}
\usepackage{url}
\usepackage{booktabs}
\usepackage{nicefrac}
\usepackage{comment}
\usepackage{framed}
\usepackage{caption}
\usepackage{subcaption}
\usepackage{bbm}
\usepackage{array}
\usepackage[english]{babel}
\usepackage[autostyle, english=american]{csquotes}
\MakeOuterQuote{"}
\usepackage{enumitem}
\usepackage{algorithm}
\usepackage{algpseudocode}
\usepackage{multirow}
\usepackage{tikz}
\usepackage{mathtools}
\usetikzlibrary{arrows.meta, positioning}
\usepackage{enumitem}

\usepackage{listings}
\usepackage{xcolor}
\definecolor{codeblue}{RGB}{20,80,180}
\definecolor{codegreen}{RGB}{20,120,60}
\definecolor{codegray}{RGB}{90,90,90}
\definecolor{codepurple}{RGB}{130,50,160}

\lstdefinestyle{hpo}{
  language=Python,
  basicstyle=\ttfamily\footnotesize,
  keywordstyle=\color{codeblue},
  commentstyle=\color{codegreen},
  stringstyle=\color{codepurple},
  numbers=left,
  numberstyle=\tiny\color{codegray},
  stepnumber=1,
  numbersep=6pt,
  showstringspaces=false,
  breaklines=true,
  columns=fullflexible,
  frame=single,
  framerule=0.3pt,
  xleftmargin=1.5em,
  framexleftmargin=1.2em
}

\newcommand{\commented}[1]{}

\newcommand{\eg}{{\em e.g.\/}, }

\title{Policy Optimization in Hybrid Discrete-Continuous Action Spaces via Mixed Gradients}

\author{%
  Matias Alvo \\
  Graduate School of Business\\
  Columbia University\\
  \texttt{malvo26@gsb.columbia.edu} \\
  \And
  Daniel Russo \\
  Graduate School of Business\\
  Columbia University\\
  \texttt{djr2174@gsb.columbia.edu} \\
  \AND
  Yash Kanoria \\
  Graduate School of Business\\
  Columbia University\\
  \texttt{yk2577@gsb.columbia.edu} \\
}

\begin{document}

\maketitle

\begin{abstract}

We study reinforcement learning in hybrid discrete–continuous action spaces, such as settings where the discrete component selects a regime (or index) and the continuous component optimizes within it — a structure common in robotics, control, and operations problems. Standard model-free policy gradient methods rely on score-function (SF) estimators and suffer from severe credit-assignment issues in high-dimensional settings, leading to poor gradient quality. On the other hand, differentiable simulation largely sidesteps these issues by backpropagating through a simulator, but the presence of discrete actions or non-smooth dynamics yields biased or uninformative gradients.
To address this, we propose Hybrid Policy Optimization (HPO), which backpropagates through the simulator wherever smoothness permits, using a mixed gradient estimator that combines pathwise and SF gradients while maintaining unbiasedness. We also show how problems with action discontinuities can be reformulated in hybrid form, further broadening its applicability. Empirically, HPO substantially outperforms PPO on inventory control and switched linear-quadratic regulator problems, with performance gaps increasing as the continuous action dimension grows.
Finally, we characterize the structure of the mixed gradient, showing that its cross term — which captures how continuous actions influence future discrete decisions — becomes negligible near a discrete best response, thereby enabling approximate decentralized updates of the continuous and discrete components and reducing variance near optimality.
All resources are available at \url{https://github.com/MatiasAlvo/hybrid-rl}.

\end{abstract}

\section{Introduction}
\label{sec:introduction}

Reinforcement learning (RL) at scale faces a fundamental credit assignment problem. Methods such as REINFORCE \citep{williams1992simple} and Proximal Policy Optimization (PPO) \citep{schulman2017proximal} estimate policy gradients by correlating perturbations in the policy's actions with the resulting cumulative reward. When a trajectory contains many decisions, each itself a high-dimensional action vector, this estimation problem becomes severely underdetermined: a single scalar return must be attributed back across thousands of action components whose effects are entangled with one another and with exogenous randomness. The signal-to-noise ratio of score-function (SF) gradient estimators degrades accordingly, and learning becomes slow or stalls altogether.

Recent work shows that this difficulty can be largely sidestepped when given a simulator that exposes additional structure beyond what a generic MDP provides. Two properties are central: exogeneity and smoothness. Under exogeneity, the underlying randomness --- customer demand, weather, market movements --- is independent of the agent's actions, so a recorded scenario can be replayed against any policy. Although the decision-maker (DM) still acts under partial information, the simulator has access to the full realization, enabling offline evaluation of arbitrary policies on the same trajectory. Under smoothness, small changes in actions induce small, computable changes in costs and state transitions, allowing gradients to be obtained by backpropagation through the simulator. Rather than relying on noisy outcome correlations, pathwise (PW) gradients compute how each action affects total cost via the chain rule, thereby largely mitigating the credit assignment problem. This paradigm has driven recent progress in robotics \citep{hu2019difftaichi,levine2016end}, inventory management \citep{alvo2023deep,madeka2022deep}, and queueing control \citep{che2024differentiable}.

However, PW gradients are brittle to even mild departures from smoothness: discontinuities in the cost or transition functions introduce bias in the resulting gradients, and settings with purely discrete actions render PW gradients entirely uninformative. As soon as a problem fails to satisfy \textit{all} the conditions for clean PW estimation, the standard practice is to fall back to fully model-free methods — often discarding the very structure that made differentiable simulation effective in the first place.

In practice, many problems satisfy \textbf{most} conditions allowing the use of PW gradients. For example, several problems involve a \textit{hybrid} action space, in which some actions are discrete and others are continuous, and the model is smooth with respect to the continuous actions. One such example is the Switched Linear Quadratic Regulator (S-LQR) \citep{zhang2009value}, in which the DM selects a discrete \textit{mode} that defines the transition and cost functions, and a continuous control input given the mode. Additionally, some problems have only a ``few'' discontinuities. For instance, in the joint replenishment problem (JRP) with large fixed ordering costs \citep{goyal1989joint}, the cost function is discontinuous at zero but continuous elsewhere. In the settings outlined above, should we discard PW gradients entirely, or can we exploit them wherever structure permits?

\medskip
{\bf Main contributions.}
In this work, we develop policy gradient algorithms for settings involving hybrid discrete-continuous action spaces. We propose a modeling framework and structured policy classes that enable an unbiased gradient  estimator blending PW and SF gradients. Our design leverages PW gradients wherever possible, enabling sharper credit assignment where the problem structure allows. We call our framework \textit{Hybrid Policy Optimization} (HPO). As shown in Figure~\ref{fig:performance-results}, HPO scales far better with the continuous action dimension than PPO, the go-to policy-based RL algorithm, with especially dramatic gaps in the S-LQR. 
Full results in Appendix~\ref{appendix:results} confirm that HPO consistently outperforms PPO in high-dimensional settings. As we show in 
Section~\ref{sec:gradient-quality-exps} (see Figure~\ref{fig:ppo_vs_hybrid}), 
this performance gap is driven by HPO's \textit{mixed} gradient estimator 
--- which blends PW and SF terms --- providing a 
drastically better gradient signal than  pure SF-type gradients, which PPO relies on.

\begin{figure}
\centering
\begin{subfigure}{0.48\textwidth}
    \centering
    \includegraphics[width=\linewidth]{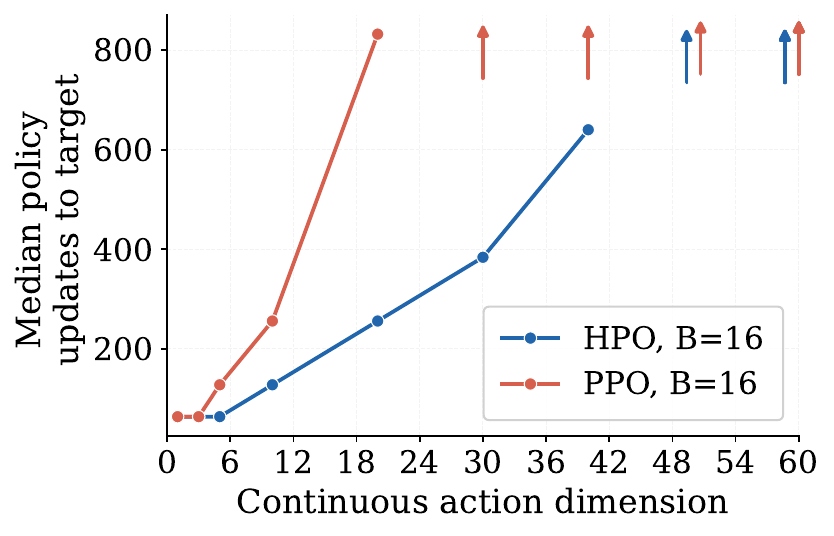}
    \caption{Joint replenishment problem.}
    \label{fig:performance-jrp}
\end{subfigure}
\hfill
\begin{subfigure}{0.48\textwidth}
    \centering
    \includegraphics[width=\linewidth]{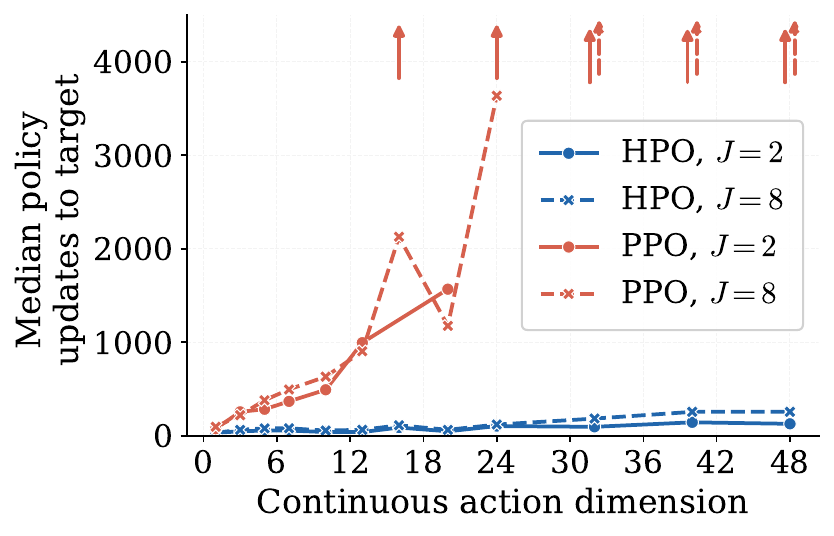}
    \caption{Switched LQR, two mode counts $J$.}
    \label{fig:performance-lqr}
\end{subfigure}
\caption{Median number of policy updates required for HPO and PPO to reach $10\%$ of the best observed run, as the continuous action dimension $p$ increases, on the JRP (left) and switched LQR (right). We consider 10 independent trials per configuration. Vertical arrows represent configurations for which the median does not reach the target within the allotted budget.}
\label{fig:performance-results}
\end{figure}

Our main contributions are as follows:
\begin{enumerate}[leftmargin=*]

    \item \textbf{Hybrid Policy Optimization framework:} We propose a policy optimization framework for settings with hybrid discrete–continuous action spaces that exploits known smoothness in the environment. Our structured ``towered'' policy representation—a stochastic discrete policy component paired with a deterministic continuous controller—yields an unbiased \textit{mixed} gradient estimator combining PW and SF terms. To extend this framework to problems with action discontinuities, we show how such problems can be expressed in hybrid form.
    
    \item \textbf{Empirical evaluation:} We find that HPO significantly outperforms PPO, with performance gains that increase with the continuous action dimension. We also find that the mixed gradient can yield much more accurate gradient estimation than pure SF estimators (see Figure~\ref{fig:ppo_vs_hybrid}), supporting the intuition that PW gradients improve credit assignment.
    
    \item {\bf Toward decentralized training:} The mixed gradient entangles the discrete and
continuous components through a ``cross term'', necessitating centralized computation
with privileged access to both. We prove this coupling vanishes near a discrete best
response (Theorem~\ref{thm:cross-term-vanishes-for-near-opt}), grounding approximate
decentralized updates of the two components. The vanishing also motivates dropping the cross term as a variance-reduction technique (Section~\ref{sec:cross-term-vanishes}).
    
\end{enumerate}

{\bf Overview of related work.}
Several works have tackled discrete--continuous action spaces using \emph{model-free RL}. 
\citet{xiong2018parametrized} introduced P-DQN, which combines DQN-style discrete action selection with deterministic policy-gradient updates for the continuous parameters. 
\citet{huang2022mixed} proposed a related DQL--DDPG approach, using an actor to generate continuous controls for each candidate discrete action and a shared critic to evaluate the resulting hybrid actions. 
\citet{li2021hyar} used a variational autoencoder to represent discrete--continuous hybrid actions in a continuous latent space, optimizing the policy with standard RL algorithms. 
These works treat the environment as a black box and therefore cannot exploit PW gradients, even when smoothness is available.

Our work builds on the literature of unbiased gradient estimation for stochastic objectives, a central topic in modern ML. We refer readers to \citet{mohamed2020monte} for an accessible introduction. Of particular importance is \citet{schulman2015gradient}, who propose a general recipe for computing unbiased gradient estimators in computational graphs with both stochastic and deterministic operations. We build on this framework and specialize it to hybrid action spaces, deriving a ``mixed'' gradient estimator tailored to the structure of these problems.
\citet{levy2018deterministic} derive mixed gradient estimators for problems with discrete actions, but do so by introducing a continuous relaxation of the dynamics and optimizing the resulting surrogate objective. We instead exploit the structure of hybrid action spaces directly: smoothness holds conditional on the discrete action, which suffices to obtain an unbiased estimator for the original objective without relaxing the dynamics.

Our work also broadens the applicability of differentiable simulation 
\citep{hu2019difftaichi, alvo2023deep, madeka2022deep}, which has primarily 
focused on fully smooth settings. Prior work has extended these methods 
to discrete decisions using surrogate gradients such as the straight-through 
estimator \citep{bengio2013estimating} (see, \eg \cite{che2024differentiable}). 
We complement these approaches by exploiting problem structure in hybrid 
discrete--continuous action spaces, leveraging PW gradients wherever 
differentiability is available in a manner that preserves unbiasedness.

{\bf Limited scope.}
Our work follows a line of research that assumes access at training time to an exogenous simulator, enabling replay of alternative policies under the same realization of uncertainty (Section ~\ref{sec:problem-description}); our methods therefore do not apply to fully black-box simulators. Within this setting, our central question is whether the benefits of differentiable simulation can be preserved in hybrid discrete–continuous action spaces. The natural ablation is to replace the mixed estimator with a pure SF estimator (PPO) over the entire policy, which is our primary point of comparison. Methods designed for hybrid action spaces in black-box settings (\eg Parameterized DQN, HyAR) address complementary problems (like representing hybrid actions) without exploiting simulator differentiability — and still rely on SF-style credit assignment for optimization. By contrast, our goal is to preserve the stronger credit assignment enabled by direct backpropagation through smooth components of the simulator wherever possible. Finally, we focus on benchmarks (JRP and  S-LQR) that allow systematic variation of the continuous action dimension, isolating its effect on gradient quality and optimization behavior. A natural next step is to apply this framework to large-scale problems in multi-echelon inventory control, queueing control, and robotic control, where differentiable simulators are increasingly available but discrete decisions have limited the reach of PW methods.

\section{Problem Description}\label{sec:problem-description}

We consider a setting recently dubbed an \emph{exogenous MDP} — less a property of the underlying control problem than a description of the \emph{simulator available at training time}. Classical dynamic programming assumes the transition model is analytically tractable, while standard RL treats the environment as a black box with no ability to reuse randomness across policies. An exogenous MDP sits between these extremes: the simulator evaluates policies on a fixed \emph{scenario} (a realization of the underlying exogenous randomness) using known dynamics, enabling direct policy comparison on the same trajectory and sharper credit assignment. \emph{Crucially, the DM does not observe the scenario at decision time — the simulator's privileged access is a training-time artifact, not a relaxation of the information structure.} This structure arises naturally in robotics (scenarios fix physical parameters; the agent observes state and acts), logistics (known flow or service dynamics), and inventory control (scenarios are demand realizations). We formalize this setting as follows.

\subsection{Markov decision processes with exogenous randomness \label{sec:mdp} }

There are two common (and equivalent) mathematical abstractions of an MDP: one specified by a transition kernel $P(\cdot\mid s,a)$, and one written in ``disturbance'' form as a deterministic transition map $f$ driven by exogenous noise $\xi_t$ \citep{bertsekas2012dynamic}. 
We adopt the latter formalism, since it streamlines the development of our gradient estimators and algorithms.

In each period $t$, a DM observes state $s_t\in\mathcal{S}$, chooses action $a_t\in\mathcal{A}(s_t)$, and the system evolves as $s_{t+1}=f(s_t,a_t,\xi_t)$ with cost $c(s_t,a_t,\xi_t)$, where $\xi=(\xi_0,\ldots,\xi_{T-1})$ is a sequence of exogenous random terms with joint law $\mathbb{P}_{\xi}$. We denote an MDP by $\mathcal{M}=(\mathcal{S},\mathcal{A},c,f,\mathbb{P}_{\xi})$ and
define the expected cost $J(\pi)$ for initial state $s_0$, which the DM seeks to minimize over $\pi\in\Pi$:
\[
J(\pi) := \mathbb{E}_{\pi}\!\left[\sum_{t=0}^{T-1} \gamma^t\, c(s_t,\pi_t(s_t),\xi_t)\,\Big|\,s_0\right].
\]
We next impose a substantive assumption on the training-time simulator. Following \textit{hindsight RL} \citep{alvo2023deep,xie2024vc} and \textit{exogenous MDPs} \citep{sinclair2022hindsight}, we assume that the simulator has access to the transition and cost functions $(f,c)$ and observes the realized exogenous disturbances. Therefore, it can evaluate any policy counterfactually on the same fixed scenario $\bar{\xi}$.

\subsection{Hybrid discrete-continuous action MDPs}
\label{sec:hybrid-mdps}

Our methods apply to MDPs with hybrid discrete--continuous action spaces, which we call 
\textit{hybrid MDPs}. The state $s$ is a continuous vector and the action is a pair 
$a=(x,b)$, where $x$ is a discrete action selected from a finite set $\mathcal{X}$ and 
$b$ is a continuous action; the set of feasible continuous actions $\mathcal{B}^x$ may depend on 
$x$. Our primary assumption is that the transition and cost functions are smooth with 
respect to $b$ and $s$, but not $x$: that is, $f(s,(x,b),\xi)$ and $c(s,(x,b),\xi)$ 
are smooth with respect to $(s,b)$ for any fixed $x$ and $\xi$. Formal conditions are 
provided in Appendix~\ref{appendix:hybrid-mdps}.

We illustrate the framework with two examples that serve as our main benchmarks.

\begin{example}[Joint replenishment problem]\label{ex:jrp}
A retailer manages $p$ products over $T$ periods. At each period $t\in\{0,\ldots,T-1\}$, 
the retailer observes inventory $I_t=(I_t^1,\ldots,I_t^p)\in\mathbb{R}^p$ and outstanding 
orders $Q_t\in\mathbb{R}_+^{p\times(L-1)}$, where $L\geq 2$ is the lead time, giving state 
$s_t=(I_t,Q_t)$. The discrete action $x_t\in\{0,1\}$ indicates whether to place a shipment 
and the continuous action $b_t\in\mathbb{R}_+^p$ specifies order quantities; the 
executed order is $x_t\cdot b_t$. Demand $\xi_t\in\mathbb{R}_+^p$ is realized and the 
system updates as
\begin{equation*}
    I_{t+1}^k = I_t^k - \xi_t^k + Q_t^k[1], \qquad 
    Q_{t+1}^k = \bigl(Q_t^k[2],\ldots,Q_t^k[L-1],\,x_t b_t^k\bigr).
\end{equation*}
The per-period cost combines linear holding and underage costs with a fixed ordering cost $K$:
\begin{equation*}
    c(s_t,a_t,\xi_t) = \sum_{k=1}^p\bigl[u_k(\xi_t^k-I_t^k)^+ + h_k(I_t^k-\xi_t^k)^+\bigr] 
    + K\cdot x_t.
\end{equation*}
The exogenous scenario consists of $s_0$ and the demand sequence $\xi_{0:T-1}$.
\end{example}

\begin{example}[Switched LQR]\label{ex:slqr}
A generalization of the classical LQR problem \citep{kalman1960contributions}. At each period 
$t\in\{0,\ldots,T-1\}$, the agent observes state $s_t\in\mathbb{R}^p$, selects a discrete 
mode $x_t\in\{1,\ldots,J\}$ and a continuous control $b_t\in\mathbb{R}^p$. Each mode $j$ 
has dynamics matrices $(A_j,B_j)$ and positive-definite cost matrices $(Q_j,R_j)$, and 
the system evolves as
\begin{equation*}
    s_{t+1} = A_{x_t}s_t + B_{x_t}b_t + W_t, \qquad 
    c(s_t,a_t) = s_t^\top Q_{x_t}s_t + b_t^\top R_{x_t}b_t,
\end{equation*}
where $W_t$ is exogenous process noise. The exogenous scenario consists of $s_0$ and 
$W_{0:T-1}$. 
\end{example}

\begin{remark}[Piecewise-smooth MDPs (PS-MDPs)]
The hybrid MDP framework also encompasses problems in which the cost or transition functions are discontinuous in the action but smooth within finitely many regions of the action space, which we refer to as PS-MDPs. Such problems can be recast as hybrid MDPs by letting the discrete action select the smooth region and the continuous action specify the control within it. A formal definition and equivalence result are provided in Appendix~\ref{appendix:ps-mdps}.
\end{remark}

\section{Policy approximation and gradient computation \label{sec:policy-and-gradient}}

{\bf Policy representation.}
A policy in a hybrid MDP maps states to a joint distribution over discrete and continuous 
actions. Although these components generally do not decouple --- the continuous control 
should depend on the selected mode, as in the S-LQR --- we find it useful to factor the 
policy as $\pi = (\pi^{\mathcal{X}}, \pi^{\mathcal{B}})$, where $\pi^{\mathcal{X}}$ 
outputs a distribution over discrete actions and $\pi^{\mathcal{B}}$ deterministically 
outputs one action \emph{candidate} per discrete value $j \in \mathcal{X}$. The DM samples $x_t \sim \pi^{\mathcal{X}}(s_t)$ and applies the candidate $\pi^{\mathcal{B}}(s_t)[x_t]$ (see Figure~\ref{fig:hybrid-policy-architecture} 
in Appendix~\ref{appendix:policy-parameterization}).
We consider a parameterized hybrid policy $\pi_\theta$ with $\theta = (\phi, \kappa)$, 
where $\pi^{\mathcal{X}}_\phi$ and $\pi^{\mathcal{B}}_\kappa$ are parameterized 
independently. Parameter sharing is possible but would obscure the interplay between 
the two components central to our analysis (Section~\ref{sec:cross-term-vanishes}).

{\bf Pathwise gradient estimation.}
Suppose the discrete policy were fixed at $x_t = 1$ for all $t$. Then, for a fixed 
scenario $\xi$, the system evolves as the smooth deterministic dynamical system 
$s_{t+1} = f(s_t, (1, \pi^{\mathcal{B}}_{\kappa}(s_t)[1]), \xi_t)$, and the state 
$s_t(\kappa)$ is a smooth function of $\kappa$ computed by iterating $f$. The total cost 
$\sum_{t=0}^{T-1} c(s_t(\kappa), (1, \pi^{\mathcal{B}}_{\kappa}(s_t(\kappa))[1]), \xi_t)$ 
is also smooth in $\kappa$. Its gradient with respect to $\kappa$ --- obtained by 
direct backpropagation through the simulator --- is called the \emph{pathwise (PW) gradient}. Unlike SF, PW gradients isolate each action coordinate's effect on cost, sharpening credit assignment. 

{\bf Deriving the mixed policy gradient estimator.}
Towards clean mathematical formalizations including Theorem~\ref{thm:mixed-gradient} below, we adopt a discounted infinite-horizon formulation with $\gamma \in (0,1)$ and stationary policies (our experiments will operate on finite-horizon trajectories of length $T$). Standard regularity of dynamics and policies---smoothness of $f, c$ in $(s,b)$; differentiability and bounded log-gradients of the policy components; bounded costs---is formalized as Assumptions~\ref{assumption:smoothness-for-continuous} and~\ref{ass:policy-regularity} in Appendix~\ref{appendix:notation-and-assumptions}.

When the discrete component is non-trivial and stochastic, the scenario $\xi$ alone no
longer determines a smooth trajectory: the sampled $x_{0:\infty}$ also varies. The key
insight is that the within-trajectory smoothness established above still holds conditioned
on any realization of $(x_{0:\infty}, \xi)$. We give a formula for the policy gradient that
exploits this; see Algorithm~\ref{alg:hpo-loss} for PyTorch-style pseudocode.

The estimator is most naturally expressed via \texttt{stop\_grad} operators in
autodifferentiation frameworks like PyTorch or JAX. For the formal statement, fix an
arbitrary scenario $\xi$ and discrete sequence $x_{0:\infty}$. Conditional on this
realization, the state $s_t(\kappa)$ and continuous action $b_t(\kappa)$ are deterministic
smooth functions of $\kappa$, just as in the purely continuous case above. The theorem
statement suppresses the dependence on $(\xi, x_{0:\infty})$, focusing on the dependence of
the realized trajectory on $\kappa$. We write $\mathbb{E}_\theta$ for expectation under the joint law of $(\xi_{0:\infty}, x_{0:\infty})$ induced by $\pi_\theta$, set $J(\theta) := \mathbb{E}_\theta\bigl[\sum_{t=0}^{\infty}\gamma^{t} c(s_t(\kappa),(x_t,b_t(\kappa)),\xi_t)\mid s_0\bigr]$, and let $Q_{\pi_\theta}$ denote its action-value function.

\begin{theorem}[Mixed policy gradient estimator]
\label{thm:mixed-gradient}
    Under Assumptions \ref{assumption:smoothness-for-continuous} and \ref{ass:policy-regularity}, the objective
    $J(\theta)$ is differentiable in $\theta = (\phi,\kappa)$ and its gradients are given by:
\begin{align}
    \nabla_\phi J(\theta)
    &= \mathbb{E}_{\theta}\!\left[
    \sum_{t=0}^{\infty}\gamma^{t}
    \nabla_{\phi}\log\pi^{\mathcal{X}}_{\phi}\left(x_t\mid s_t(\kappa)\right)
    Q_{\pi_{\theta}}(s_t(\kappa),a_t)
    \right],
    \label{eq:grad-phi}
    \\[2pt]
    \nabla_\kappa J(\theta)
    &= \underbrace{\mathbb{E}_{\theta}\!\left[
    \sum_{t=0}^{\infty}\gamma^{t}\nabla_{\kappa}c\bigl(s_t(\kappa),(x_t,b_t(\kappa)),\xi_t\bigr)
    \right]}_{\text{PW gradient term}}+\underbrace{\mathbb{E}_{\theta}\!\left[
    \sum_{t=0}^{\infty}\gamma^{t}
    \nabla_{\kappa}\log\pi^{\mathcal{X}}_{\phi}\!\left(x_t\mid s_t(\kappa)\right)
    Q_{\pi_{\theta}}(s_t(\kappa),a_t)
    \right]}_{\text{Cross term}}
    \label{eq:grad-kappa}
\end{align}
    \end{theorem}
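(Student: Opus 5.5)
The plan is to condition on the discrete sequence and the scenario. Exogeneity lets us write the objective as an integral over the $\theta$-independent law $\mathbb{P}_\xi$ of a sum over discrete sequences. The only $\theta$-dependence is then in two places: the probability weights of the discrete sequence, and the smooth realized costs. I would first work with a truncated horizon $H$, namely
\[
J_H(\theta)=\mathbb{E}_{\xi}\Bigl[\sum_{x_{0:H-1}\in\mathcal{X}^H} P_\theta(x_{0:H-1}\mid\xi)\, C_H(\kappa;x_{0:H-1},\xi)\Bigr],
\]
where
\[
P_\theta(x_{0:H-1}\mid\xi)=\prod_{t<H}\pi^{\mathcal{X}}_\phi\bigl(x_t\mid s_t(\kappa)\bigr), \qquad C_H=\sum_{t<H}\gamma^t c\bigl(s_t(\kappa),(x_t,b_t(\kappa)),\xi_t\bigr).
\]
Conditional on $(x_{0:H-1},\xi)$, the states $s_t(\kappa)$ and actions $b_t(\kappa)=\pi^{\mathcal{B}}_\kappa(s_t(\kappa))[x_t]$ are obtained by composing finitely many maps. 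By Assumption~\ref{assumption:smoothness-for-continuous} these maps are smooth in $(s,b)$ for fixed $x$, so $C_H$ and $P_\theta$ are differentiable in $\kappa$. The weights $P_\theta$ are also differentiable in $\phi$ by Assumption~\ref{ass:policy-regularity}.

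\textbf{Step 1: differentiate the finite-horizon objective.} Since the sum over $\mathcal{X}^H$ is finite, I would differentiate under the integral in $\xi$ by dominated convergence, using bounded costs, bounded log-gradients and bounded pathwise derivatives on the finite horizon. Applying the product rule together with $\nabla P_\theta = P_\theta\nabla\log P_\theta$ gives
\[
\nabla_\theta J_H=\mathbb{E}_\theta\Bigl[\nabla_\theta C_H+\sum_{t<H}\nabla_\theta\log\pi^{\mathcal{X}}_\phi\bigl(x_t\mid s_t(\kappa)\bigr)\,C_H\Bigr].
\]
For $\phi$, the first term vanishes, because $C_H$ does not depend on $\phi$ once $x$ is fixed. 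For $\kappa$, the first term is the PW term in \eqref{eq:grad-kappa}; here $\nabla_\kappa c$ is read as the total derivative through $s_t(\kappa)$ and $b_t(\kappa)$. The second term is the raw form of the cross term. It arises because $\kappa$ moves the states at which $\pi^{\mathcal{X}}_\phi$ is evaluated, exactly as $\phi$ does through the policy parameters.

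\textbf{Step 2: causality and reduction to $Q$.} In the score sum, I would replace $C_H$ by the cost-to-go $\sum_{k\ge t}\gamma^k c_k$. The terms with $k<t$ drop out because they are measurable with respect to the history $\mathcal{F}_{t-1}$ of $(\xi_{0:t-1},x_{0:t-1})$. Moreover,
\[
\mathbb{E}_\theta\bigl[\nabla\log\pi^{\mathcal{X}}_\phi(x_t\mid s_t)\mid \mathcal{F}_{t-1}\bigr]=\nabla\sum_{j}\pi^{\mathcal{X}}_\phi(j\mid s_t)=0,
\]
where the derivative is taken with $s_t$ held at its realized value for $\phi$. For $\kappa$ the argument is the same: given $\mathcal{F}_{t-1}$, the gradient of the normalized distribution vanishes even though the state depends on $\kappa$, since we differentiate a realized function of $\kappa$ that sums to one identically. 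Next I would condition on $(\mathcal{F}_{t-1},x_t)$. By the Markov property and the independence of future exogenous noise from past actions, the conditional expectation of the discounted cost-to-go is $\gamma^t Q^H_{\pi_\theta}(s_t,a_t)$, where $Q^H$ is the horizon-truncated $Q$-function. This yields the finite-$H$ versions of \eqref{eq:grad-phi} and \eqref{eq:grad-kappa}.

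\textbf{Step 3: pass $H\to\infty$.} This is the main obstacle. With bounded costs, $J_H\to J$ uniformly in $\theta$, and the score terms converge because $|Q^H|\le c_{\max}/(1-\gamma)$ and the log-gradients are bounded. The difficulty is the PW term: $\nabla_\kappa s_t(\kappa)$ may grow geometrically in $t$ through Jacobians of $f$. I would use the regularity in Assumption~\ref{assumption:smoothness-for-continuous}, presumably a bound on $\|\nabla_\kappa s_t\|$ or on state Jacobians whose growth is dominated by $\gamma^{-t}$, to get $\sum_t\gamma^t\sup_\kappa\|\nabla_\kappa c_t\|<\infty$. With that summability, $\nabla J_H$ converges uniformly on a neighborhood of $\theta$. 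The standard theorem on uniform convergence of derivatives then gives that $J$ is differentiable with $\nabla J=\lim_H\nabla J_H$. The limit equals the stated formulas, with $Q^H\to Q_{\pi_\theta}$ handled by dominated convergence. If the assumption provides such control only in expectation, I would run the same argument with an expectation-level domination bound.
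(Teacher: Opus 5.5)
Your proposal is correct and is essentially the paper's argument. You condition on $(\xi,x_{0:\infty})$, use that the law of $\xi$ is $\theta$-free, split the derivative by the product rule and log-derivative trick into the PW part and the score part, and collapse the cost-to-go into $Q_{\pi_\theta}$ by the tower property. The paper differs only in differentiating each $\mathbb{E}_\theta[c_t]$ over the marginal of $x_{0:t}$, so only scores with $v\le t$ appear and your causality step is not needed; it also cites the policy gradient theorem for $\phi$ instead of deriving it.

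The summability you flag as the main obstacle in Step~3 is supplied by Assumption~\ref{ass:policy-regularity}(3), not Assumption~\ref{assumption:smoothness-for-continuous} (which gives only continuity and a.e.\ differentiability). Assumption~\ref{ass:policy-regularity}(3) bounds the total derivative $\nabla_\kappa c\bigl(s_t(\kappa),(x_t,b_\kappa(s_t(\kappa),x_t)),\xi_t\bigr)$ uniformly in $t$, $\xi$ and $x$, so your truncation argument closes at once. Note that at time $t$ the relevant function is the remaining-horizon $Q^{H-t}$ rather than $Q^H$; with it, the score part converges uniformly at rate $O(H\gamma^H)$. Your argument thereby makes rigorous the interchange of $\nabla_\kappa$ with $\sum_t$, which the paper performs without comment.
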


The gradient with respect to $\phi$ is the standard SF estimator. The  \emph{pathwise term} captures the direct effect of $\kappa$ on costs and the state trajectory, holding discrete actions fixed. The \emph{cross term} captures how  changes in $\kappa$ alter $s_t(\kappa)$, thereby shifting the probabilities the discrete policy assigns to each action. Section~\ref{sec:cross-term-vanishes} studies the effect 
of dropping this term.

    \begin{lstlisting}[style=hpo,caption={Hybrid policy optimization gradient estimation.},label={alg:hpo-loss}]
        def HPO_loss(policy, value_fn, xi, s0, gamma, drop_cross=False):
            s, loss = s0, 0
        
            for t in range(len(xi)):
                # If drop_cross=True, detach s only inside the discrete score:
                # this removes the kappa cross term but keeps the phi gradient.
                s_score = stop_grad(s) if drop_cross else s
        
                logits = policy.discrete(s_score)
                x = Categorical(logits).sample()      # discrete action, treated as fixed
                logp = log_prob(logits, x)            # autograd flows through logits, not x
        
                b = policy.continuous(s)[x]           # autograd flows through selected candidate
                cost = c(s, x, b, xi[t])              # smooth for fixed x
                s_next = f(s, x, b, xi[t])            # smooth for fixed x
        
                # Illustrating with a TD-style advantage; this weight is not differentiated.
                advantage = stop_grad(cost + gamma * value_fn(s_next) - value_fn(s))
                loss += gamma**t * cost               # pathwise term for kappa
                loss += gamma**t * advantage * logp   # phi score; kappa cross unless detached
    
                s = s_next
            return loss
        
        loss = HPO_loss(policy, value_fn, xi, s0, gamma, drop_cross=False)
        loss.backward()  # phi: score term; kappa: pathwise + cross term
        # use drop_cross=True to remove only the kappa cross term
        \end{lstlisting}

        In the pseudocode, \texttt{stop\_grad} denotes the standard \texttt{detach} operation 
in PyTorch. The sampled mode \texttt{x} is always treated as a fixed integer, so no 
derivative is taken through the sampling operation or through the index used to select 
\texttt{b = policy.continuous(s)[x]}. The flag \texttt{drop\_cross} detaches the state 
only inside the discrete log-probability, removing the $\kappa$-gradient through 
$\log\pi^{\mathcal{X}}_\phi(x_t\mid s_t(\kappa))$ while leaving the PW 
gradient for $\kappa$ and the SF gradient for $\phi$ unchanged. We use \textbf{HPOFull}  and \textbf{HPONoCross} to denote HPO  with and without the cross term, respectively.

\section{Numerical experiments}
\label{sec:numerical-exps}

{\bf Problem settings.} We empirically evaluate HPO on the two problem classes introduced in 
Section~\ref{sec:hybrid-mdps}, demonstrating performance gains over PPO as the continuous 
action dimension increases. We consider the JRP (Example~\ref{ex:jrp}) 
\citep{goyal1989joint}, a classical problem in operations management where fixed costs 
create interdependencies across products that make scaling fundamentally difficult 
\citep{ata2025computational}, and where PPO has been shown to underperform simple 
heuristics \citep{vanvuchelen2023use}. We also consider the S-LQR 
(Example~\ref{ex:slqr}) \citep{zhang2009value}, which extends the classical LQR 
\citep{kalman1960contributions} --- a canonical control benchmark --- to settings with discrete operating modes; finding the 
optimal mode sequence is NP-hard \citep{zhang2011infinite}.
 
{\bf Policy optimization.} Throughout our experiments, we consider \textbf{HPOFull},
and refer to it simply as HPO. As a baseline, we use PPO, following the CleanRL reference implementation~\citep{huang2022cleanrl}.
For consistency, our implementation of HPO also uses standard PPO to optimize the discrete policy
component. When computing $\nabla_\kappa J$, we substitute the GAE-based advantage estimates  produced by PPO into the SF loss term of Listing~\ref{alg:hpo-loss}. We train across
batches of scenarios, updating the continuous parameters with one full-batch policy update using the
mixed estimator, while the discrete parameters are updated using $5$ PPO epochs with $4$
minibatches per epoch. Full hyperparameters and implementation details are provided in
Appendix~\ref{appendix:implementation-details}. Code is available at
\url{https://github.com/MatiasAlvo/hybrid-rl}.
 
\subsection{Finding 1: HPO converges faster as the continuous action dimension grows}
\label{sec:performance-results}
 
We compare HPO and PPO on the JRP (Example~\ref{ex:jrp}) and the
 S-LQR (Example~\ref{ex:slqr}), varying the continuous action dimension $p$ across
10 independent trials per configuration. Our primary metric is the median number of policy
updates required to reach within $10\%$ of the best validation loss achieved by any algorithm
at each value of $p$; test loss across configurations is reported in Appendix~\ref{appendix:results}.

{\bf JRP.}
Figure~\ref{fig:performance-jrp} reports the median number of policy updates to reach a $10\%$
target gap, using batch size $B=16$ and trajectories of 100 periods, for $p$ ranging from $1$
to $60$. The two algorithms perform comparably at small dimensions, but their scaling behavior
diverges as $p$ grows: PPO's required policy updates rise steeply with the continuous action dimension,
while HPO's remains substantially flatter, though HPO with $B=16$ also fails to converge
within the training budget for $p \geq 50$. Figure~\ref{fig:scaling-jrp-targets} in Appendix \ref{appendix:jrp} extends this
comparison to batch sizes $B \in \{16, 64\}$ and a range of target gaps, revealing that for
$p \geq 40$, HPO with $B=16$ converges faster than PPO with $B=64$ at easier target gaps. We hypothesize this
stems from the superior quality of the mixed gradient estimator over the SF-type gradients
used by PPO, which we analyze directly in Section~\ref{sec:gradient-quality-exps}. Tables~\ref{tab:jrp-results-16}
and~\ref{tab:jrp-results-64} further corroborate these findings via test loss: HPO with $B=16$
achieves lower median test loss than PPO with $B=64$ for $p \geq 50$. Finally,
Figure~\ref{fig:scaling-jrp-one-off} in Appendix~\ref{appendix:ppo-hyperparams} shows that
hyperparameter tuning does not close the gap, suggesting the degradation is a fundamental
limitation of SF-based optimization.

{\bf S-LQR.}
Figure~\ref{fig:performance-lqr} reports the same metric for $p$ ranging from $1$ to $48$,
mode counts $J \in \{2, 8\}$, and batch size $128$. The performance gap is drastic. For 
both values of $J$, PPO degrades sharply once $p$ exceeds roughly $10$, with the median 
failing to converge for $p > 24$. Figure~\ref{fig:scaling-lqr-targets} in 
Appendix~\ref{appendix:lqr-results} further shows that PPO with $J = 2$ fails to reach 
even a $30\%$ target gap for $p>20$. HPO, by contrast, reaches a $5\%$ target across 
all configurations including $p = 48$, within 500 policy updates.
 
In Appendix~\ref{appendix:lqr-performance-validation}, we validate HPO against a 
Riccati-based baseline. For $J=1$ (standard LQR), where the Riccati solution is  optimal, HPO recovers performance within $0.2\%$ at $p=48$ 
(Figure~\ref{fig:hpo-riccati-j1}). For $J=4$, HPO improves upon the best single-mode 
Riccati baseline by up to $13\%$ (Figure~\ref{fig:hpo-riccati-j4}), demonstrating 
that it learns to exploit mode switching. Full results are in 
Appendix~\ref{appendix:lqr-results}.
 
\subsection{Finding 2: HPO yields superior gradient estimates as the action dimension grows}
\label{sec:gradient-quality-exps}
 
We now investigate \emph{why} HPO converges faster than PPO. The key claim is that 
backpropagating through the simulator sharpens credit assignment, increasingly so as 
the continuous action dimension grows. We test this by comparing the mixed and SF 
estimators of $\nabla_\kappa J(\theta)$ on the JRP.
 
{\bf Experimental setup.}
From Eq.~\eqref{eq:grad-kappa}, the mixed gradient combines a PW term with a cross term involving gradients of the discrete policy's log-likelihoods. As $\pi^{\mathcal{X}}_\phi$ becomes more peaked --- as it does when the discrete policy approaches a best response --- these log-likelihood gradients grow large near the boundaries between discrete actions, potentially inflating the cross term's variance. The signal quality of the mixed estimator may therefore vary substantially with the quality of the discrete component. We evaluate this along a natural training path in which the continuous policy is held fixed while the discrete policy improves, isolating the effect of discrete-policy quality from that of the continuous-action dimension $p$. Concretely, we train a policy to near-optimality, then make the continuous mapping stochastic by adding Gaussian noise so that the SF estimator for $\kappa$ is well-defined, and apply a fixed offset to the continuous outputs so that $\nabla_\kappa J$ is nonzero. We then re-initialize the discrete network and train it to convergence, saving checkpoints bucketed by performance gap, and report results stratified by this gap. Full details are provided in Appendix~\ref{appendix:gradient-alignment}.
 
{\bf Estimators.}
To apply the \emph{mixed} estimator, we use the reparameterization trick: under
$b_t = \mu_\kappa(s_t) + \sigma \varepsilon_t$ with $\varepsilon_t \sim \mathcal{N}(0, I)$,
the policy noise is treated as additional exogenous randomness absorbed into the scenario.
Conditioned on $(x_{0:T-1}, \xi_{0:T-1}, \varepsilon_{0:T-1})$, the trajectory is deterministic and
differentiable in $\kappa$, so the mixed-gradient derivation applies verbatim. For SF,
we consider a learned critic baseline to reduce variance; we omit PPO-style modifications, which bias the estimate.
 
{\bf Findings.}
Figure~\ref{fig:ppo_vs_hybrid} reports two metrics, each estimated from pairs of independent batch-level gradient estimates using batches of $512$ trajectories. \textbf{Alignment} is the expected cosine similarity between two independent estimates, measuring directional consistency. \textbf{RMSE} is the total standard deviation of the estimator across all coordinates, which can be estimated from pairs of independent estimates without access to the true gradient (see Appendix~\ref{appendix:gradient-alignment} for details).

SF exhibits alignment above $0.05$ only for small problem sizes ($p=1,3$), and quickly collapses to near zero as the dimension grows. This behavior is consistent across performance gaps and indicates that SF provides little directional signal in high-dimensional settings. Similarly, its RMSE increases rapidly with $p$, which contributes to the low alignment values. The deterioration in both metrics closely mirrors PPO's performance degradation observed in Figure~\ref{fig:performance-results}, suggesting that the gradients used by PPO suffer from poor signal quality in high-dimensional settings.

In contrast, for performance gaps of $30$--$35\%$, the mixed estimator maintains alignment 
above $0.8$ across all values of $p$, along with consistently low RMSE, indicating a stable 
gradient direction even in high-dimensional regimes. Both metrics degrade for smaller gaps 
($5$--$10\%$), but remain significantly better than those of SF. We hypothesize that this 
degradation is driven by increased variance in the cross term as the discrete policy approaches 
a best response, which we analyze in Appendix~\ref{appendix:gradient-quality-results}.
 
\begin{figure}
\centering
\begin{subfigure}{0.46\textwidth}
  \centering
  \includegraphics[width=\linewidth]{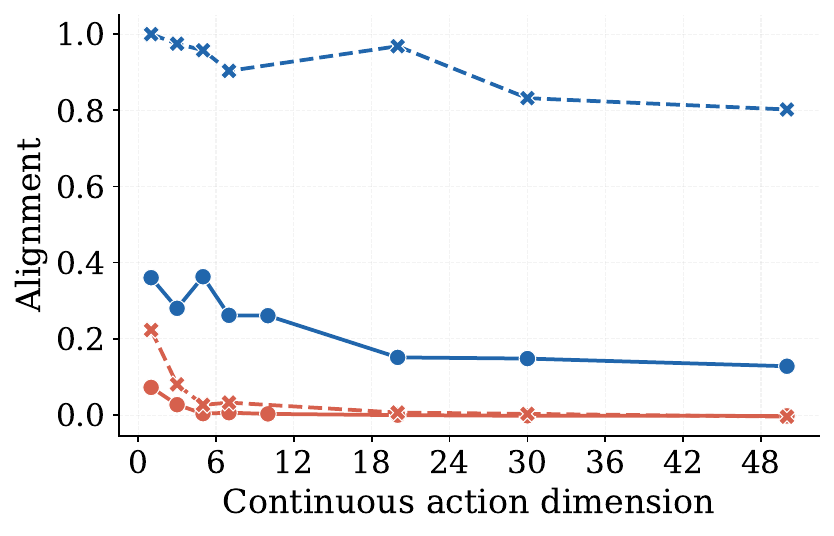}
  \caption{Alignment \textbf{(higher is better)}.}
  \label{fig:alignment-ppo-hybrid}
\end{subfigure}
\hfill
\begin{subfigure}{0.46\textwidth}
  \centering
  \includegraphics[width=\linewidth]{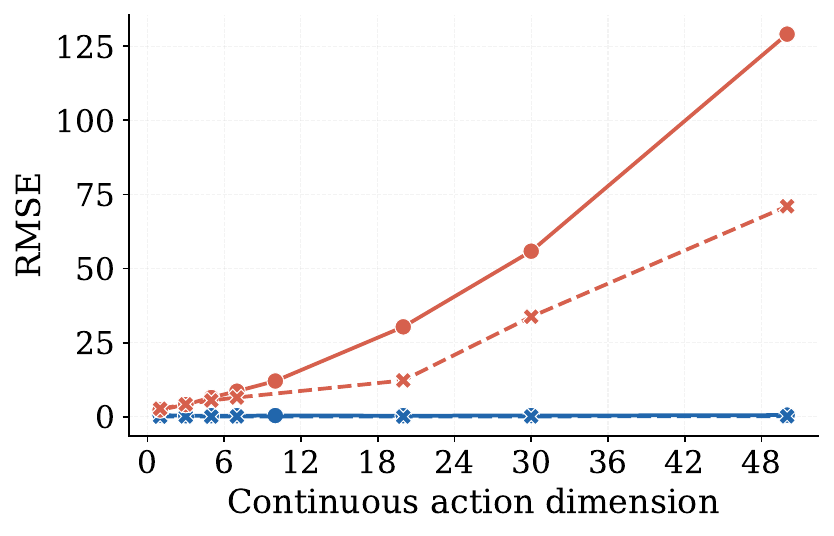}
  \caption{RMSE \textbf{(lower is better)}.}
  \label{fig:rmse-ppo-hybrid}
\end{subfigure}
\caption{
Alignment and RMSE of batch-level gradient estimates for the mixed and SF estimators
for varying continuous action dimensions. Blue and red lines correspond to the mixed
and SF estimators. Solid and dashed lines denote performance gap ranges of $5$--$10\%$
and $30$--$35\%$, respectively.
}
\label{fig:ppo_vs_hybrid}
\end{figure}

\section{The cross term vanishes for near-best response discrete policy components}
\label{sec:cross-term-vanishes}

In many hybrid MDPs, the two components of the action play distinct roles: the discrete component selects a regime — a controller, an operating mode, or a strategic choice — while the continuous component optimizes within it. In practice, distinct teams are often responsible for building a controller for each component. This raises a natural question: \emph{can one have a
decomposed training procedure in which each component is optimized separately using only behavioral access to the other}, that is, using observed states and actions without privileged access to the other component’s policy?

The obstacle to such decomposition lies in the cross term of the mixed gradient. While the discrete component can be optimized using standard SF estimators based only on observed actions and costs, the cross term involves $\nabla_\kappa \log \pi^{\mathcal X}_\phi$, capturing how perturbations of the continuous action influence discrete decisions along the generated trajectories. Computing this term requires access to the internal structure of the discrete policy, not just its realized actions. 
Fortunately, as we show below, the 
cross term becomes small near a discrete best response, enabling approximate decentralization.

We now provide a second motivation for dropping the cross term. 
The cross term's estimation variance grows as the discrete policy approaches a best response (Section~\ref{sec:gradient-quality-exps}, Appendix~\ref{appendix:gradient-quality-results}). Combined with the vanishing-expectation result we prove below, one is estimating a vanishing signal with growing noise, and can improve gradient quality near optimality by dropping the cross term.

Before we state our result, we give some necessary definitions and assumptions.
For fixed continuous policy parameters $\kappa$, define the (discrete policy) \textit{best-response} $\pi^{\mathcal{X}}(\kappa) \in \arg\min_{\pi^{\mathcal{X}}} J(\pi^{\mathcal{X}}, \kappa)$, and the corresponding policy $\pi^*_\kappa = (\pi^{\mathcal{X}}(\kappa), \pi^{\mathcal{B}}_\kappa)$. We say that $\pi^{\mathcal{X}}_\phi$ is an \emph{$\epsilon$-best response} to $\kappa$ if $\mathbb{E}_{s_0 \sim \rho}[V_{\pi_{(\phi, \kappa)}}(s_0) - V_{\pi^*_\kappa}(s_0)] \leq \epsilon$ for a given initial state distribution $\rho$.

\begin{assumption}[Regularity conditions]
\label{ass:regularity-cross-term}
There exist constants $C, L_\pi, L_s < \infty$ such that, for all $\kappa \in \mathcal{K}$:
\begin{enumerate}[leftmargin=*, noitemsep]
    \item \textbf{Statewise best response.} There exists best-response $\pi^{\mathcal{X}}(\kappa)$ such that for all $s \in \mathcal{S}$, $V_{\pi^*_\kappa}(s) = \min_{\pi^{\mathcal{X}}} V_{(\pi^{\mathcal{X}}, \pi^{\mathcal{B}}_\kappa)}(s)$.
    \item \textbf{Uniform near-best response.} If $\pi^{\mathcal{X}}_\phi$ is 
    any $\epsilon$-best response to $\kappa$, then for any initial state distribution $\rho'$, $\mathbb{E}_{s_0 \sim \rho'}[V_{\pi_{\theta}}(s_0) - V_{\pi^*_\kappa}(s_0)] \leq C \epsilon$.
    \item \textbf{Policy smoothness and state sensitivity.} 
    $\|\nabla_\kappa s_t(\kappa)\| \leq L_s$ for all $t\geq 1$, $\xi_{0:t-1}$ and $x_{0:t-1} \in \mathcal X^{t}$, and $\sup_{\phi, s, x} \|\nabla_s \log \pi^{\mathcal{X}}_\phi(x \mid s)\| \leq L_\pi$.
\end{enumerate}
\end{assumption}

We now state our result.
\begin{theorem}[Cross term vanishes for near-best responses]
\label{thm:cross-term-vanishes-for-near-opt}
Under Assumption~\ref{ass:regularity-cross-term}, for any $\kappa$ and any $\epsilon$-best response $\pi^{\mathcal{X}}_\phi$,
\[
\left\|
\mathbb{E}_{\theta}\!\left[
\sum_{t\geq 0}
\gamma^{t}\,
\nabla_{\kappa}\log \pi^{\mathcal{X}}_{\phi}\!\left(x_t \mid s_t(\kappa)\right)\,
Q_{\pi_{\theta}}(s_t(\kappa),a_t)
\right]
\right\|
\;\leq \epsilon  \times  L_{\pi} L_s  \left(1 + \frac{C \gamma}{1 - \gamma} \right).
\]
\end{theorem}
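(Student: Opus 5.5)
The plan is to rewrite the cross term through the chain rule, subtract a state-dependent baseline equal to the best-response value $V_{\pi^*_\kappa}$, and then bound what remains by expected suboptimality gaps, which Assumption~\ref{ass:regularity-cross-term} controls.

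\textbf{Step 1: chain rule.} Conditional on $(\xi,x_{0:\infty})$ the state $s_t(\kappa)$ is a smooth function of $\kappa$. Hence
\[
\nabla_\kappa \log\pi^{\mathcal X}_\phi(x_t\mid s_t(\kappa)) = \bigl(\nabla_\kappa s_t(\kappa)\bigr)^{\top}\nabla_s\log\pi^{\mathcal X}_\phi(x_t\mid s_t).
\]
Two facts about this expression matter below.
\begin{itemize}
\item The Jacobian $\nabla_\kappa s_t(\kappa)$ depends only on $(\xi_{0:t-1},x_{0:t-1})$, that is, on the history $\mathcal H_t$ before $x_t$ is drawn.
\item The action $x_t$ is drawn from $\pi^{\mathcal X}_\phi(\cdot\mid s_t)$ given $\mathcal H_t$.
\end{itemize}
For $t=0$ the initial state $s_0$ does not depend on $\kappa$, so that term vanishes. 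If one prefers to allow a $\kappa$-dependent $s_0$, the same argument below bounds it by $\epsilon L_\pi L_s$, which accounts for the ``$1+$'' in the statement.

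\textbf{Step 2: baseline subtraction.} Since $\sum_{x}\nabla_s\pi^{\mathcal X}_\phi(x\mid s)=0$ and the Jacobian is $\mathcal H_t$-measurable, conditioning on $\mathcal H_t$ gives
\[
\mathbb E_\theta\!\left[(\nabla_\kappa s_t)^{\top}\nabla_s\log\pi^{\mathcal X}_\phi(x_t\mid s_t)\,V_{\pi^*_\kappa}(s_t)\right]=0 .
\]
We may therefore replace $Q_{\pi_\theta}(s_t,a_t)$ by the gap $\Delta_t := Q_{\pi_\theta}(s_t,a_t)-V_{\pi^*_\kappa}(s_t)$ without changing the expectation.

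\textbf{Step 3: the gap is nonnegative.} This is the main point of the argument. Using the statewise best response (Assumption~\ref{ass:regularity-cross-term}.1), we have $V_{\pi_\theta}(s')\ge V_{\pi^*_\kappa}(s')$ for every $s'$. Hence
\[
Q_{\pi_\theta}(s,(x,b)) = c + \gamma\,\mathbb E\,V_{\pi_\theta}(s') \;\ge\; c+\gamma\,\mathbb E\,V_{\pi^*_\kappa}(s') = Q_{\pi^*_\kappa}(s,(x,b)) \;\ge\; V_{\pi^*_\kappa}(s).
\]
The last inequality holds because, by statewise optimality, no first-step discrete choice can beat the best response; otherwise deviating once would improve $V_{\pi^*_\kappa}(s)$.

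Nonnegativity lets us move the norm inside the expectation without losing the $\epsilon$-scaling. By Assumption~\ref{ass:regularity-cross-term}.3, for $t\ge1$,
\[
\bigl\|(\nabla_\kappa s_t)^{\top}\nabla_s\log\pi^{\mathcal X}_\phi\,\Delta_t\bigr\|\le L_sL_\pi\,\Delta_t .
\]
Moreover, $\mathbb E[\Delta_t\mid s_t]=V_{\pi_\theta}(s_t)-V_{\pi^*_\kappa}(s_t)$.

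\textbf{Step 4: summing over time.} Let $\rho'_t$ denote the law of $s_t$ under $\pi_\theta$. By the uniform near-best-response condition (Assumption~\ref{ass:regularity-cross-term}.2) applied with $\rho'=\rho'_t$,
\[
\mathbb E\bigl[V_{\pi_\theta}(s_t)-V_{\pi^*_\kappa}(s_t)\bigr]\le C\epsilon \quad\text{for every } t.
\]
Summing over $t\ge1$ gives at most $L_\pi L_s C\epsilon\sum_{t\ge1}\gamma^t = L_\pi L_s\,\epsilon\,\frac{C\gamma}{1-\gamma}$. Adding the $t=0$ contribution, which is zero or at most $L_\pi L_s\epsilon$ because $\rho'_0=\rho$, yields the claimed bound. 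Exchanging the infinite sum and the expectation is justified by bounded costs and $\gamma<1$, as in Theorem~\ref{thm:mixed-gradient}.

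The step requiring most care is Step 3, the pointwise sign of $\Delta_t$. Without it, the norm bound would involve $\mathbb E|Q-V^*|$ rather than the signed value gap. This is exactly where statewise (rather than merely $\rho$-averaged) optimality is needed.
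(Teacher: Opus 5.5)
Your proof is correct, but it assembles the bound differently from the paper. The two arguments share a skeleton: the chain rule giving $\|\nabla_\kappa\log\pi^{\mathcal X}_\phi\|\le L_\pi L_s$, subtraction of the baseline $V_{\pi^*_\kappa}(s_t)$, and nonnegativity from statewise optimality.

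The paper then splits your $\Delta_t$ as $\tilde A_{\pi^*_\kappa}(s_t,x_t)+\bigl(\tilde Q_{\pi_\theta}-\tilde Q_{\pi^*_\kappa}\bigr)(s_t,x_t)$ and bounds the two pieces separately:
\begin{itemize}
\item The discounted sum of the first piece is at most $\epsilon$ by the performance difference lemma. This uses only the $\rho$-weighted $\epsilon$-best-response hypothesis, with no $C$.
\item The second piece differs only in the continuation value. It is bounded pointwise by $\gamma C\epsilon$ by applying Assumption~\ref{ass:regularity-cross-term}(2) to the conditional law of $s_{t+1}$ given $(s_t,x_t)$.
\end{itemize}

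You instead keep $\Delta_t$ whole. You use $\mathbb{E}[\Delta_t\mid\mathcal H_t]=V_{\pi_\theta}(s_t)-V_{\pi^*_\kappa}(s_t)$ and apply Assumption~\ref{ass:regularity-cross-term}(2) directly to the time-$t$ state marginal. This has three consequences:
\begin{itemize}
\item It dispenses with the performance difference lemma.
\item It invokes the uniformity assumption only on on-policy marginals, rather than on one-step transition laws from every visited pair.
\item Since $\nabla_\kappa s_0=0$, it yields the slightly sharper bound $L_\pi L_s\,\epsilon\,C\gamma/(1-\gamma)$. As you note, bounding rather than zeroing the $t=0$ term recovers exactly the stated constant.
\end{itemize}

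What the paper's split buys is interpretability, in line with its envelope-theorem reading. It separates the coupling caused by the discrete policy's own one-step suboptimality, which is controlled by $\epsilon$ alone, from the part caused by mismatched continuation values. That second part is the only place the uniformity constant $C$ enters.
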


The proof in Appendix~\ref{sec:proof-of-vanishing-cross-term} shows that the magnitude of the cross term is controlled by the suboptimality of the discrete policy relative to a best response, and by the sensitivity of the score term to the state and of the state to $\kappa$. 
These boundedness assumptions are reasonable under smooth stochastic policies (\eg with entropy regularization), which prevent sharp decision boundaries. Analyzing the behavior near deterministic limits is more subtle and we leave it to future work.

Our result may be viewed as an envelope-style property for hybrid control problems in dynamic settings \citep{milgrom2002envelope,danskin1966theory}, suggesting that the coupling between discrete and continuous decisions weakens as the discrete policy approaches a best response. 

To illustrate Theorem~\ref{thm:cross-term-vanishes-for-near-opt} empirically, Figure~\ref{fig:cross-term-signal} reports the gradient norms of the PW and cross terms in the setting of Section~\ref{sec:gradient-quality-exps} with $p=20$. As the discrete policy approaches a best response, the cross term norm decreases to zero even though the overall policy is not yet optimal, as indicated by the nonzero PW norm. This suggests that the continuous component can be updated using only the PW gradient, incurring only a small bias, and supports the view that near a discrete best response the problem admits an approximately decentralized optimization.

To assess the impact of removing the cross term, Figures~\ref{fig:cross-term-jrp}
and~\ref{fig:cross-term-lqr} report the relative change in performance when using
\textbf{HPONoCross} in place of \textbf{HPOFull} on the JRP and S-LQR experiments of
Section~\ref{sec:performance-results}, respectively. In the JRP setting, removing the
cross term typically yields a slight performance degradation of at most $2\%$; the one
exception is the combination $p=60$, $B=64$, where the degradation reaches $7\%$.
These drops suggest that the discrete policy is not yet near a best response in this
setting, so the cross term carries meaningful gradient signal. In the S-LQR problem, by contrast, removing the cross
term is consistently beneficial, with gains that grow steadily with $p$. We analyze
these effects in Appendix~\ref{appendix:gradient-quality-results}, where we show that
dropping the cross term may introduce bias but tends to yield more stable updates in
practice.

Combined with the empirical observation in Appendix~\ref{appendix:gradient-quality-results} that the cross term's \emph{variance} grows as the discrete policy approaches a best response (as previewed at the start of this section), this implies that near optimality we are estimating a vanishing quantity with high-variance noise --- providing a clear rationale for \textbf{HPONoCross} to drop the cross term even in fully centralized settings.

\begin{figure}
\centering
\begin{subfigure}{0.32\textwidth}
    \centering
    \includegraphics[width=\linewidth]{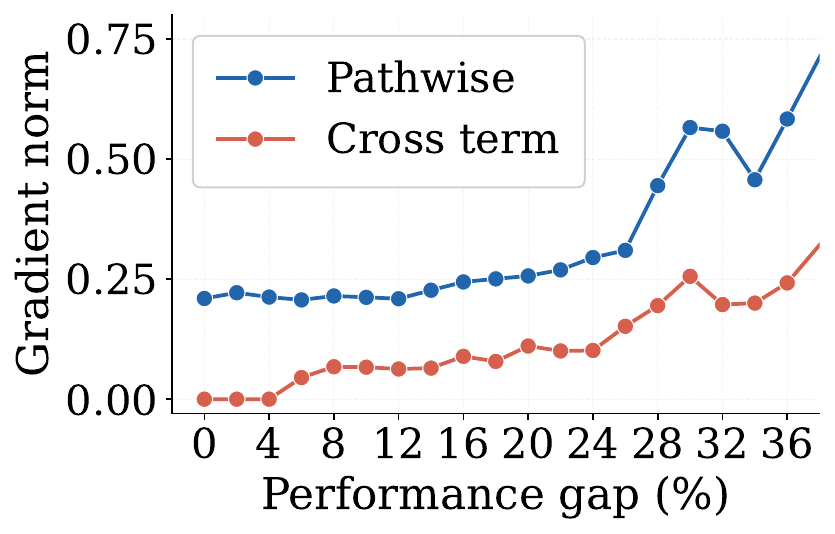}
    \caption{Norm vs. performance gap.}
    \label{fig:cross-term-signal}
\end{subfigure}
\hfill
\begin{subfigure}{0.32\textwidth}
    \centering
    \includegraphics[width=\linewidth]{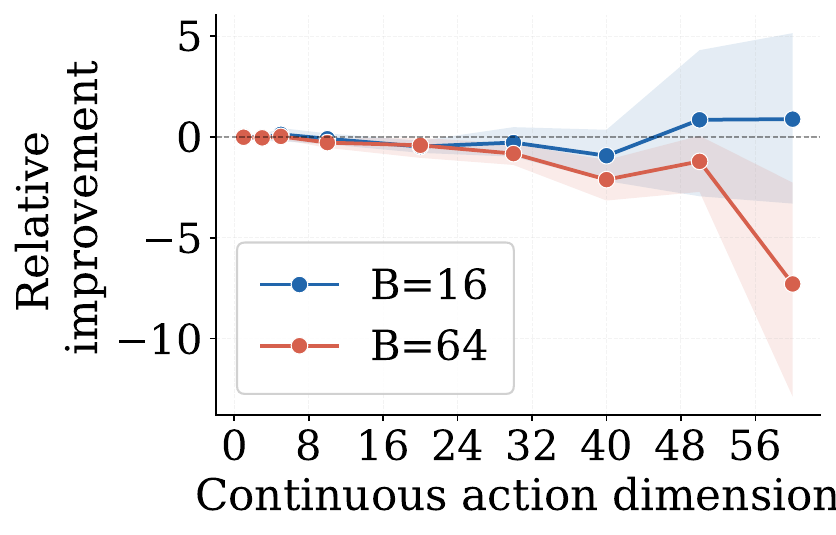}
    \caption{Relative improvement (JRP).}
    \label{fig:cross-term-jrp}
\end{subfigure}
\hfill
\begin{subfigure}{0.32\textwidth}
    \centering
    \includegraphics[width=\linewidth]{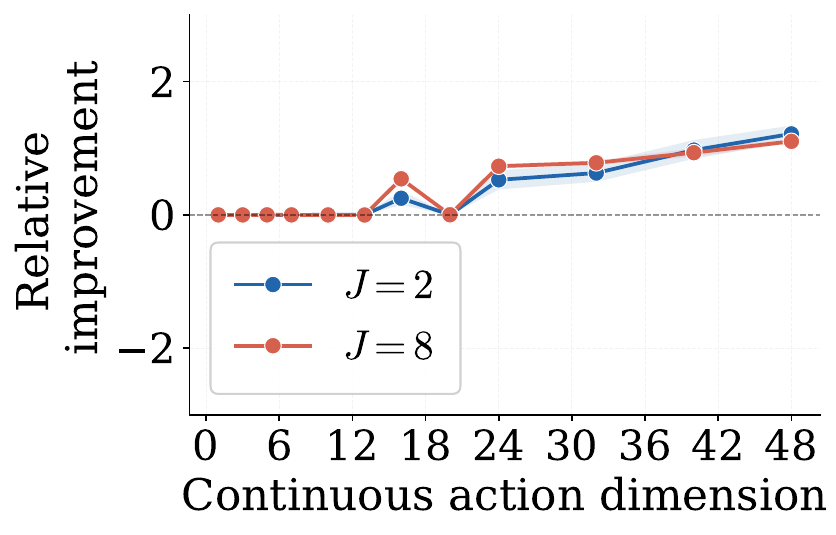}
    \caption{Relative improvement (S-LQR).}
    \label{fig:cross-term-lqr}
\end{subfigure}
\caption{(a) Gradient norms of the PW and cross terms for varying performance gaps in the JRP setting. 
(b) Cross term ablation in the JRP across batch sizes $B$. 
(c) Cross term ablation in the S-LQR across mode counts $J$. 
Improvement is measured as the percentage reduction for \textbf{HPONoCross} relative to \textbf{HPOFull}; positive values indicate better performance with \textbf{HPONoCross}.}
\label{fig:cross-term-ablation}
\end{figure}

\section{Conclusion}
Model-free policy-gradient methods such as REINFORCE and PPO struggle in large action dimensions, driven by the poor quality of the SF-type gradients they rely on. In this work, we propose HPO, which leverages mixed gradients for hybrid discrete-continuous action MDPs. Empirically, HPO overcomes the shortcomings of SF-based gradients in high-dimensional continuous action spaces, representing a significant step towards the applicability of policy-gradient methods in problems of realistic size and complexity. We additionally establish that the cross term of the mixed gradient vanishes as the discrete policy approaches a best response,  
a property which may have implications both for the design of semi-decentralized RL algorithms and for improving gradient quality.

\newpage

\bibliographystyle{abbrvnat}
\bibliography{references}

@book{bertsekas2012dynamic,
  title={Dynamic programming and optimal control: Volume I},
  author={Bertsekas, Dimitri},
  volume={1},
  year={2012},
  publisher={Athena scientific}
}

@article{scarf1960optimality,
  title={The optimality of (S, s) policies in the dynamic inventory problem},
  author={Scarf, Herbert and Arrow, K and Karlin, S and Suppes, P},
  journal={Optimal pricing, inflation, and the cost of price adjustment},
  pages={49--56},
  year={1960},
  publisher={MIT Press Cambridge}
}

@article{schulman2017proximal,
  title={Proximal policy optimization algorithms},
  author={Schulman, John and Wolski, Filip and Dhariwal, Prafulla and Radford, Alec and Klimov, Oleg},
  journal={arXiv preprint arXiv:1707.06347},
  year={2017}
}

@article{levine2016end,
  title={End-to-end training of deep visuomotor policies},
  author={Levine, Sergey and Finn, Chelsea and Darrell, Trevor and Abbeel, Pieter},
  journal={The Journal of Machine Learning Research},
  volume={17},
  number={1},
  pages={1334--1373},
  year={2016},
  publisher={JMLR. org}
}

@article{madeka2022deep,
  title={Deep Inventory Management},
  author={Madeka, Dhruv and Torkkola, Kari and Eisenach, Carson and Foster, Dean and Luo, Anna},
  journal={arXiv preprint arXiv:2210.03137},
  year={2022}
}

@article{williams1992simple,
  title={Simple statistical gradient-following algorithms for connectionist reinforcement learning},
  author={Williams, Ronald J},
  journal={Reinforcement learning},
  pages={5--32},
  year={1992},
  publisher={Springer}
}

@article{sinclair2022hindsight,
  title={Hindsight Learning for MDPs with Exogenous Inputs},
  author={Sinclair, Sean R and Frujeri, Felipe and Cheng, Ching-An and Swaminathan, Adith},
  journal={arXiv preprint arXiv:2207.06272},
  year={2022}
}

@article{hu2019difftaichi,
  title={Difftaichi: Differentiable programming for physical simulation},
  author={Hu, Yuanming and Anderson, Luke and Li, Tzu-Mao and Sun, Qi and Carr, Nathan and Ragan-Kelley, Jonathan and Durand, Fr{\'e}do},
  journal={arXiv preprint arXiv:1910.00935},
  year={2019}
}

@article{mohamed2020monte,
  title={Monte carlo gradient estimation in machine learning},
  author={Mohamed, Shakir and Rosca, Mihaela and Figurnov, Michael and Mnih, Andriy},
  journal={The Journal of Machine Learning Research},
  volume={21},
  number={1},
  pages={5183--5244},
  year={2020},
  publisher={JMLRORG}
}

@article{vanvuchelen2023use,
  title={The Use of Continuous Action Representations to Scale Deep Reinforcement Learning for Inventory Control},
  author={Vanvuchelen, Nathalie and De Moor, Bram and Boute, Robert},
  year={2023}
}

@article{sutton1999policy,
  title={Policy gradient methods for reinforcement learning with function approximation},
  author={Sutton, Richard S and McAllester, David and Singh, Satinder and Mansour, Yishay},
  journal={Advances in neural information processing systems},
  volume={12},
  year={1999}
}

@article{schulman2015gradient,
  title={Gradient estimation using stochastic computation graphs},
  author={Schulman, John and Heess, Nicolas and Weber, Theophane and Abbeel, Pieter},
  journal={Advances in neural information processing systems},
  volume={28},
  year={2015}
}

@misc{kingma2013auto,
  title={Auto-encoding variational bayes},
  author={Kingma, Diederik P and Welling, Max and others},
  year={2013},
  publisher={Banff, Canada}
}

@inproceedings{rezende2014stochastic,
  title={Stochastic backpropagation and approximate inference in deep generative models},
  author={Rezende, Danilo Jimenez and Mohamed, Shakir and Wierstra, Daan},
  booktitle={International conference on machine learning},
  pages={1278--1286},
  year={2014},
  organization={PMLR}
}

@article{bengio2013estimating,
  title={Estimating or propagating gradients through stochastic neurons for conditional computation},
  author={Bengio, Yoshua and L{\'e}onard, Nicholas and Courville, Aaron},
  journal={arXiv preprint arXiv:1308.3432},
  year={2013}
}

@article{xie2024vc,
  title={Vc theory for inventory policies},
  author={Xie, Yaqi and Ma, Will and Xin, Linwei},
  journal={arXiv preprint arXiv:2404.11509},
  year={2024}
}

@article{che2024differentiable,
  title={Differentiable Discrete Event Simulation for Queuing Network Control},
  author={Che, Ethan and Dong, Jing and Namkoong, Hongseok},
  journal={arXiv preprint arXiv:2409.03740},
  year={2024}
}

@inproceedings{levy2018deterministic,
  title={Deterministic policy optimization by combining pathwise and score function estimators for discrete action spaces},
  author={Levy, Daniel and Ermon, Stefano},
  booktitle={Proceedings of the AAAI Conference on Artificial Intelligence},
  volume={32},
  number={1},
  year={2018}
}

@article{xiong2018parametrized,
  title={Parametrized deep q-networks learning: Reinforcement learning with discrete-continuous hybrid action space},
  author={Xiong, Jiechao and Wang, Qing and Yang, Zhuoran and Sun, Peng and Han, Lei and Zheng, Yang and Fu, Haobo and Zhang, Tong and Liu, Ji and Liu, Han},
  journal={arXiv preprint arXiv:1810.06394},
  year={2018}
}

@article{li2021hyar,
  title={Hyar: Addressing discrete-continuous action reinforcement learning via hybrid action representation},
  author={Li, Boyan and Tang, Hongyao and Zheng, Yan and Hao, Jianye and Li, Pengyi and Wang, Zhen and Meng, Zhaopeng and Wang, Li},
  journal={arXiv preprint arXiv:2109.05490},
  year={2021}
}

@article{huang2022mixed,
  title={Mixed deep reinforcement learning considering discrete-continuous hybrid action space for smart home energy management},
  author={Huang, Chao and Zhang, Hongcai and Wang, Long and Luo, Xiong and Song, Yonghua},
  journal={Journal of Modern Power Systems and Clean Energy},
  volume={10},
  number={3},
  pages={743--754},
  year={2022},
  publisher={SGEPRI}
}

@article{milgrom2002envelope,
  title={Envelope theorems for arbitrary choice sets},
  author={Milgrom, Paul and Segal, Ilya},
  journal={Econometrica},
  volume={70},
  number={2},
  pages={583--601},
  year={2002},
  publisher={Wiley Online Library}
}

@article{schulman2015high,
  title={High-dimensional continuous control using generalized advantage estimation},
  author={Schulman, John and Moritz, Philipp and Levine, Sergey and Jordan, Michael and Abbeel, Pieter},
  journal={arXiv preprint arXiv:1506.02438},
  year={2015}
}

@article{zhang2009value,
  title={On the value functions of the discrete-time switched LQR problem},
  author={Zhang, Wei and Hu, Jianghai and Abate, Alessandro},
  journal={IEEE Transactions on Automatic Control},
  volume={54},
  number={11},
  pages={2669--2674},
  year={2009},
  publisher={IEEE}
}

@article{zhang2011infinite,
  title={Infinite-horizon switched LQR problems in discrete time: A suboptimal algorithm with performance analysis},
  author={Zhang, Wei and Hu, Jianghai and Abate, Alessandro},
  journal={IEEE Transactions on Automatic Control},
  volume={57},
  number={7},
  pages={1815--1821},
  year={2011},
  publisher={IEEE}
}

@article{goyal1989joint,
  title={Joint replenishment inventory control: deterministic and stochastic models},
  author={Goyal, Suresh K and Satir, Ahmet T},
  journal={European journal of operational research},
  volume={38},
  number={1},
  pages={2--13},
  year={1989},
  publisher={Elsevier}
}

@article{ata2025computational,
  title={A Computational Method for Solving the Stochastic Joint Replenishment Problem in High Dimensions},
  author={Ata, Bar{\i}{\c{s}} and van Eekelen, Wouter and Zhong, Yuan},
  journal={arXiv preprint arXiv:2511.11830},
  year={2025}
}

@article{kalman1960contributions,
  title={Contributions to the theory of optimal control},
  author={Kalman, Rudolf Emil and others},
  journal={Bol. soc. mat. mexicana},
  volume={5},
  number={2},
  pages={102--119},
  year={1960}
}

@article{danskin1966theory,
  title={The theory of max-min, with applications},
  author={Danskin, John M},
  journal={SIAM Journal on Applied Mathematics},
  volume={14},
  number={4},
  pages={641--664},
  year={1966},
  publisher={SIAM}
}

@article{huang2022cleanrl,
  title={CleanRL: High-quality Single-file Implementations of Deep Reinforcement Learning Algorithms},
  author={Huang, Shengyi and Dossa, Rousslan Fernand Julien and Ye, Chang and Braga, Jeff and Chakraborty, Dipam and Mehta, Kinal and Araújo, João G.M.},
  journal={Journal of Machine Learning Research},
  year={2022},
  volume={23},
  number={274},
  pages={1--18},
}

@article{alvo2023deep,
  title={Deep Reinforcement Learning for Inventory Networks: Toward Reliable Policy Optimization},
  author={Alvo, Matias and Russo, Daniel and Kanoria, Yash and Lee, Minuk},
  journal={arXiv preprint arXiv:2306.11246},
  year={2023}
}

\newpage

\appendix

\paragraph{Organization of the appendix.}
We divide the appendix into four sections.
Appendix~\ref{appendix:problem-and-models} formalizes the problem classes studied in the paper and provides detailed model descriptions. Appendix~\ref{appendix:theory} contains the theoretical results and their proofs, along with gradient derivations for alternative policy classes. Appendix~\ref{appendix:policy-and-implementation} describes the policy parameterization and implementation details used in our experiments. Finally, Appendix~\ref{appendix:results} provides full experimental specifications, additional results, and details of the gradient quality analysis.

\section{Problem setup and models}
\label{appendix:problem-and-models}

This appendix formalizes the problem classes considered in the paper and provides detailed model descriptions. Appendix \ref{appendix:hybrid-mdps} gives a formal definition of hybrid MDPs. Appendix~\ref{appendix:ps-mdps} defines piecewise-smooth MDPs and shows how they can be represented as hybrid MDPs. Appendix~\ref{appendix:reformulation-inv-control} illustrates this reformulation through an inventory control example with fixed costs and varying lead times. Appendix~\ref{appendix:jrp-model} presents a complete formulation of the joint replenishment problem, alongside its hybrid MDP representation.

\subsection{Hybrid discrete-continuous action MDPs}
\label{appendix:hybrid-mdps}

Here, we provide a formal definition of hybrid MDPs.
A \textit{hybrid MDP} has state space 
$\mathcal{S} \subseteq \mathbb{R}^{d_{\mathcal S}}$ for some $d_{\mathcal S} \in \mathbb{N}$, 
and action space of the form
$\mathcal{A} = \bigcup_{j \in \mathcal{X}} \{j\} \times \mathcal{B}^j$,
where $\mathcal{X}$ is a finite set of discrete actions, and for each $j \in \mathcal{X}$, 
$\mathcal{B}^j \subseteq \mathbb{R}^{p}$ denotes the set of feasible continuous controls. 
An action at time $t$ is therefore a pair $a_t = (x_t, b_t)$, 
where $x_t \in \mathcal{X}$ and $b_t \in \mathcal{B}^{x_t}$.

To obtain PW gradients for the continuous action, we impose the following assumption.

\begin{assumption}[Smoothness of dynamics and costs]
\label{assumption:smoothness-for-continuous}
For every discrete action $x$ and exogenous realization $\xi$, the transition and cost functions $(s,b)\mapsto f(s,(x,b),\xi)$ and $(s,b)\mapsto c(s,(x,b),\xi)$ are continuous and almost everywhere (a.e.) differentiable with respect to $(s,b)$.
\end{assumption}

Assumption~\ref{assumption:smoothness-for-continuous} imposes smoothness only with respect to the continuous action $b$, but no conditions on the discrete action $x$. In contrast, applying standard PW methods typically requires problem settings with fully continuous action spaces and global smoothness with respect to all action components.

\subsection{Piecewise smooth MDPs}
\label{appendix:ps-mdps}

MDPs with a finite set of discontinuities in the action are a key class of problems we wish to address under our framework.
We formalize them as Piecewise smooth MDP (PS-MDP) as follows:

\begin{definition}[PS-MDP]
\label{def:ps-mdp}
An MDP $\mathcal{M} = (\mathcal{S}, \mathcal{A}, c, f, \mathbb{P}_{\xi})$ is a piecewise-smooth MDP with $J$ regions if 
i) the action space admits a partition $\mathcal{A} = \bigcup_{j=1}^J \mathcal{B}^j$, where each $\mathcal{B}^j \subseteq \mathbb{R}^p$ is non-empty and connected, and 
ii) for each $j \in [J]$, the restricted functions
\[
c|_{\mathcal{B}^j}(s,a,\xi), \quad f|_{\mathcal{B}^j}(s,a,\xi)
\]
are continuous and almost everywhere differentiable with respect to $a$, for all $(s,\xi)$.
\end{definition}

To represent a PS-MDP as a Hybrid MDP and unlock the benefits of HPO, we exploit 
the partition of the action space given by Definition~\ref{def:ps-mdp}, augmenting 
it into a discrete-continuous representation in which the discrete component selects 
a region and the continuous component specifies a control within that region. 
Crucially, while the original PS-MDP may be discontinuous in $a$, the resulting 
hybrid MDP is smooth in $b$ within each region, satisfying 
Assumption~\ref{assumption:smoothness-for-continuous} and thereby enabling PW 
gradient estimation. We formalize this below, and provide a concrete example in the 
context of inventory control in Appendix~\ref{appendix:reformulation-inv-control}.

\begin{proposition}
\label{prop:ps-mdp-factorization}
Every PS-MDP with $J$ regions admits a representation as a hybrid MDP 
$\tilde{\mathcal{M}}$ satisfying Assumption~\ref{assumption:smoothness-for-continuous} 
such that for every policy $\pi$ in the PS-MDP, there exists a policy $\pi$ 
in $\tilde{\mathcal{M}}$ that induces the same sequence of states and costs for every 
realization $\xi$, and vice versa.
\end{proposition}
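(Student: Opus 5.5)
We construct the hybrid MDP $\tilde{\mathcal{M}}$ explicitly and then establish the policy correspondence in both directions by induction on $t$. Let the PS-MDP have partition $\mathcal{A}=\bigcup_{j=1}^J \mathcal{B}^j$. Set $\mathcal{X}=\{1,\dots,J\}$, keep the state space $\mathcal{S}$ and the law $\mathbb{P}_\xi$, and let the hybrid action space be $\tilde{\mathcal{A}}=\bigcup_{j\in\mathcal{X}}\{j\}\times\mathcal{B}^j$. Define
\[
\tilde f(s,(x,b),\xi):=f|_{\mathcal{B}^x}(s,b,\xi),\qquad \tilde c(s,(x,b),\xi):=c|_{\mathcal{B}^x}(s,b,\xi).
\]
These are well defined on $\tilde{\mathcal{A}}$ because $b\in\mathcal{B}^x$. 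The first step is to check Assumption~\ref{assumption:smoothness-for-continuous}. For fixed $x$ and $\xi$, the maps $b\mapsto\tilde f,\tilde c$ are exactly the restricted functions of Definition~\ref{def:ps-mdp}(ii), so they are continuous and a.e.\ differentiable in $b$.

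\textbf{Main obstacle.} Definition~\ref{def:ps-mdp} imposes regularity only in $a$, not jointly in $(s,a)$, whereas the assumption asks for regularity in $(s,b)$. I would handle this by reading Definition~\ref{def:ps-mdp}(ii) as joint regularity in $(s,a)$ on $\mathcal{S}\times\mathcal{B}^j$, which is the intended meaning given how the hybrid assumption is used. I would state this reading explicitly. The remark that pieces are connected is not needed for the construction; it only ensures that a region is a sensible domain for the continuous control.

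Next, the map $\iota:\tilde{\mathcal{A}}\to\mathcal{A}$ given by $(x,b)\mapsto b$ is a bijection. Surjectivity holds because the $\mathcal{B}^j$ cover $\mathcal{A}$. Injectivity holds because the $\mathcal{B}^j$ are disjoint, so $b$ determines $x=j(b)$, the unique index with $b\in\mathcal{B}^{j}$. Moreover $\tilde f(s,\tilde a,\xi)=f(s,\iota(\tilde a),\xi)$ and $\tilde c(s,\tilde a,\xi)=c(s,\iota(\tilde a),\xi)$. This identity is the core of the argument.

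\textbf{PS-MDP to hybrid.} Given a (possibly history-dependent, possibly randomized) policy $\pi$ in the PS-MDP, define $\tilde\pi:=\iota^{-1}\circ\pi$, that is, $\tilde\pi$ outputs $(j(a),a)$ whenever $\pi$ outputs $a$. For a randomized policy, take the pushforward of $\pi$'s distribution through $\iota^{-1}$, which is measurable since each $\mathcal{B}^j$ is measurable. Fix $\xi$ and the policy's internal randomness. By induction on $t$, both systems have the same $s_t$. The base case holds since $s_0$ is shared. For the inductive step, the applied actions satisfy $\iota(\tilde a_t)=a_t$, so the identity gives $\tilde s_{t+1}=s_{t+1}$ and $\tilde c_t=c_t$.

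\textbf{Hybrid to PS-MDP.} Conversely, given a hybrid policy $\tilde\pi$, set $\pi:=\iota\circ\tilde\pi$ and run the same induction. Because both the dynamics and costs coincide pathwise, every realization $\xi$ yields identical state and cost sequences, which proves the proposition.
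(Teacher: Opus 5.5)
Your proposal is correct and follows the paper's proof essentially verbatim: the same action space $\bigcup_{j}\{j\}\times\mathcal{B}^j$, the same branch-restricted transition and cost, and the same two policy maps ($a\mapsto(j(a),a)$ and $(x,b)\mapsto b$). Your bijection $\iota$, the induction on $t$, and the pushforward for randomized policies spell out what the paper leaves implicit.

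Your observation that Definition~\ref{def:ps-mdp}(ii) only gives regularity in $a$, while Assumption~\ref{assumption:smoothness-for-continuous} requires it in $(s,b)$, is a fair point. The paper's proof passes over it by simply stating that the assumption ``is satisfied by condition ii)'', and reading (ii) as joint regularity on $\mathcal{S}\times\mathcal{B}^j$ is the right fix.
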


\begin{proof}
Define $\tilde{\mathcal{M}}$ with action space
\[
\tilde{\mathcal{A}} = \bigcup_{j=1}^J \{j\} \times \mathcal{B}^j,
\]
transition $f(s,(x,b),\xi) = f^{x}(s,b,\xi)$ and cost 
$c(s,(x,b),\xi) = c^{x}(s,b,\xi)$. 
Assumption~\ref{assumption:smoothness-for-continuous} is satisfied by condition ii) 
of the PS-MDP definition.
For any policy $\pi$ in the PS-MDP, define $\pi$ by setting 
$x_t = j$ whenever $\pi_t(s) \in \mathcal{B}^j$, and $b_t = \pi_t(s)$. 
By construction, $b_t \in \mathcal{B}^{x_t}$ and 
$f^{x_t}(s,b_t,\xi) = f(s,\pi_t(s),\xi)$, with the same holding for costs, so the 
induced trajectories are identical.
Conversely, for any policy $\pi$ in $\tilde{\mathcal{M}}$, define $\pi$ in 
the PS-MDP by setting $\pi_t(s) = b_t$, where $(x_t,b_t) = \pi_t(s)$. 
Since $b_t \in \mathcal{B}^{x_t} \subseteq \mathcal{A}$, $\pi$ is feasible, and the 
induced trajectories are identical by the same argument.
\end{proof}

\subsection{PS-MDP Reformulation: inventory control with fixed costs and varying lead times}
\label{appendix:reformulation-inv-control}
We present a short example to show how to represent a PS-MDP with $J$ regions as a hybrid MDP. 
\begin{example}
\label{example:inv-control}    
Consider the setting of a retailer that sells a unique product across $T$ periods. The retailer pays a linear underage cost $u$ for unmet demand, and a linear holding cost $h$ for carrying inventory at the end of each period. The retailer's supplier charges her fixed costs depending on the order size, as she needs to consolidate the orders into pallets: $0$ for orders between $0$ and $5$ (as a "promotion", the first pallet is free), $10$ for orders between $5$ and $10$, and $20$ for orders between $10$ and $20$. Procurement costs per unit are negligible. Similarly, the supplier's lead time varies depending on the order size, given that she might need to buy additional input materials: orders between $0$ and $7$ can be delivered in $2$ weeks, while orders between $7$ and $20$ in $3$ weeks. They agree that all units in an order will be delivered simultaneously.
\end{example}

\textbf{PS-MDP Representation.} We can model this problem as a PS-MDP with $J=4$ regions as follows.
The system state at time $t$ is $s_t = (I_t, Q_t)$,
where $I_t \in \mathbb{R}$ is the inventory on-hand and $Q_t \in \mathbb{R}_+^2$ tracks outstanding orders up to one minus the maximum lead time of $3$ periods.  
The action at time $t$ is the order quantity $a_t \in [0, 20]$.

Inventory on hand evolves as
\[
I_{t+1} = \max\{I_t - \xi_t, 0\} + Q_t[1],
\]
where $Q_t[i]$ is the $i$-th component of $Q_t$.
The outstanding orders update as
\[
Q_{t+1} = 
\begin{cases}
(Q_t[2] + a_t, 0), & \text{if } 0 \leq a_t \leq 7, \\
(Q_t[2], a_t), & \text{if } 7 < a_t \leq 20.
\end{cases}
\]

Finally, per-period costs are computed as
\[
c(s_t, a_t, \xi_t) = u(\xi_t - I_t)^+ + h(I_t - \xi_t)^+ + \text{FixedCost}(a_t),
\]
with
\[
\text{FixedCost}(a_t) =
\begin{cases}
0, & \text{if } 0 \leq a_t \leq 5, \\
10, & \text{if } 5 < a_t \leq 10, \\
20, & \text{if } 10 < a_t \leq 20.
\end{cases}
\]

\textbf{Hybrid MDP Representation.} We define the hybrid MDP representation of the MDP described above with $J=4$ regions.  
The action space is given by
\[
\mathcal{A} = \bigcup_{j=1}^4\{j\} \times \mathcal{B}^j,
\]
where
\[
\mathcal{B}^1 = [0,5],\quad
\mathcal{B}^2 = (5,7],\quad
\mathcal{B}^3 = (7,10],\quad
\mathcal{B}^4 = (10,20].
\]

The transition function is defined as
\[
f(s_t, (x_t, b_t), \xi_t) = f^{x_t}(s_t, b_t, \xi_t),
\]
where each $f^j$ (which returns a pair $(I_{t+1}, Q_{t+1})$) is given by
\[
f^j(s_t, b_t, \xi_t) =
\begin{cases}
\left( \max\{I_t - \xi_t, 0\} + Q_t[1],\; (Q_t[2] + b_t, 0) \right), & \text{for } j=1,2, \\[6pt]
\left( \max\{I_t - \xi_t, 0\} + Q_t[1],\; (Q_t[2], b_t) \right), & \text{for } j=3,4.
\end{cases}
\]

Similarly, the cost function is given by
\[
c(s_t, (x_t, b_t), \xi_t) = c^{x_t}(s_t, b_t, \xi_t),
\]
where
\[
c^j(s_t, b_t, \xi_t)
=
u(\xi_t - I_t)^+ + h(I_t - \xi_t)^+ + K_j,
\]
with $K_1 = 0,\quad K_2 = 10,\quad K_3 = 10,\quad K_4 = 20$.

\subsection{Detailed Joint Replenishment Problem Formulation}
\label{appendix:jrp-model}

We provide a complete formulation of the joint replenishment problem introduced in
Example~\ref{ex:jrp}. We consider $p$ products over a finite horizon of
$T$ periods. Demand for product $k$ in period $t$ is denoted by $\xi_t^k$, and we write
$\xi_t = (\xi_t^1,\ldots,\xi_t^p)$. We assume a lead time $L \geq 2$ to avoid edge cases.

\textbf{State and action spaces.}
The system state at time $t$ is
\[
s_t = (I_t, Q_t),
\]
where $I_t = (I_t^1,\ldots,I_t^p) \in \mathbb{R}^p$ denotes inventory levels under
backlogged demand, and
\[
Q_t = (Q_t^1,\ldots,Q_t^p),
\qquad Q_t^k \in \mathbb{R}_+^{L-1},
\]
tracks outstanding orders for product $k$ that have not yet arrived.

After observing $s_t$, the retailer chooses an order vector
\[
a_t = (a_t^1,\ldots,a_t^p) \in \mathbb{R}_+^p,
\]
where $a_t^k$ is the order quantity for product $k$.

\textbf{Transition dynamics.}
Inventory evolves according to
\[
I_{t+1}^k = I_t^k - \xi_t^k + Q_t^k[1],
\qquad k \in [p].
\]
Outstanding orders update as
\[
Q_{t+1}^k =
(Q_t^k[2],\ldots,Q_t^k[L-1],a_t^k),
\qquad k \in [p].
\]
That is, the pipeline is shifted one position forward and the newly placed order is inserted
in the last position.

\textbf{Cost function.}
The retailer incurs linear underage and holding costs for each product, as well as a shared
fixed ordering cost whenever at least one product is ordered. The per-period cost is
\[
c(s_t,a_t,\xi_t)
=
\sum_{k=1}^p
\left[
u_k(\xi_t^k - I_t^k)^+
+
h_k(I_t^k - \xi_t^k)^+
\right]
+
K \cdot \mathbf{1}\left\{\sum_{k=1}^p a_t^k > 0\right\}.
\]
Here, $u_k$ and $h_k$ denote the underage and holding costs for product $k$, respectively,
and $K$ is the shared fixed cost incurred whenever the retailer places a positive order for
at least one product.

\subsubsection{Hybrid MDP Reformulation for the Joint Replenishment Problem}
\label{appendix:jrp-reformulation}

We show how the joint replenishment problem in Appendix~\ref{appendix:jrp-model}
can be represented as a hybrid MDP. The discontinuity is induced by the fixed
cost $K$, which is incurred whenever at least one product is ordered.

We define a hybrid action space with two discrete modes:
\[
\mathcal{X} = \{1,2\},
\]
where $x_t = 1$ denotes ``do not order'' and $x_t = 2$ denotes ``place an order.''
The corresponding continuous action regions are
\[
\mathcal{B}^1 = \{0\}^p,
\qquad
\mathcal{B}^2 = \mathbb{R}_+^p \setminus \{0\}^p.
\]
In practice, we replace $\mathcal{B}^1 = \{0\}^p$ with a small hypercube
$\mathcal{B}^1 = [0,\epsilon]^p$ for a small $\epsilon > 0$ to avoid
degenerate gradients; this does not affect the conceptual reformulation.

We define
\[
f^1(s_t,b_t,\xi_t)
=
\left(
I_t^k - \xi_t^k + Q_t^k[1],\;
(Q_t^k[2],\ldots,Q_t^k[L-1],0)
\right)_{k \in [p]},
\]
\[
f^2(s_t,b_t,\xi_t)
=
\left(
I_t^k - \xi_t^k + Q_t^k[1],\;
(Q_t^k[2],\ldots,Q_t^k[L-1],b_t^{k})
\right)_{k \in [p]}.
\]
The transition is therefore
\[
f(s_t,(x_t,b_t),\xi_t)
=
f^{x_t}(s_t,b_t,\xi_t),
\]
with $b_t \in \mathcal{B}^{x_t}$.

Similarly, cost branches are given by
\[
c^1(s_t,b_t,\xi_t)
=
\sum_{k=1}^p
\left[
u_k(\xi_t^k - I_t^k)^+
+
h_k(I_t^k - \xi_t^k)^+
\right],
\]
and
\[
c^2(s_t,b_t,\xi_t)
=
\sum_{k=1}^p
\left[
u_k(\xi_t^k - I_t^k)^+
+
h_k(I_t^k - \xi_t^k)^+
\right]
+
K.
\]
The cost is
\[
c(s_t,(x_t,b_t),\xi_t)
=
c^{x_t}(s_t,b_t,\xi_t).
\]

\section{Theoretical underpinnings}
\label{appendix:theory}

This appendix provides the theoretical results underlying the proposed framework. First, Appendix \ref{appendix:notation-and-assumptions} defines notation and assumptions used throughout this appendix. Appendix~\ref{appendix:hybrid-gradient-proof} derives the mixed gradient estimator for hybrid policies. Appendix~\ref{sec:proof-of-vanishing-cross-term} proves that the cross term vanishes near a discrete best response. Finally, Appendix~\ref{appendix:sf-estimator} presents the SF estimator for fully stochastic policies.

\subsection{Additional notation and assumptions}
\label{appendix:notation-and-assumptions}

Throughout this section, we consider a Hybrid MDP as defined in 
Appendix~\ref{appendix:hybrid-mdps} satisfying 
Assumption~\ref{assumption:smoothness-for-continuous}.

\paragraph{Notation.}
We introduce notation to make explicit the dependence of trajectories on the policy parameters, which will be key for gradient computation. For $t_1 \le t_2$, we write $x_{t_1:t_2} = (x_{t_1}, \ldots, x_{t_2})$ (and similarly for $\xi_{t_1:t_2}$) to denote a subsequence. We consider a stochastic process in which $\theta = (\phi, \kappa)$ induces a joint distribution $\mathcal{D}_{\theta}$ over $(\xi_{0:\infty}, x_{0:\infty})$, and write $\mathbb{E}_\theta[\cdot] = \mathbb{E}_{(\xi_{0:\infty}, x_{0:\infty}) \sim \mathcal{D}_{\theta}}[\cdot]$.

We use parentheses to indicate parameter dependence only when that dependence remains \emph{after realization of the process $(\xi_{0:\infty}, x_{0:\infty})$}. In particular, $\kappa$ influences the distribution $\mathcal{D}_{\theta}$, while, for a fixed realization $(\xi_{0:\infty}, x_{0:\infty}) \sim \mathcal{D}_\theta$, the state trajectory is a deterministic function of the same $\kappa$ (now viewed as a variable argument), which we denote by $s_t(\kappa)$. Conditional on this realization, $s_t(\kappa)$ does not depend on $\phi$, since the discrete actions $x_{0:\infty}$ are fixed. This conditioning highlights the deterministic dependence on $\kappa$ along a realized trajectory, while the distribution of $(\xi_{0:\infty}, x_{0:\infty})$ remains parameter-dependent. We write $b_t(\kappa)$ for the continuous action executed at time $t$, defined as $b_t(\kappa) = b_\kappa(s_t(\kappa), x_t)$, with dependence on $x_t$ implicit. Accordingly, we write costs as $c_t(\kappa) = c\bigl(s_t(\kappa),(x_t,b_t(\kappa)),\xi_t\bigr)$ and transitions as $s_{t+1}(\kappa) = f\bigl(s_t(\kappa),(x_t,b_t(\kappa)),\xi_t\bigr)$.

For a fixed initial state $s_0$, the expected cost is
\begin{equation}
J(\theta) = \mathbb{E}_\theta\!\left[\sum_{t=0}^{\infty}\gamma^t 
c_t(\kappa)\,\middle|\, s_0\right],
\end{equation}
and we denote by $Q_{\pi_\theta}(s,a)$ the corresponding action-value 
function.

\paragraph{Regularity conditions.}
The proof of Theorem~\ref{thm:mixed-gradient} requires the following 
conditions on the policy class, which ensure the gradient estimator is 
well-defined and allow interchange of differentiation and expectation.

\begin{assumption}[Policy regularity]
\label{ass:policy-regularity}
For each $\theta=(\phi,\kappa)$:
\begin{enumerate}
\vspace{-0.2cm}
\setlength{\itemsep}{2pt}
\setlength{\parskip}{0pt}
\setlength{\parsep}{0pt}
    \item The discrete policy $s \mapsto \pi^{\mathcal X}_{\phi}(x \mid s)$ is continuous and a.e. differentiable for every $x\in\mathcal X$.
    \item The continuous policy mapping $\kappa \mapsto b_\kappa(s, x)$ is continuous and a.e. differentiable for every $s \in \mathcal{S}, x\in\mathcal X$.
    \item The following gradients exist a.e. and are uniformly bounded over $t$, $\xi_{0:t}$ and $x_{0:t} \in \mathcal X^{t+1}$:
$\nabla_\phi \log \pi^{\mathcal X}_{\phi}(x_t \mid s_t(\kappa))$,
$\nabla_\kappa \log \pi^{\mathcal X}_{\phi}(x_t \mid s_t(\kappa))$, and
$\nabla_\kappa c(s_t(\kappa),(x_t,b_\kappa(s_t(\kappa),x_t)),\xi_t)$.
    \item The cost function $c(s,a,\xi)$ is uniformly bounded.
\end{enumerate}
\vspace{-0.25cm}
\end{assumption}
These conditions ensure the gradient estimator is well-defined and allow interchange of differentiation and expectation.

\subsection{Proof of Theorem \ref{thm:mixed-gradient}}
\label{appendix:hybrid-gradient-proof}

\begin{proof}

Under Assumption~\ref{ass:policy-regularity}, the mapping
$s \mapsto \pi^{\mathcal{X}}_{\phi}(x \mid s)$ is a.e.
differentiable with bounded log-gradients, which ensures that the SF
estimator is well-defined. Therefore, by the standard policy gradient theorem \citep{sutton1999policy}, we have
\begin{align}
\nabla_\phi J(\theta)
=
\mathbb{E}_{\theta}\!\left[
\sum_{t=0}^{\infty}
\gamma^{t}\,
\nabla_{\phi}\log \pi^{\mathcal{X}}_{\phi}\left(x_t \mid s_t(\kappa)\right)\;
Q_{\pi_{\theta}}\bigl(s_t(\kappa), (x_t, b_t(\kappa))\bigr)
\right].
\end{align}

We now turn to deriving the gradient with respect to the continuous parameters~$\kappa$. The derivation follows the standard approach of differentiating expectations
over trajectories, and is closely related to the gradient estimators derived
in the stochastic computation graph framework of \citet{schulman2015gradient}.

For a fixed initial state $s_0$, we first derive the gradient of the
undiscounted cost at time $t$ with respect to $\kappa$:
\begin{equation}
(1)
\;=\;
\nabla_{\kappa}\,
\mathbb{E}_{\theta}\!\big[
c(s_t(\kappa), (x_t, b_t(\kappa)), \xi_t)
\mid s_0
\big].
\label{eq:single-cost-grad}
\end{equation}

Expanding the nested expectations and interchanging differentiation and integration 
(which is justified by Assumptions~\ref{assumption:smoothness-for-continuous} and~\ref{ass:policy-regularity}, 
which ensure that the relevant derivatives exist almost everywhere, the gradients and costs are uniformly bounded, and hence dominated convergence applies), we obtain
\begin{align*}
(1)
&=
\nabla_{\kappa}
\sum_{x_{0:t} \in \mathcal{X}^{t+1}}
\int
p(\xi_{0:t})
\Bigg(
\prod_{u=0}^{t}
\pi^{\mathcal{X}}_{\phi}\bigl(x_u \mid s_u(\kappa)\bigr)
\Bigg)
c\big(s_t(\kappa), (x_t, b_t(\kappa)), \xi_t\big)
\,d\xi_{0:t}
\\[4pt]
&=
\sum_{x_{0:t} \in \mathcal{X}^{t+1}}
\int
p(\xi_{0:t})
\nabla_{\kappa}
\!\left[
\Bigg(\prod_{u=0}^{t} \pi^{\mathcal{X}}_{\phi}\bigl(x_u \mid s_u(\kappa)\bigr)\Bigg)
c\big(s_t(\kappa), (x_t, b_t(\kappa)), \xi_t\big)
\right]
\,d\xi_{0:t},
\end{align*}
where $p(\xi_{0:t})$ does not depend on $(\phi,\kappa)$.

Applying the product rule, the integrand decomposes into two terms.

For the first term,
\begin{align}
(2)
&=
\Bigg[
\nabla_{\kappa}
\!\left(
\prod_{u=0}^{t} \pi^{\mathcal{X}}_{\phi}\bigl(x_u \mid s_u(\kappa)\bigr)
\right)
\Bigg]
\cdot c\big(s_t(\kappa), (x_t, b_t(\kappa)), \xi_t\big)
\nonumber\\[4pt]
&=
\Bigg[
\sum_{v=0}^{t}
\Bigg(
\prod_{u=0}^{t} \pi^{\mathcal{X}}_{\phi}\bigl(x_u \mid s_u(\kappa)\bigr)
\frac{\nabla_{\kappa} \pi^{\mathcal{X}}_{\phi}\bigl(x_v \mid s_v(\kappa)\bigr)}{\pi^{\mathcal{X}}_{\phi}\bigl(x_v \mid s_v(\kappa)\bigr)}
\Bigg)
\Bigg]
\cdot c\big(s_t(\kappa), (x_t, b_t(\kappa)), \xi_t\big)
\nonumber\\[4pt]
&=
\Bigg(\prod_{u=0}^{t} \pi^{\mathcal{X}}_{\phi}\bigl(x_u \mid s_u(\kappa)\bigr)\Bigg)
\Bigg[
\sum_{v=0}^{t}
\nabla_{\kappa}\log \pi^{\mathcal{X}}_{\phi}\bigl(x_v \mid s_v(\kappa)\bigr)
\Bigg]
\cdot c\big(s_t(\kappa), (x_t, b_t(\kappa)), \xi_t\big).
\label{eq:score_term_hybrid}
\end{align}

For the second term,
\begin{align}
(3)
&=
\Bigg(\prod_{u=0}^{t} \pi^{\mathcal{X}}_{\phi}\bigl(x_u \mid s_u(\kappa)\bigr)\Bigg)
\cdot \nabla_{\kappa}\,c\big(s_t(\kappa), (x_t, b_t(\kappa)), \xi_t\big).
\label{eq:pathwise_term_hybrid}
\end{align}

Combining \eqref{eq:score_term_hybrid} and \eqref{eq:pathwise_term_hybrid}, we obtain
\begin{align*}
(1)
&=
\mathbb{E}_{\theta}\!\left[
\sum_{v=0}^{t}
\nabla_{\kappa}\log \pi^{\mathcal{X}}_{\phi}\!\left(x_v \mid s_v(\kappa)\right)\;
c\big(s_t(\kappa), (x_t, b_t(\kappa)), \xi_t\big)
\;+\;
\nabla_{\kappa} c\big(s_t(\kappa), (x_t, b_t(\kappa)), \xi_t\big)
\right].
\end{align*}

Summing over $t$ incorporating the discount factor $\gamma^t$ and using linearity of expectation,
\begin{align*}
\nabla_\kappa J(\phi,\kappa)
&=
\mathbb{E}_{\theta}\!\Bigg[
\sum_{t=0}^{\infty}\sum_{v=0}^{t}
\gamma^{\,t}\,
\nabla_{\kappa}\log \pi^{\mathcal{X}}_{\phi}\!\left(x_v \mid s_v(\kappa)\right)\;
c\big(s_t(\kappa), (x_t, b_t(\kappa)), \xi_t\big)
\\
&\qquad\qquad\qquad\qquad
+\;
\sum_{t=0}^{\infty}\gamma^{\,t}\,
\nabla_{\kappa} c\big(s_t(\kappa), (x_t, b_t(\kappa)), \xi_t\big)
\Bigg].
\end{align*}

Swapping the order of summation,
\begin{align*}
\nabla_\kappa J(\theta)
&=
\mathbb{E}_{\theta}\!\Bigg[
\sum_{v=0}^{\infty}
\gamma^{\,v}\,
\nabla_{\kappa}\log \pi^{\mathcal{X}}_{\phi}\!\left(x_v \mid s_v(\kappa)\right)
\sum_{t=v}^{\infty} \gamma^{t-v}
c\big(s_t(\kappa), (x_t, b_t(\kappa)), \xi_t\big)
\\
&\qquad\qquad\qquad\qquad
+\;
\sum_{t=0}^{\infty}\gamma^{\,t}\,
\nabla_{\kappa} c\big(s_t(\kappa), (x_t, b_t(\kappa)), \xi_t\big)
\Bigg]
\\[6pt]
&=
\mathbb{E}_{\theta}\!\left[
\sum_{t=0}^{\infty}
\gamma^{\,t}\,
\nabla_{\kappa}\log \pi^{\mathcal{X}}_{\phi}\!\left(x_t \mid s_t(\kappa)\right)\,
Q_{\pi_{\theta}}(s_t(\kappa),a_t)
+
\sum_{t=0}^{\infty}\gamma^{\,t}\,
\nabla_{\kappa} c\big(s_t(\kappa), (x_t, b_t(\kappa)), \xi_t\big)
\right],
\end{align*}
where the final equality follows from the tower property, since the conditional expectation of
$\sum_{t=v}^{\infty}\gamma^{t-v}c(s_t(\kappa),(x_t,b_t(\kappa)),\xi_t)$
given $(s_v(\kappa),a_v)$ is $Q_{\pi_\theta}(s_v(\kappa),a_v)$.

\end{proof}

\subsection{Proof of Theorem~\ref{thm:cross-term-vanishes-for-near-opt}}
\label{sec:proof-of-vanishing-cross-term}

\begin{proof}[Proof of Theorem~\ref{thm:cross-term-vanishes-for-near-opt}]

For a fixed continuous policy parameter $\kappa$, define the post-discrete
action-value function
\[
\tilde Q_{\pi}(s,x)
:=
Q_{\pi}\bigl(s,(x,\pi^{\mathcal B}_{\kappa}(s, x))\bigr),
\]
that is, the value of selecting discrete action $x$ and then applying the
continuous action prescribed by $\pi^{\mathcal B}_{\kappa}$ conditional on $x$. Similarly,
define the post-discrete advantage of the best-response policy $\pi^*_\kappa$ as
\[
\tilde A_{\pi^*_\kappa}(s,x)
:=
\tilde Q_{\pi^*_\kappa}(s,x)-V_{\pi^*_\kappa}(s).
\]
Since $\pi^*_\kappa$ is a statewise best response by
Assumption~\ref{ass:regularity-cross-term}(1), we have
\[
\tilde A_{\pi^*_\kappa}(s,x) \geq 0
\qquad \text{for all } (s,x).
\]

By the performance difference lemma for costs,
\[
\mathbb{E}_{s_0\sim \rho}
\!\left[
V_{\pi_\theta}(s_0)-V_{\pi^*_\kappa}(s_0)
\right]
=
\mathbb{E}_{\theta}\!\left[
\sum_{t\geq 0}\gamma^t
\tilde A_{\pi^*_\kappa}(s_t(\kappa),x_t)
\right].
\]
Since $\pi^{\mathcal X}_{\phi}$ is an $\epsilon$-best response to
$\kappa$, the left-hand side is at most $\epsilon$, and therefore
\begin{equation}
\label{eq:advantage-bound}
\mathbb{E}_{\theta}\!\left[
\sum_{t\geq 0}\gamma^t
\tilde A_{\pi^*_\kappa}(s_t(\kappa),x_t)
\right]
\leq \epsilon.
\end{equation}

Let
\[
\mathcal C
:=
\mathbb{E}_{\theta}\!\left[
\sum_{t\geq 0}
\gamma^t
\nabla_{\kappa}\log \pi^{\mathcal X}_{\phi}
\!\left(x_t\mid s_t(\kappa)\right)
Q_{\pi_\theta}(s_t(\kappa),a_t)
\right]
\]
denote the cross term. Since
\[
Q_{\pi_\theta}(s_t(\kappa),a_t)
=
\tilde Q_{\pi_\theta}(s_t(\kappa),x_t),
\]
we can write
\begin{align*}
\mathcal C
&=
\mathbb{E}_{\theta}\!\left[
\sum_{t\geq 0}\gamma^t
\nabla_{\kappa}\log \pi^{\mathcal X}_{\phi}
\!\left(x_t\mid s_t(\kappa)\right)
\tilde Q_{\pi_\theta}(s_t(\kappa),x_t)
\right] \\
&=
\mathbb{E}_{\theta}\!\left[
\sum_{t\geq 0}\gamma^t
\nabla_{\kappa}\log \pi^{\mathcal X}_{\phi}
\!\left(x_t\mid s_t(\kappa)\right)
\tilde Q_{\pi^*_\kappa}(s_t(\kappa),x_t)
\right] \\
&\quad+
\mathbb{E}_{\theta}\!\left[
\sum_{t\geq 0}\gamma^t
\nabla_{\kappa}\log \pi^{\mathcal X}_{\phi}
\!\left(x_t\mid s_t(\kappa)\right)
\left(
\tilde Q_{\pi_\theta}(s_t(\kappa),x_t)
-
\tilde Q_{\pi^*_\kappa}(s_t(\kappa),x_t)
\right)
\right].
\end{align*}

For fixed realized history up to time $t$ excluding $x_t$, the state
$s_t(\kappa)$ is fixed as a differentiable function of $\kappa$. Hence
\[
\sum_{x\in\mathcal X}
\pi^{\mathcal X}_\phi(x\mid s_t(\kappa))
\nabla_\kappa \log \pi^{\mathcal X}_\phi(x\mid s_t(\kappa))
=
\nabla_\kappa
\sum_{x\in\mathcal X}
\pi^{\mathcal X}_\phi(x\mid s_t(\kappa))
=0.
\]
Thus, any term depending only on $s_t(\kappa)$ may be subtracted from
$\tilde Q_{\pi^*_\kappa}(s_t(\kappa),x_t)$ inside the conditional expectation over $x_t$.
Therefore,
\begin{align*}
\mathcal C
&=
T_1+T_2,
\end{align*}
where
\[
T_1
:=
\mathbb{E}_{\theta}\!\left[
\sum_{t\geq 0}\gamma^t
\nabla_{\kappa}\log \pi^{\mathcal X}_{\phi}
\!\left(x_t\mid s_t(\kappa)\right)
\tilde A_{\pi^*_\kappa}(s_t(\kappa),x_t)
\right],
\]
and
\[
T_2
:=
\mathbb{E}_{\theta}\!\left[
\sum_{t\geq 0}\gamma^t
\nabla_{\kappa}\log \pi^{\mathcal X}_{\phi}
\!\left(x_t\mid s_t(\kappa)\right)
\left(
\tilde Q_{\pi_\theta}(s_t(\kappa),x_t)
-
\tilde Q_{\pi^*_\kappa}(s_t(\kappa),x_t)
\right)
\right].
\]

By Assumption~\ref{ass:regularity-cross-term}(3),
\[
\left\|
\nabla_{\kappa}\log \pi^{\mathcal X}_{\phi}
(x_t\mid s_t(\kappa))
\right\|
\leq
\left\|
\nabla_s \log \pi^{\mathcal X}_{\phi}
(x_t\mid s_t(\kappa))
\right\|
\left\|
\nabla_\kappa s_t(\kappa)
\right\|
\leq
L_\pi L_s.
\]
Using this bound and \eqref{eq:advantage-bound},
\begin{align*}
\|T_1\|
&\leq
\mathbb{E}_{\theta}\!\left[
\sum_{t\geq 0}\gamma^t
\left\|
\nabla_{\kappa}\log \pi^{\mathcal X}_{\phi}
(x_t\mid s_t(\kappa))
\right\|
\tilde A_{\pi^*_\kappa}(s_t(\kappa),x_t)
\right] \\
&\leq
L_\pi L_s\,
\mathbb{E}_{\theta}\!\left[
\sum_{t\geq 0}\gamma^t
\tilde A_{\pi^*_\kappa}(s_t(\kappa),x_t)
\right] \\
&\leq
L_\pi L_s\,\epsilon.
\end{align*}

It remains to bound $T_2$. For any $(s_t,x_t)$, both
$\tilde Q_{\pi_\theta}(s_t(\kappa),x_t)$ and $\tilde Q_{\pi^*_\kappa}(s_t(\kappa),x_t)$ use the
same immediate cost and the same continuous action
$\pi^{\mathcal B}_{\kappa}(s_t(\kappa),x_t)$ in the first period. They differ
only in the continuation policy. Hence,
\begin{align}
\label{eq:q-difference-bound}
\tilde Q_{\pi_\theta}(s_t(\kappa),x_t)
-
\tilde Q_{\pi^*_\kappa}(s_t(\kappa),x_t)
&=
\gamma\,
\mathbb{E}\!\left[
V_{\pi_\theta}(s_{t+1})
-
V_{\pi^*_\kappa}(s_{t+1})
\mid s_t(\kappa),x_t
\right] \\
&\leq
\gamma C\epsilon,
\nonumber
\end{align}
where the inequality follows from Assumption~\ref{ass:regularity-cross-term}(2)
applied to the conditional distribution of $s_{t+1}$ given $(s_t(\kappa),x_t)$.
Moreover, the left-hand side is nonnegative by the statewise optimality of
$\pi^*_\kappa$.

Using \eqref{eq:q-difference-bound} and Assumption~\ref{ass:regularity-cross-term}(3),
\begin{align*}
\|T_2\|
&\leq
\mathbb{E}_{\theta}\!\left[
\sum_{t\geq 0}\gamma^t
\left\|
\nabla_{\kappa}\log \pi^{\mathcal X}_{\phi}
(x_t\mid s_t(\kappa))
\right\|
\left|
\tilde Q_{\pi_\theta}(s_t(\kappa),x_t)
-
\tilde Q_{\pi^*_\kappa}(s_t(\kappa),x_t)
\right|
\right] \\
&\leq
L_\pi L_s
\mathbb{E}_{\theta}\!\left[
\sum_{t\geq 0}\gamma^t
\gamma C\epsilon
\right] \\
&=
L_\pi L_s\,
\frac{C\gamma}{1-\gamma}\,
\epsilon.
\end{align*}

Combining the bounds for $T_1$ and $T_2$ gives
\[
\|\mathcal C\|
\leq
L_\pi L_s\,\epsilon
\left(
1+\frac{C\gamma}{1-\gamma}
\right),
\]
which proves the result.
\end{proof}

\subsection{Score-function estimator for fully stochastic policies}
\label{appendix:sf-estimator}

In this section, we consider a fully stochastic version of the policy, in which
the continuous component is also stochastic. We
represent the policy as first sampling a discrete action
$x_t \sim \pi^{\mathcal X}_{\phi}(\cdot \mid s_t)$, and then sampling
the corresponding continuous action as
$b_t \sim \pi^{\mathcal B}_{\kappa}(\cdot \mid s_t, x_t)$.
Therefore,
\[
\pi_{\theta}(a_t \mid s_t)
=
\pi^{\mathcal X}_{\phi}(x_t \mid s_t)\,
\pi^{\mathcal B}_{\kappa}(b_t \mid s_t, x_t).
\]

Under Assumption~\ref{ass:policy-regularity}, and assuming that the continuous
policy admits a density with respect to Lebesgue measure, whose log-density is
almost everywhere differentiable with bounded gradients, the standard
SF estimator applies \citep{sutton1999policy}. In particular,
\begin{align}
\label{eq:sf-gradient}
\nabla_\theta J(\theta)
=
\mathbb{E}_{\theta}\!\left[
\sum_{t=0}^{\infty}
\gamma^{t}\,
\nabla_{\theta}\log \pi_{\theta}(a_t \mid s_t)\;
Q_{\pi_{\theta}}(s_t,a_t)
\right].
\end{align}

\section{Policy class and implementation}
\label{appendix:policy-and-implementation}

This appendix provides additional details on our policy class and algorithm implementation. Appendix~\ref{appendix:policy-parameterization} details the architecture of the hybrid policy, including its discrete and continuous components and feasibility enforcement mechanisms. Appendix~\ref{appendix:implementation-details} provides implementation details used across experimental settings, including a full algorithmic implementation of HPO (Algorithm~\ref{alg:hpo-ppo}) built on the gradient estimator of Listing~\ref{alg:hpo-loss}.

\subsection{Policy Parameterization}
\label{appendix:policy-parameterization}

For all our experiments, we consider the policy architecture illustrated in Figure~\ref{fig:hybrid-policy-architecture}. The policy is composed of a discrete component and a continuous component, each represented by a neural network.

The discrete policy $\pi^{\mathcal X}_{\phi}$ maps the state $s_t$ to a vector of logits over discrete actions. A softmax transformation is applied to obtain a probability distribution, from which a discrete action $x_t$ is sampled.

The continuous policy $\pi^{\mathcal B}_{\kappa}$ maps the state $s_t$ to a vector of candidate continuous actions $(b_t^1,\dots,b_t^J)$, one for each discrete action. Given the sampled discrete action $x_t$, the executed continuous action is selected as $b_t = b_t^{x_t}$. This parameterization is equivalent in expressive power to defining a conditional continuous policy that takes $(s_t, x_t)$ as input and outputs a single continuous action. We experimented with such a conditional design and observed similar empirical performance.

\paragraph{Differentiable feasibility enforcement.}
We enforce feasibility of the continuous actions through simple output transformations. For the JRP setting (see Example~\ref{ex:jrp}), actions are required to be non-negative, and we therefore apply a softplus transformation to the network outputs. More complex constraints can be enforced as well—for instance, simplex or budget constraints via softmax-type normalizations, or interval constraints via sigmoid-based transformations—but are beyond the scope of this work.

\begin{figure}[t]
    \centering
    \includegraphics[width=0.4\linewidth]{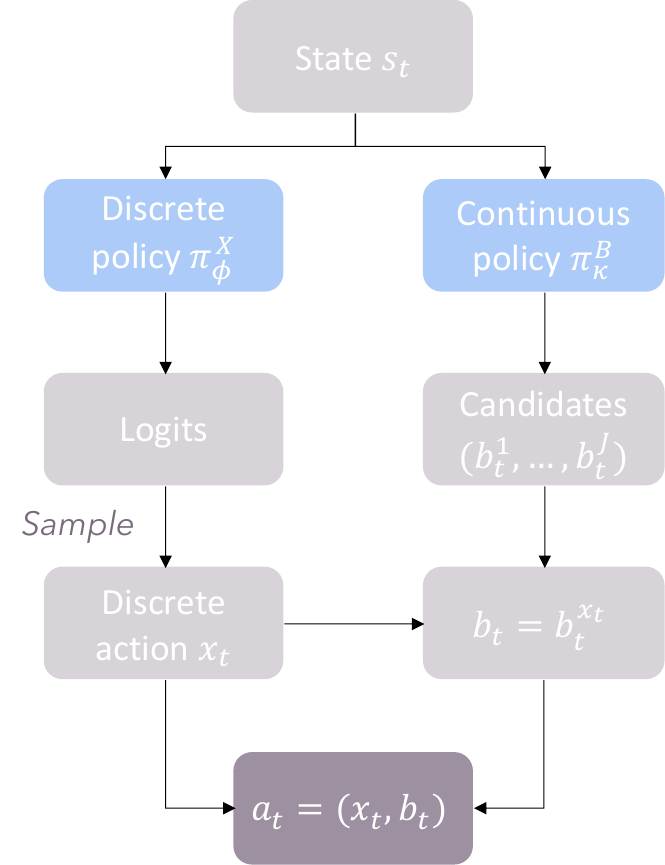}
    \caption{Policy architecture for hybrid MDPs.}
    \label{fig:hybrid-policy-architecture}
\end{figure}

\subsection{Implementation Details}
\label{appendix:implementation-details}

We provide additional details on the implementation of HPO. 
Algorithm~\ref{alg:hpo-ppo} presents a full implementation of HPO built on the loss defined in Listing~\ref{alg:hpo-loss}, reflecting the procedure used in our experiments.
The remainder of this section summarizes the implementation choices and hyperparameters used across all experiments, unless noted otherwise.

\begin{algorithm}[t]
\caption{Hybrid Policy Optimization with PPO-style Updates}
\label{alg:hpo-ppo}
\begin{algorithmic}[1]
\State Initialize policy parameters $\theta=(\phi,\kappa)$ and value parameters $\psi$
\State Let $\mathcal D=\{\xi^h_{0:T-1}\}_{h=1}^{H}$ denote the fixed training dataset of exogenous scenarios

\For{iteration $=1,2,\dots,\texttt{max\_iterations}$}
    \State Partition $\mathcal D$ into training batches
    \For{each training batch $\mathcal{H} \subset \mathcal D$}
        \State Roll out the policy $\pi_\theta=(\pi^{\mathcal X}_\phi,\pi^{\mathcal B}_\kappa)$ on $\mathcal{H}$
        \State Compute costs, value estimates $V_\psi$, and advantages $\widehat A_t$ using GAE

        \Statex
        \State \textbf{Continuous update}
        \State Take one step on $\kappa$ using all trajectories in $\mathcal{H}$:
        \Statex \hspace{\algorithmicindent} -- \textbf{HPOFull}: the full mixed estimator in \eqref{eq:grad-kappa}, with SF terms weighted by $\widehat A_t$;
        \Statex \hspace{\algorithmicindent} -- \textbf{HPONoCross}: only the PW component of \eqref{eq:grad-kappa}.

        \Statex
        \State \textbf{Discrete PPO updates}
        \For{PPO epoch $=1,\dots,\texttt{max\_ppo\_epochs}$}
            \State Partition $\mathcal{H}$ into $\texttt{num\_minibatches}$ minibatches
            \For{each minibatch}
                \State Update $\phi$ using the PPO clipped objective based on \eqref{eq:grad-phi}
                \State Update value parameters $\psi$
            \EndFor
            \State Stop PPO updates early if the approximate KL divergence exceeds the target threshold
        \EndFor

    \EndFor
\EndFor
\end{algorithmic}
\end{algorithm}

\paragraph{Hybrid optimization scheme.}
We optimize the discrete and continuous components of the policy using different gradient estimators. 
The discrete policy parameters are updated using the gradient in \eqref{eq:grad-phi}, implemented with PPO-style clipped objectives as described below. 

For the continuous parameters, \textbf{HPOFull} uses the full mixed gradient estimator in \eqref{eq:grad-kappa}, while \textbf{HPONoCross} uses only the PW component of this estimator. 
For all SF terms (including the cross term), we apply standard PPO modifications, replacing $Q$-values with advantage estimates derived from a learned value function. 

For the PPO baseline, we use the SF gradient in \eqref{eq:sf-gradient} with the same PPO modifications.

\paragraph{Computation of gradient terms.}
We implement a differentiable hybrid simulator in \textsc{PyTorch}, enabling PW gradients through the continuous components via automatic differentiation.
For the PW component, we differentiate the realized trajectory cost with respect to $\kappa$, treating the sampled discrete actions as fixed (i.e., no differentiation through the sampling operation). 
The discrete gradient is computed using SF terms, by differentiating $\log \pi^{\mathcal X}_{\phi}(x_t \mid s_t(\kappa))$ and weighting by advantage estimates. 
For the cross term in \eqref{eq:grad-kappa}, we compute $\nabla_\kappa \log \pi^{\mathcal X}_{\phi}(x_t \mid s_t(\kappa))$ via automatic differentiation through the state trajectory, and multiply it by the corresponding advantage estimate.

\paragraph{Optimization.}
We use Adam with learning rate $10^{-3}$ and $\epsilon = 10^{-5}$. 
Gradients are clipped to have maximum norm $5$, applied independently to each network. 
In practice, for computational efficiency, each batch first receives a full-batch policy update to all parameters, even though Algorithm~\ref{alg:hpo-ppo} presents the discrete and continuous updates separately. We then perform additional PPO epochs in which the data is split into $4$ minibatches and only the discrete policy and value function are updated. For PPO, we follow the CleanRL implementation \cite{huang2022cleanrl}, and thus perform the same number of gradient steps for the continuous and discrete parameters. In Appendix~\ref{appendix:ppo-hyperparams} we consider variants of this scheme to more closely mirror HPO's update structure, but find that they degrade performance. Additionally, for PPO we consider a smaller learning rate of $10^{-4}$
since larger learning rates lead to unstable performance.

\paragraph{Advantage estimation.}
Advantages are computed using generalized advantage estimation (GAE) \cite{schulman2015high} with $\gamma = 0.99$ and $\lambda = 0.96$. 
Advantages are normalized to zero mean and unit variance within each minibatch. 

\paragraph{Policy and value losses.}
We consider a clipping parameter of $0.15$ for the PPO-style clipped objectives. 
The value function is trained using a squared-error loss, and weighted by a coefficient of $0.15$. 
An entropy bonus with initial coefficient $0.5$ is applied to the discrete policy, which is decayed linearly through training.

\paragraph{Network architecture and initialization.}
All networks use two hidden layers of size $512$ with $\tanh$ activations. 
Weights are initialized orthogonally, with gain $\sqrt{2}$ for hidden layers, smaller gains for policy outputs, and gain $1.0$ for the value head. 

\paragraph{Normalization.}
Costs are standardized by dividing by the batch standard deviation, without centering.
For the JRP, state inputs are normalized using a single exponentially weighted estimate of mean demand, computed by averaging demand observations across products and trajectories, with smoothing parameter $\beta=0.99$; continuous-action outputs are rescaled accordingly.

\paragraph{Early stopping.}
Training on a batch is terminated early if the approximate KL divergence exceeds a predefined threshold of 0.015.

\paragraph{Compute resources.}
All experiments were run on NVIDIA A40 GPUs (46 GB memory each), using CUDA 12.2.
Each independent trial was run on a single GPU. Most runs completed in under 60
minutes, with all runs completing in under 90 minutes. These figures are approximate,
as many experiments were run concurrently on a shared cluster without optimizing for
wall-clock time.

\section{Specifications and results for numerical experiments}
\label{appendix:results}

This appendix provides full specifications for the numerical experiments in
Section~\ref{sec:numerical-exps}, along with additional results and hyperparameter details.
Appendix~\ref{appendix:jrp} covers the JRP setting, while Appendix~\ref{appendix:lqr-results}
covers the S-LQR setting, including a validation against Riccati-equation baselines.
Appendix~\ref{appendix:gradient-alignment} provides the full specification of the
gradient-quality experiments of Section \ref{sec:gradient-quality-exps}, including policy construction, gradient computation, metric
definitions, aggregation procedures, and additional results.
Finally, Appendix~\ref{appendix:ppo-hyperparams} reports a hyperparameter robustness 
study for PPO, showing that HPO's advantage persists across a range of PPO configurations.

\subsection{JRP setting (Example~\ref{ex:jrp})}
\label{appendix:jrp}

\textbf{Setting specifications.} We consider Example~\ref{ex:jrp} with heterogeneous 
products. Each product's underage cost, holding cost, and mean demand are sampled 
independently and uniformly from $[6.3, 11.7]$, $[0.7, 1.3]$, and $[6.0, 14.0]$, 
respectively, and are fixed across trajectories. Lead times are set to $L = 2$ periods 
for all products. Demand follows a Poisson distribution with mean given by the sampled 
value. We set the fixed cost to $K = 64p$ to keep the per-product fixed cost constant, 
and vary the number of products $p \in \{1, 3, 5, 10, 20, 30, 40, 50, 60\}$. We consider 
trajectories of $100$ periods and report per-store per-period costs, omitting the first 
and last $20$ periods to approximate the long-run average cost in a stationary system. 
The initial inventory state is set to zero. We consider batch sizes of $B \in \{16, 64\}$ 
trajectories and fix the total number of policy updates to $1600$ by scaling the number 
of full-dataset training iterations as $1600 \cdot B / 1024$. We run $10$ independent 
trials per configuration.

\textbf{Additional specifications.} The policy input is the store inventory vector, 
normalized by a scalar mean demand estimate computed via an exponentially weighted 
moving average with parameter $0.99$, shared across all stores and trajectories. The mean demand estimate and the fixed cost normalized by it are also provided as inputs; while not strictly necessary under i.i.d. demand with a fixed cost, we do so for generality. Network 
outputs are rescaled by the mean demand estimate before being applied to the 
environment. Cost parameters (underage and holding costs) are not included as inputs.

\textbf{Results.} Tables~\ref{tab:jrp-results-16} and~\ref{tab:jrp-results-64} report 
performance metrics for \textbf{HPOFull}, \textbf{HPONoCross}, and PPO across product counts, for batch 
sizes of $16$ and $64$ trajectories, respectively. All metrics consider performance on the test set. All three algorithms perform 
comparably for $p \leq 20$. Beyond this, the effect of batch size becomes pronounced. 
For $B = 16$, PPO degrades sharply from $p = 30$ onward, with mean losses roughly 
doubling relative to HPO by $p = 50$ and standard deviations reaching $4\%$ of the 
mean, reflecting increasing instability. For $B = 64$, PPO remains competitive through 
$p = 40$ but deteriorates at $p = 50$. For $p=60$, PPO's mean loss is roughly $45\%$ 
higher than that of \textbf{HPOFull}. In contrast, both HPO variants maintain stable performance across all product counts and both batch sizes, with standard deviations remaining well below those of PPO throughout.

Comparing \textbf{HPOFull} and \textbf{HPONoCross}, differences are small and inconsistent across most 
settings, with neither variant consistently dominating the other. The most notable 
discrepancy appears at $B = 64$, $p = 60$, where \textbf{HPONoCross}'s mean exceeds that of \textbf{HPOFull} by roughly $7\%$, though both remain far below PPO. Overall, the 
two variants perform similarly, suggesting that the cross term typically provides limited 
additional signal in this setting.

\begin{table}[ht]
  \centering
  \small
  \caption{JRP test loss across product counts for $B=16$ trajectories per batch. Mean $\pm$ std and median over 10 independent trials.}
  \label{tab:jrp-results-16}
  \begin{tabular}{lrrrrrr}
    \toprule
     & \multicolumn{2}{c}{\textbf{HPOFull}} & \multicolumn{2}{c}{\textbf{HPONoCross}} & \multicolumn{2}{c}{\textbf{PPO}} \\
    \cmidrule(lr){2-3} \cmidrule(lr){4-5} \cmidrule(lr){6-7}
    \textbf{$p$} & \textbf{Mean $\pm$ Std} & \textbf{Median} & \textbf{Mean $\pm$ Std} & \textbf{Median} & \textbf{Mean $\pm$ Std} & \textbf{Median} \\
    \midrule
    1 & $34.96 \pm 0.04$ & $34.97$ & $34.97 \pm 0.05$ & $34.95$ & $34.94 \pm 0.03$ & $34.93$ \\
    3 & $37.24 \pm 0.08$ & $37.21$ & $37.27 \pm 0.07$ & $37.29$ & $37.18 \pm 0.06$ & $37.20$ \\
    5 & $38.82 \pm 0.12$ & $38.80$ & $38.77 \pm 0.16$ & $38.76$ & $38.56 \pm 0.05$ & $38.56$ \\
    10 & $38.79 \pm 0.10$ & $38.83$ & $38.85 \pm 0.16$ & $38.86$ & $38.68 \pm 0.07$ & $38.67$ \\
    20 & $39.08 \pm 0.12$ & $39.09$ & $39.26 \pm 0.16$ & $39.24$ & $39.23 \pm 0.08$ & $39.22$ \\
    30 & $39.43 \pm 0.38$ & $39.33$ & $39.54 \pm 0.27$ & $39.53$ & $51.67 \pm 2.87$ & $51.85$ \\
    40 & $40.40 \pm 0.72$ & $40.28$ & $40.82 \pm 0.49$ & $40.98$ & $70.37 \pm 1.92$ & $69.73$ \\
    50 & $43.47 \pm 2.19$ & $42.77$ & $42.95 \pm 1.43$ & $42.44$ & $81.36 \pm 3.27$ & $80.18$ \\
    60 & $50.28 \pm 3.12$ & $49.71$ & $49.92 \pm 1.75$ & $50.39$ & $89.29 \pm 2.76$ & $88.59$ \\
    \bottomrule
  \end{tabular}
\end{table}

\begin{table}[ht]
  \centering
  \small
  \caption{JRP test loss across product counts for $B=64$ trajectories per batch. Mean $\pm$ std and median over 10 independent trials.}
  \label{tab:jrp-results-64}
  \begin{tabular}{lrrrrrr}
    \toprule
     & \multicolumn{2}{c}{\textbf{HPOFull}} & \multicolumn{2}{c}{\textbf{HPONoCross}} & \multicolumn{2}{c}{\textbf{PPO}} \\
    \cmidrule(lr){2-3} \cmidrule(lr){4-5} \cmidrule(lr){6-7}
    \textbf{$p$} & \textbf{Mean $\pm$ Std} & \textbf{Median} & \textbf{Mean $\pm$ Std} & \textbf{Median} & \textbf{Mean $\pm$ Std} & \textbf{Median} \\
    \midrule
    1 & $34.94 \pm 0.02$ & $34.93$ & $34.95 \pm 0.03$ & $34.96$ & $34.93 \pm 0.02$ & $34.94$ \\
    3 & $37.09 \pm 0.04$ & $37.10$ & $37.11 \pm 0.04$ & $37.10$ & $37.09 \pm 0.03$ & $37.09$ \\
    5 & $38.56 \pm 0.08$ & $38.58$ & $38.54 \pm 0.06$ & $38.55$ & $38.45 \pm 0.02$ & $38.45$ \\
    10 & $38.60 \pm 0.04$ & $38.61$ & $38.70 \pm 0.14$ & $38.68$ & $38.52 \pm 0.02$ & $38.52$ \\
    20 & $38.83 \pm 0.04$ & $38.82$ & $38.98 \pm 0.33$ & $38.87$ & $38.85 \pm 0.01$ & $38.86$ \\
    30 & $38.46 \pm 0.06$ & $38.44$ & $38.81 \pm 0.33$ & $38.72$ & $38.51 \pm 0.03$ & $38.50$ \\
    40 & $38.87 \pm 0.29$ & $38.76$ & $39.66 \pm 0.55$ & $39.46$ & $38.80 \pm 0.02$ & $38.80$ \\
    50 & $39.66 \pm 0.71$ & $39.97$ & $40.10 \pm 0.50$ & $40.00$ & $48.75 \pm 2.20$ & $48.89$ \\
    60 & $40.70 \pm 0.98$ & $40.34$ & $43.37 \pm 3.17$ & $41.94$ & $59.18 \pm 1.62$ & $59.09$ \\
    \bottomrule
  \end{tabular}
\end{table}

Then, Figure~\ref{fig:scaling-jrp-targets} reports the median number of policy updates required to reach a target validation performance gap, for batch sizes $B = 16$ and $B = 64$ and target gaps ranging from $5\%$ to $30\%$. For a fixed batch size, HPO consistently reaches the target in fewer updates than PPO, with the gap widening as the continuous action dimension $p$ grows. When comparing across batch sizes, PPO with $B = 64$ is faster than HPO with $B = 16$ at small $p$, as the larger batch compensates for PPO's weaker gradient estimates. However, PPO's convergence speed degrades more steeply with $p$, and for $p\geq 40$ HPO with $B = 16$ converges faster than PPO with $B = 64$. For the stricter targets ($5\%$ and $10\%$), PPO frequently fails to reach the target within the training budget at larger $p$, as indicated by the truncated curves, while HPO converges more reliably, especially for $B=64$.

\begin{figure}
\centering
\begin{subfigure}{.47\textwidth}
  \centering
  \includegraphics[width=1\linewidth]{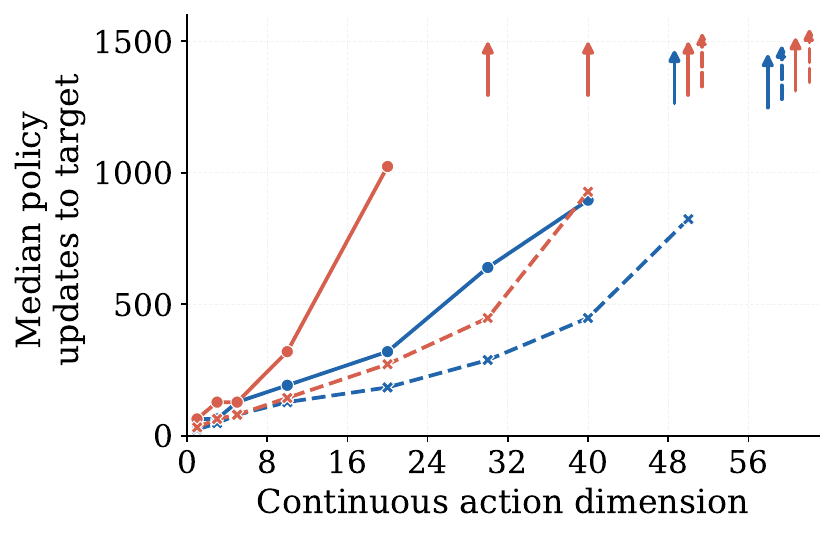}
  \caption{Target gap: $5\%$.}
  \label{fig:scaling-jrp-5}
\end{subfigure}%
\hspace{0.02\textwidth}
\begin{subfigure}{.47\textwidth}
  \centering
  \includegraphics[width=1\linewidth]{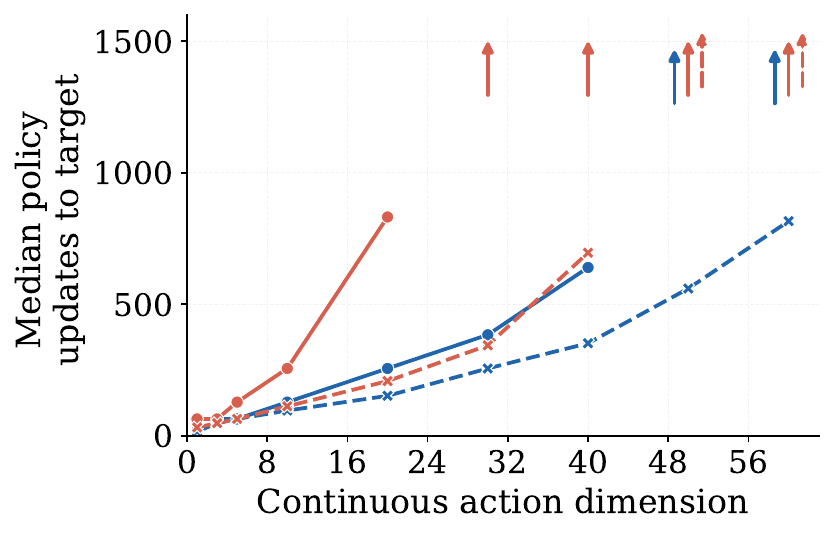}
  \caption{Target gap: $10\%$.}
  \label{fig:scaling-jrp-10}
\end{subfigure}
\vspace{0.5em}
\begin{subfigure}{.47\textwidth}
  \centering
  \includegraphics[width=1\linewidth]{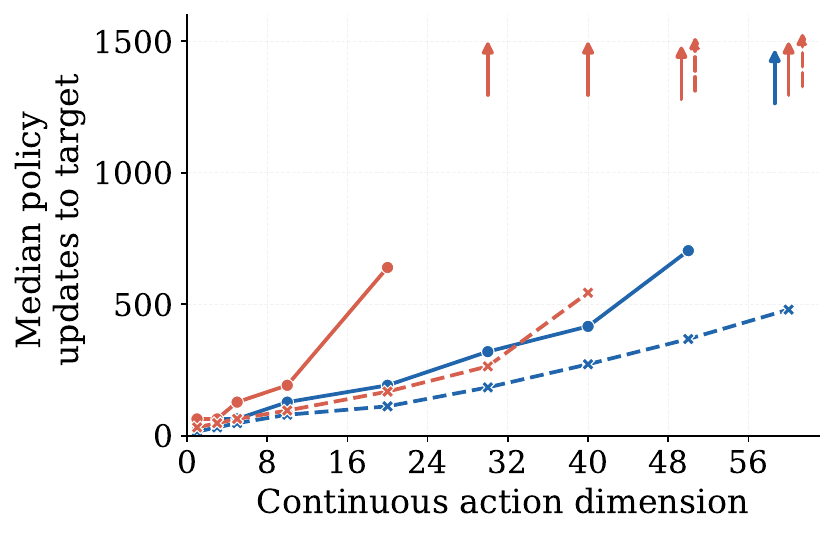}
  \caption{Target gap: $20\%$.}
  \label{fig:scaling-jrp-20}
\end{subfigure}%
\hspace{0.02\textwidth}
\begin{subfigure}{.47\textwidth}
  \centering
  \includegraphics[width=1\linewidth]{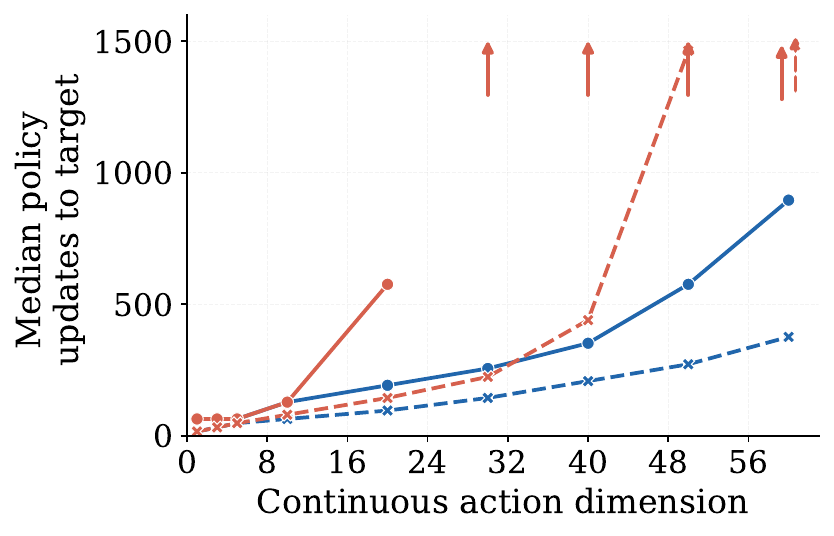}
  \caption{Target gap: $30\%$.}
  \label{fig:scaling-jrp-30}
\end{subfigure}
\caption{Median number of policy updates required to reach a target validation-performance gap in the joint replenishment problem, as the continuous action dimension $p$ varies. Each panel uses a different target gap. Blue curves correspond to HPO and red curves to PPO. Solid lines use batch size $B=16$ trajectories, while dashed lines use batch size $B=64$ trajectories. Arrows indicate that the median run did not converge for the given configuration.}
\label{fig:scaling-jrp-targets}
\end{figure}

\subsection{S-LQR setting (Example \ref{ex:slqr}) \label{appendix:lqr-results}}

\textbf{Setting specifications.} We consider Example~\ref{ex:slqr} with matrices 
constructed so that the system is unstable in the absence of a control. The state 
dimension and continuous action dimension are set to a common value 
$p \in \{1, 3, 5, 7, 10, 13, 16, 20, 24, 32, 40, 48\}$, and the number of modes to 
$J \in \{2, 4, 6, 8\}$. We consider trajectories of $20$ periods, as the control 
typically drives the state to a negligible-cost regime well within this horizon, and 
report total cost over the $20$ periods. We consider $500$ training iterations with a 
batch size of $128$ trajectories, corresponding to $500 \cdot 1024/128 = 4000$ policy 
updates. We run $10$ independent trials per configuration.

For each configuration, we generate a shared dynamics matrix $A \in \mathbb{R}^{p 
\times p}$ and mode-dependent control matrices $\{B_j\}_{j=1}^J$ as follows. The 
dynamics matrix is defined as
\[
A = 1.10\, I + 0.05\, N,
\]
where $I \in \mathbb{R}^{p \times p}$ is the identity matrix and $N \in \mathbb{R}^{p 
\times p}$ is a symmetric random matrix constructed by sampling entries i.i.d.\ from 
$\mathcal{N}(0,1)$, symmetrizing via $N \leftarrow (N + N^\top)/2$, and normalizing so 
that $\max_{i,j} |N_{ij}| = 1$. The scalar $1.10 > 1$ ensures instability in the 
absence of a control, while the perturbation $0.05\,N$ introduces coupling across state 
dimensions without significantly altering the overall dynamics. The state coordinates 
are partitioned into $J$ disjoint groups $\{G_j\}_{j=1}^J$ of approximately equal size. 
For each mode $j$, the control matrix $B_j \in \mathbb{R}^{p \times p}$ is diagonal 
with entries
\[
(B_j)_{kk} =
\begin{cases}
1.0, & \text{if } k \in G_j,\\
0.15, & \text{otherwise,}
\end{cases}
\]
ensuring that each mode provides strong control over a distinct subset of state 
dimensions and weaker control elsewhere, inducing state-dependent optimal mode selection 
while permitting controls that can stabilize the system. The cost matrices are fixed 
across modes and given by $Q_j = I$ and $R_j = 0.1\,I$ for all $j$, corresponding to a 
quadratic penalty on state deviation and a relatively low penalty on control effort. 
This encourages aggressive stabilization and ensures that problem difficulty scales 
primarily with $p$. Each state coordinate is initialized independently and uniformly in 
$[-10/\sqrt{p},\, 10/\sqrt{p}]$ and reset independently across trajectories, keeping 
the expected squared distance from the origin constant across dimensions.

\textbf{Additional specifications.} The raw state vector is passed directly as input to 
all networks without normalization, since the initial state scaling already keeps input 
magnitudes comparable across dimensions.

\textbf{Results.} Table~\ref{tab:lqr-results} reports performance metrics for 
\textbf{HPOFull}, \textbf{HPONoCross}, and PPO across continuous action dimensions $p$ and mode counts 
$J$. Both HPO variants consistently outperform PPO across all configurations, with the 
gap growing steadily as $p$ increases. This pattern holds across all values of $J$, 
confirming that the continuous action dimension is the key driver of performance 
differences. For instance, at $p=48$, $J=2$, PPO achieves 
a mean loss of $79.36$ against \textbf{HPOFull}'s $53.89$, a gap of roughly $47\%$. At small 
dimensions ($p \leq 7$), the gap is modest, typically below $1\%$, but grows 
consistently with $p$. PPO's standard deviations remain low throughout, indicating 
stable but suboptimal convergence. In contrast, both HPO variants maintain low variance 
and lower losses across all $p$ and $J$, highlighting the practical benefits of the 
mixed gradient estimator in high-dimensional continuous action spaces.

Comparing \textbf{HPOFull} and \textbf{HPONoCross}, dropping the cross term provides a consistent but 
modest benefit that grows with $p$. At small dimensions ($p \leq 13$), the two variants 
perform nearly identically across all $J$ values, with differences below $0.01$. From 
$p = 16$ onward, \textbf{HPONoCross} consistently outperforms \textbf{HPOFull}, and the gap widens 
steadily: at $p = 48$, \textbf{HPONoCross} outperforms \textbf{HPOFull} by roughly $0.6$--$1.2$ units 
across all $J$, representing a relative improvement of approximately $0.6\%$--$1.5\%$. 
While these differences are small in absolute terms, their consistency across all $J$ 
values and their systematic growth with $p$ suggest that the cross term becomes 
increasingly detrimental as the continuous action space grows, which we conjecture is 
due to increased variance.

\begin{table}[ht]
  \centering
  \small
  \caption{S-LQR test loss across action dimensions $p$ and modes $J$. Mean $\pm$ std and median over 10 independent trials.}
  \label{tab:lqr-results}
  \begin{tabular}{llrrrrrr}
    \toprule
     &  & \multicolumn{2}{c}{\textbf{HPOFull}} & \multicolumn{2}{c}{\textbf{HPONoCross}} & \multicolumn{2}{c}{\textbf{PPO}} \\
    \cmidrule(lr){3-4} \cmidrule(lr){5-6} \cmidrule(lr){7-8}
    \textbf{$p$} & \textbf{$J$} & \textbf{Mean $\pm$ Std} & \textbf{Median} & \textbf{Mean $\pm$ Std} & \textbf{Median} & \textbf{Mean $\pm$ Std} & \textbf{Median} \\
    \midrule
    1 & $2$ & $36.82 \pm 0.00$ & $36.82$ & $36.82 \pm 0.00$ & $36.82$ & $36.84 \pm 0.00$ & $36.84$ \\
    1 & $4$ & $36.82 \pm 0.00$ & $36.82$ & $36.82 \pm 0.00$ & $36.82$ & $36.84 \pm 0.00$ & $36.84$ \\
    1 & $6$ & $37.50 \pm 0.00$ & $37.50$ & $37.50 \pm 0.00$ & $37.50$ & $37.52 \pm 0.00$ & $37.52$ \\
    1 & $8$ & $37.50 \pm 0.00$ & $37.50$ & $37.50 \pm 0.00$ & $37.50$ & $37.52 \pm 0.00$ & $37.52$ \\
    3 & $2$ & $64.17 \pm 0.00$ & $64.17$ & $64.17 \pm 0.00$ & $64.17$ & $64.33 \pm 0.04$ & $64.31$ \\
    3 & $4$ & $84.83 \pm 0.00$ & $84.83$ & $84.83 \pm 0.00$ & $84.83$ & $85.08 \pm 0.02$ & $85.08$ \\
    3 & $6$ & $79.42 \pm 0.00$ & $79.42$ & $79.42 \pm 0.00$ & $79.42$ & $79.66 \pm 0.05$ & $79.65$ \\
    3 & $8$ & $79.42 \pm 0.00$ & $79.42$ & $79.42 \pm 0.00$ & $79.42$ & $79.65 \pm 0.04$ & $79.64$ \\
    5 & $2$ & $63.92 \pm 0.00$ & $63.92$ & $63.92 \pm 0.00$ & $63.92$ & $64.25 \pm 0.04$ & $64.26$ \\
    5 & $4$ & $85.01 \pm 0.00$ & $85.01$ & $85.01 \pm 0.00$ & $85.01$ & $85.56 \pm 0.06$ & $85.56$ \\
    5 & $6$ & $94.36 \pm 0.00$ & $94.36$ & $94.36 \pm 0.00$ & $94.36$ & $95.00 \pm 0.03$ & $95.01$ \\
    5 & $8$ & $94.36 \pm 0.00$ & $94.36$ & $94.36 \pm 0.00$ & $94.36$ & $94.99 \pm 0.06$ & $95.01$ \\
    7 & $2$ & $67.25 \pm 0.00$ & $67.25$ & $67.25 \pm 0.00$ & $67.25$ & $67.76 \pm 0.04$ & $67.77$ \\
    7 & $4$ & $89.03 \pm 0.00$ & $89.03$ & $89.03 \pm 0.00$ & $89.03$ & $89.82 \pm 0.04$ & $89.84$ \\
    7 & $6$ & $89.24 \pm 0.00$ & $89.24$ & $89.24 \pm 0.00$ & $89.24$ & $90.10 \pm 0.04$ & $90.10$ \\
    7 & $8$ & $99.02 \pm 0.00$ & $99.02$ & $99.02 \pm 0.00$ & $99.02$ & $99.97 \pm 0.05$ & $99.97$ \\
    8 & $2$ & $49.05 \pm 0.06$ & $49.07$ & $48.95 \pm 0.06$ & $48.93$ & $70.66 \pm 0.02$ & $70.67$ \\
    8 & $4$ & $79.51 \pm 0.04$ & $79.51$ & $79.28 \pm 0.03$ & $79.28$ & $93.33 \pm 0.05$ & $93.33$ \\
    8 & $6$ & $79.20 \pm 0.04$ & $79.21$ & $78.93 \pm 0.02$ & $78.93$ & $93.17 \pm 0.03$ & $93.18$ \\
    8 & $8$ & $95.90 \pm 0.03$ & $95.90$ & $95.51 \pm 0.01$ & $95.51$ & $103.81 \pm 0.10$ & $103.80$ \\
    10 & $2$ & $76.39 \pm 0.00$ & $76.39$ & $76.39 \pm 0.00$ & $76.39$ & $77.26 \pm 0.02$ & $77.26$ \\
    10 & $4$ & $86.52 \pm 0.00$ & $86.52$ & $86.52 \pm 0.00$ & $86.52$ & $87.66 \pm 0.04$ & $87.68$ \\
    10 & $6$ & $95.94 \pm 0.00$ & $95.94$ & $95.94 \pm 0.00$ & $95.94$ & $97.34 \pm 0.11$ & $97.37$ \\
    10 & $8$ & $95.94 \pm 0.00$ & $95.94$ & $95.94 \pm 0.00$ & $95.94$ & $97.34 \pm 0.06$ & $97.34$ \\
    13 & $2$ & $71.12 \pm 0.00$ & $71.12$ & $71.12 \pm 0.00$ & $71.12$ & $72.29 \pm 0.12$ & $72.28$ \\
    13 & $4$ & $87.34 \pm 0.00$ & $87.34$ & $87.34 \pm 0.00$ & $87.34$ & $88.80 \pm 0.05$ & $88.79$ \\
    13 & $6$ & $94.32 \pm 0.00$ & $94.32$ & $94.31 \pm 0.00$ & $94.31$ & $95.99 \pm 0.07$ & $96.01$ \\
    13 & $8$ & $100.00 \pm 0.00$ & $100.00$ & $100.00 \pm 0.00$ & $100.00$ & $101.93 \pm 0.08$ & $101.95$ \\
    16 & $2$ & $52.14 \pm 0.06$ & $52.12$ & $52.01 \pm 0.08$ & $52.03$ & $76.36 \pm 0.06$ & $76.33$ \\
    16 & $4$ & $80.18 \pm 0.05$ & $80.17$ & $79.81 \pm 0.05$ & $79.80$ & $93.28 \pm 0.03$ & $93.28$ \\
    16 & $6$ & $90.41 \pm 0.03$ & $90.42$ & $89.90 \pm 0.02$ & $89.90$ & $101.54 \pm 0.05$ & $101.53$ \\
    16 & $8$ & $93.63 \pm 0.03$ & $93.63$ & $93.12 \pm 0.02$ & $93.12$ & $102.38 \pm 0.04$ & $102.40$ \\
    20 & $2$ & $73.77 \pm 0.01$ & $73.77$ & $73.77 \pm 0.01$ & $73.77$ & $75.75 \pm 0.29$ & $75.70$ \\
    20 & $4$ & $92.98 \pm 0.00$ & $92.98$ & $92.98 \pm 0.00$ & $92.98$ & $95.56 \pm 0.07$ & $95.56$ \\
    20 & $6$ & $94.85 \pm 0.00$ & $94.85$ & $94.85 \pm 0.00$ & $94.85$ & $97.44 \pm 0.17$ & $97.40$ \\
    20 & $8$ & $98.68 \pm 0.00$ & $98.68$ & $98.67 \pm 0.00$ & $98.67$ & $101.46 \pm 0.17$ & $101.42$ \\
    24 & $2$ & $52.65 \pm 0.11$ & $52.68$ & $52.37 \pm 0.08$ & $52.38$ & $76.00 \pm 0.06$ & $76.03$ \\
    24 & $4$ & $80.82 \pm 0.10$ & $80.79$ & $80.27 \pm 0.03$ & $80.26$ & $94.70 \pm 0.09$ & $94.69$ \\
    24 & $6$ & $90.97 \pm 0.07$ & $90.96$ & $90.35 \pm 0.02$ & $90.35$ & $101.80 \pm 0.23$ & $101.70$ \\
    24 & $8$ & $97.14 \pm 0.06$ & $97.13$ & $96.43 \pm 0.02$ & $96.43$ & $106.05 \pm 0.15$ & $106.01$ \\
    32 & $2$ & $53.19 \pm 0.09$ & $53.18$ & $52.86 \pm 0.08$ & $52.85$ & $81.43 \pm 6.64$ & $78.01$ \\
    32 & $4$ & $81.45 \pm 0.10$ & $81.45$ & $80.80 \pm 0.07$ & $80.81$ & $94.89 \pm 0.38$ & $94.83$ \\
    32 & $6$ & $89.24 \pm 0.08$ & $89.24$ & $88.52 \pm 0.03$ & $88.52$ & $100.99 \pm 0.19$ & $101.04$ \\
    32 & $8$ & $96.97 \pm 0.05$ & $96.97$ & $96.22 \pm 0.03$ & $96.21$ & $107.25 \pm 0.46$ & $107.06$ \\
    40 & $2$ & $53.33 \pm 0.12$ & $53.30$ & $52.82 \pm 0.06$ & $52.81$ & $77.22 \pm 0.30$ & $77.08$ \\
    40 & $4$ & $82.26 \pm 0.10$ & $82.30$ & $81.31 \pm 0.05$ & $81.30$ & $97.43 \pm 0.11$ & $97.44$ \\
    40 & $6$ & $92.09 \pm 0.09$ & $92.06$ & $91.17 \pm 0.04$ & $91.17$ & $106.11 \pm 0.59$ & $105.86$ \\
    40 & $8$ & $96.37 \pm 0.10$ & $96.35$ & $95.47 \pm 0.04$ & $95.46$ & $107.99 \pm 0.43$ & $108.14$ \\
    48 & $2$ & $53.89 \pm 0.11$ & $53.89$ & $53.24 \pm 0.04$ & $53.24$ & $79.36 \pm 0.27$ & $79.29$ \\
    48 & $4$ & $84.09 \pm 0.11$ & $84.08$ & $82.87 \pm 0.04$ & $82.89$ & $103.12 \pm 3.82$ & $101.53$ \\
    48 & $6$ & $92.98 \pm 0.07$ & $92.99$ & $91.85 \pm 0.04$ & $91.84$ & $107.19 \pm 0.47$ & $106.97$ \\
    48 & $8$ & $97.49 \pm 0.04$ & $97.50$ & $96.41 \pm 0.03$ & $96.42$ & $109.86 \pm 0.73$ & $109.51$ \\
    \bottomrule
  \end{tabular}
\end{table}

Then, Figures~\ref{fig:scaling-lqr-targets} and~\ref{fig:scaling-lqr-v2-targets} report 
the median number of policy updates to reach target gaps ranging from $5\%$ to $30\%$, 
for $J \in \{2, 8\}$ and $J \in \{4, 6\}$, respectively. The performance gap between 
HPO and PPO is drastic and consistent across all values of $J$: PPO requires 
substantially more updates than HPO at moderate $p$, and frequently fails to converge 
within the training budget at larger $p$. HPO, by contrast, converges reliably across 
all configurations, with the required number of updates increasing slowly with $p$.

\begin{figure}
\centering
\begin{subfigure}{.47\textwidth}
  \centering
  \includegraphics[width=1\linewidth]{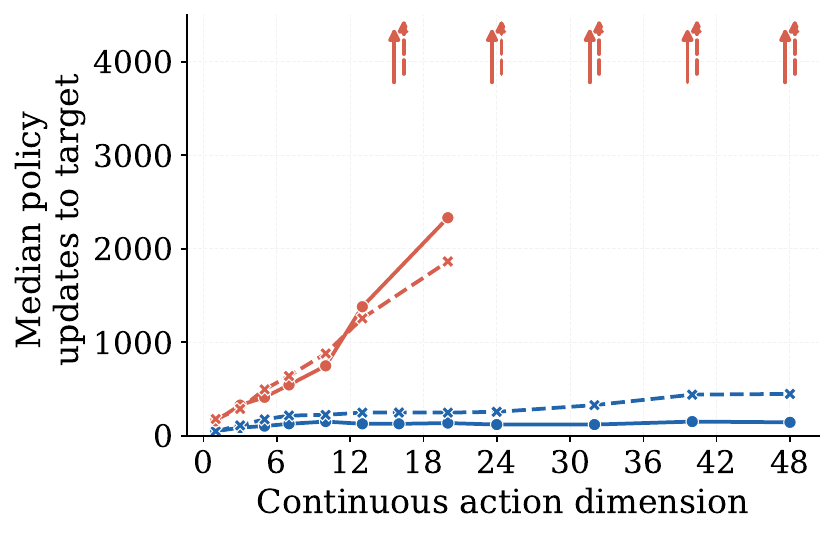}
  \caption{Target gap: $5\%$.}
  \label{fig:scaling-lqr-5}
\end{subfigure}%
\hspace{0.02\textwidth}
\begin{subfigure}{.47\textwidth}
  \centering
  \includegraphics[width=1\linewidth]{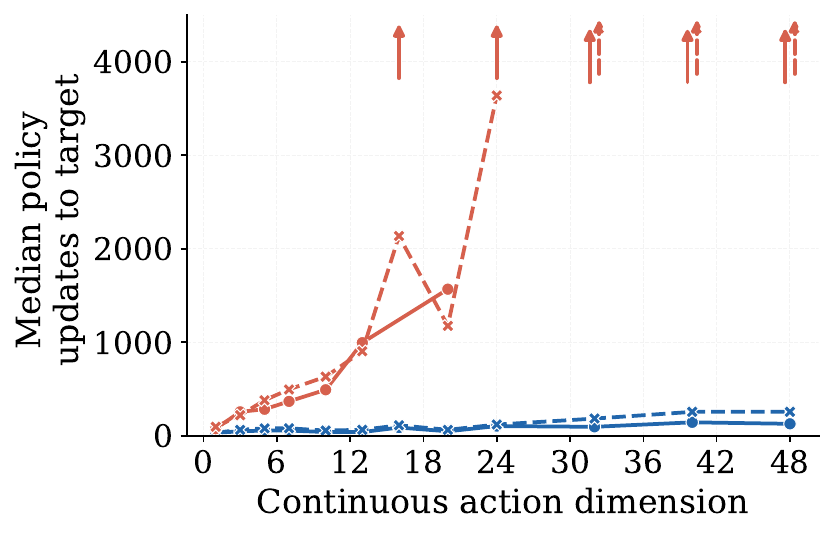}
  \caption{Target gap: $10\%$.}
  \label{fig:scaling-lqr-10}
\end{subfigure}
\vspace{0.5em}
\begin{subfigure}{.47\textwidth}
  \centering
  \includegraphics[width=1\linewidth]{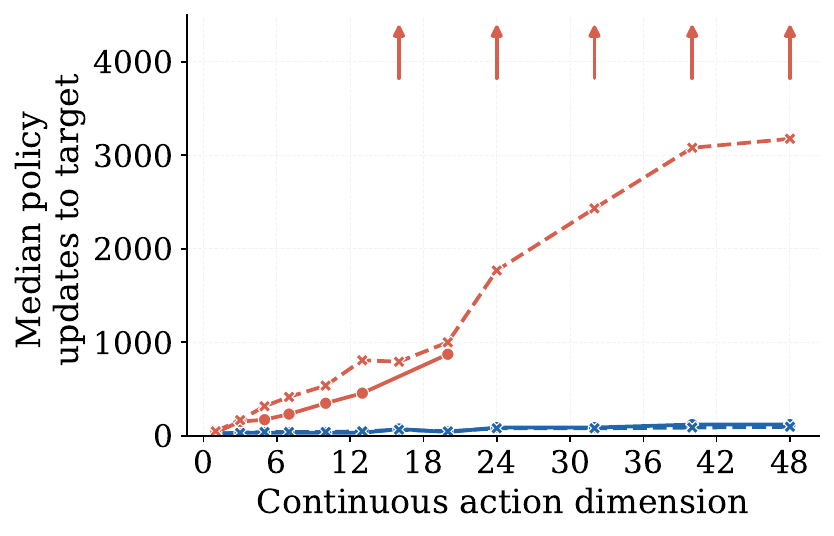}
  \caption{Target gap: $20\%$.}
  \label{fig:scaling-lqr-20}
\end{subfigure}%
\hspace{0.02\textwidth}
\begin{subfigure}{.47\textwidth}
  \centering
  \includegraphics[width=1\linewidth]{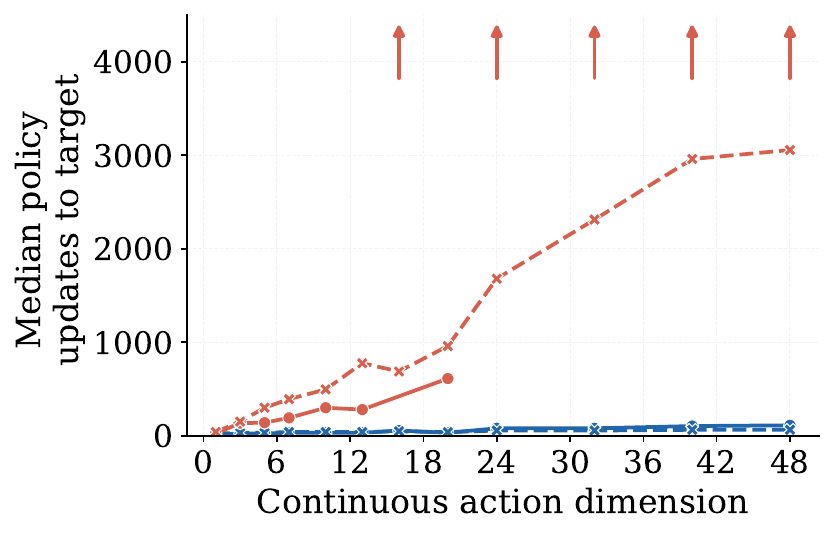}
  \caption{Target gap: $30\%$.}
  \label{fig:scaling-lqr-30}
\end{subfigure}
\caption{Median number of policy updates required to reach a target validation-performance gap
in the switched LQR problem, as the continuous action dimension $p$ varies. Each panel uses a
different target gap. Blue curves correspond to HPO and red curves to PPO. Solid lines use
$J=2$ modes, while dashed lines use $J=8$ modes. Arrows indicate that the median did not
converge for the given configuration.}
\label{fig:scaling-lqr-targets}
\end{figure}
\begin{figure}
\centering
\begin{subfigure}{.47\textwidth}
  \centering
  \includegraphics[width=1\linewidth]{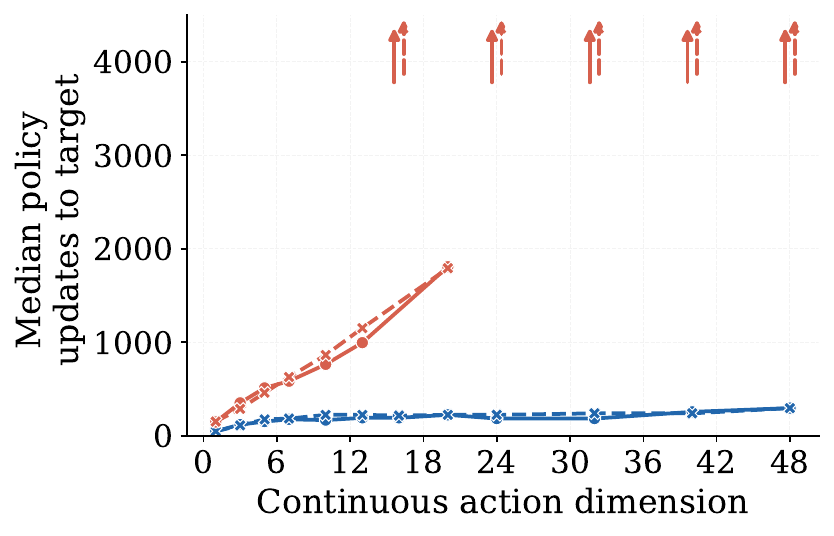}
  \caption{Target gap: $5\%$.}
  \label{fig:scaling-lqr-v2-5}
\end{subfigure}%
\hspace{0.02\textwidth}
\begin{subfigure}{.47\textwidth}
  \centering
  \includegraphics[width=1\linewidth]{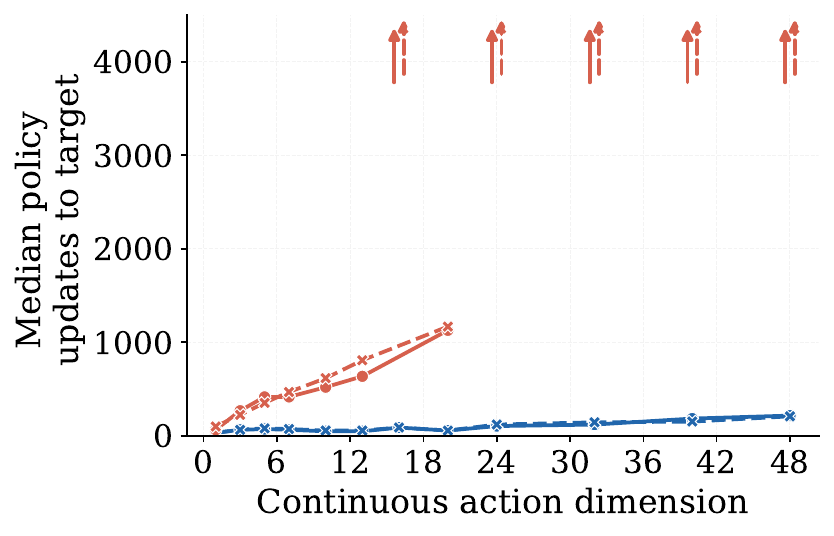}
  \caption{Target gap: $10\%$.}
  \label{fig:scaling-lqr-v2-10}
\end{subfigure}
\vspace{0.5em}
\begin{subfigure}{.47\textwidth}
  \centering
  \includegraphics[width=1\linewidth]{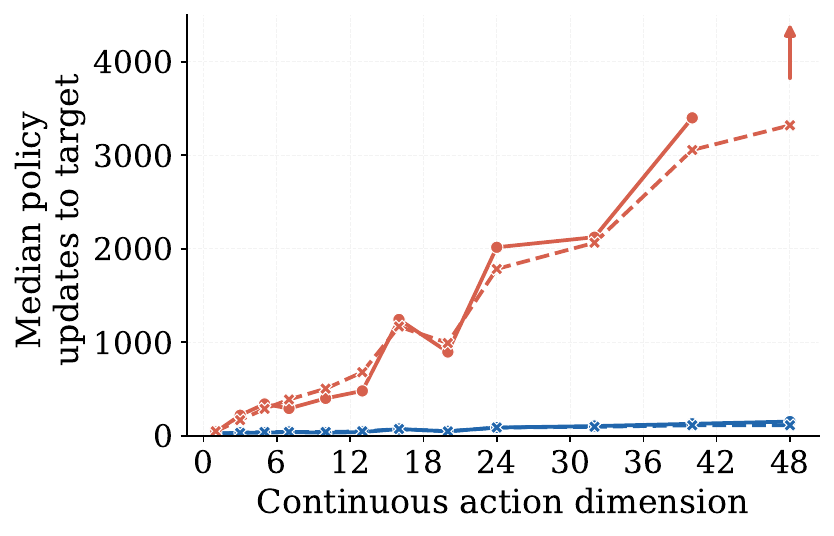}
  \caption{Target gap: $20\%$.}
  \label{fig:scaling-lqr-v2-20}
\end{subfigure}%
\hspace{0.02\textwidth}
\begin{subfigure}{.47\textwidth}
  \centering
  \includegraphics[width=1\linewidth]{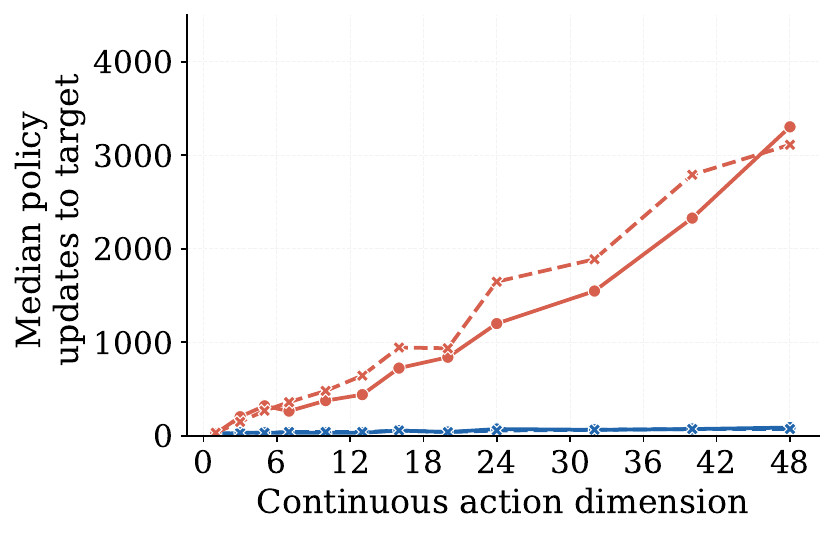}
  \caption{Target gap: $30\%$.}
  \label{fig:scaling-lqr-v2-30}
\end{subfigure}
\caption{Median number of policy updates required to reach a target validation-performance gap
in the switched LQR problem, as the continuous action dimension $p$ varies. Each panel uses a
different target gap. Blue curves correspond to HPO and red curves to PPO. Solid lines use
$J=4$ modes, while dashed lines use $J=6$ modes. Arrows indicate that the median did not
converge for the given configuration.}
\label{fig:scaling-lqr-v2-targets}
\end{figure}

\subsubsection{Performance validation.}
\label{appendix:lqr-performance-validation}
To further validate the quality of the learned policies, we compare HPO against a Riccati-based baseline. For a fixed mode, the Riccati equations yield the optimal linear feedback controller; we therefore consider a baseline that selects a single mode and applies the corresponding Riccati solution. When multiple modes are available, we report the performance of the best such single-mode controller.

We begin with the case $J=1$ for varying values of $p$, corresponding to the standard LQR problem, where the Riccati solution is globally optimal. This provides a ground-truth benchmark against which we can directly assess optimality. As shown in Figure~\ref{fig:hpo-riccati-j1}, HPO achieves near-optimal performance across all dimensions. While a slight degradation is observed as $p$ increases, it remains very modest, with relative gaps below $0.2\%$ even at $p=48$.

We additionally consider the S-LQR setting with $J=4$. In this case, the Riccati baseline corresponds to the best fixed-mode controller, and thus cannot exploit switching. Figure~\ref{fig:hpo-riccati-j4} shows that HPO improves upon this baseline, with gains of up to $13\%$ in some configurations. This indicates that HPO effectively learns to select appropriate modes when beneficial.

\begin{figure}
\centering
\begin{subfigure}{0.48\textwidth}
    \centering
    \includegraphics[width=\linewidth]{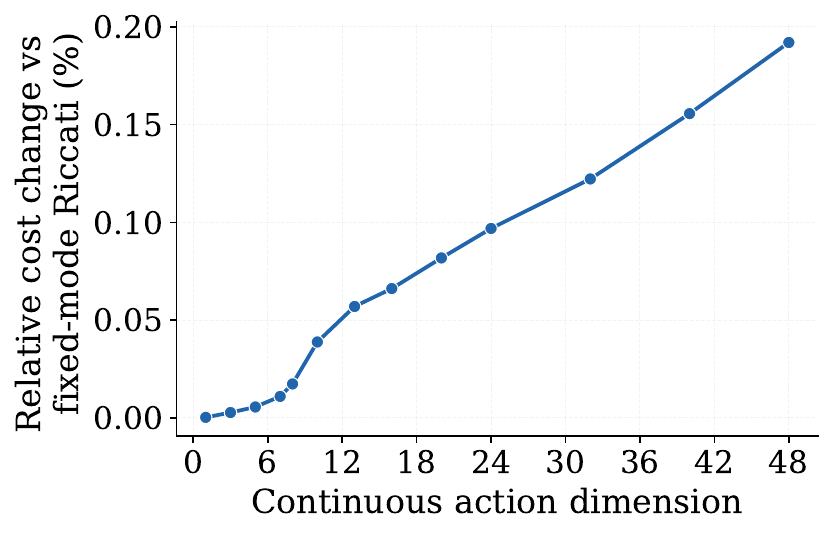}
    \caption{$J=1$ (standard LQR).}
    \label{fig:hpo-riccati-j1}
\end{subfigure}
\hfill
\begin{subfigure}{0.48\textwidth}
    \centering
    \includegraphics[width=\linewidth]{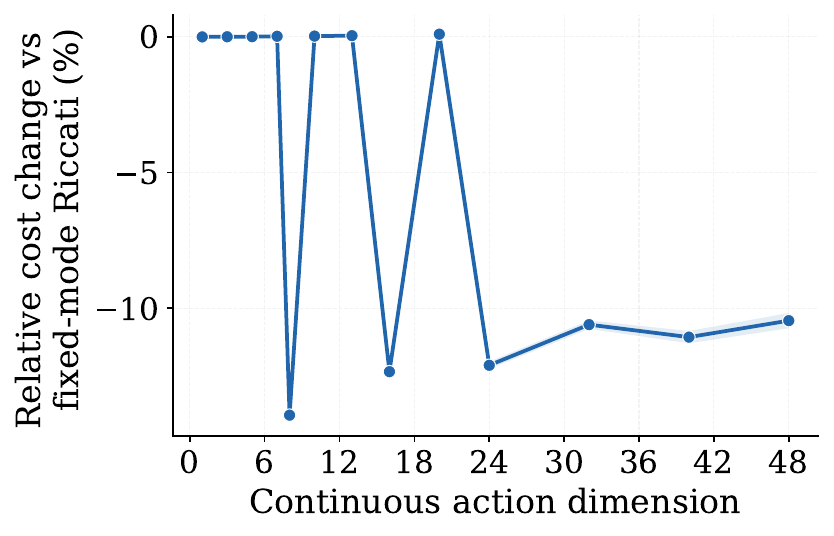}
    \caption{$J=4$ (switched LQR).}
    \label{fig:hpo-riccati-j4}
\end{subfigure}
\caption{Relative cost difference between HPO and Riccati-based baselines as a function of the continuous action dimension $p$. The relative cost difference is computed as the percentage change in cost achieved by HPO compared to a Riccati baseline \textbf{(lower is better)}. For $J=1$, the baseline is the optimal Riccati controller; for $J=4$, the baseline is the best single-mode Riccati controller among the available modes.}
\label{fig:hpo-vs-riccati}
\end{figure}

\subsection{Gradient Quality Experiments}
\label{appendix:gradient-alignment}

This appendix provides a complete specification of the procedure used to evaluate the quality of gradient estimators, as presented in Section \ref{sec:gradient-quality-exps}.

\paragraph{Setting.}
We consider a setting with $p$ identical products, with $p \in \{1,3,5,10,20,30,50\}$. All products have an underage $u=9$, holding cost $h=1$ and i.i.d.\ Poisson demand with mean $10$. Fixed costs are set to $K=64p$. We focus on this simplified setting to obtain more transparent insights. In particular, assuming identical products enables meaningful comparisons across product counts, whereas heterogeneity could introduce confounding effects in our analysis.

\paragraph{Policy construction.}
For each value of $p$, we train $3$ independent policies to near-optimality using \textbf{HPOFull}, with the hyperparameters defined in Appendix \ref{appendix:implementation-details}.
For each trained policy, we construct evaluation policies as follows. We first freeze the parameters of the continuous policy $\pi^{\mathcal{B}}_{\kappa}$ and modify its outputs by applying a constant shift of $2$ units. This avoids PW gradients having zero magnitude, which would render alignment metrics uninformative. We additionally add i.i.d.\ standard Normal noise to each output so that all gradient estimators are evaluated under a common stochastic policy.

We then reinitialize the discrete policy $\pi^{\mathcal{X}}_{\phi}$ and train it together with a value function, while keeping the continuous component fixed. To obtain a reliable critic, we first train the value function alone for several epochs, keeping the discrete policy fixed.
After this initialization phase, we jointly train the discrete policy and the value function. We use a larger learning rate for the value function so that it remains well-calibrated as the discrete policy evolves. We alternate between training epochs and gradient-estimation epochs. During gradient-estimation epochs, both policy components are frozen so that all gradient estimates correspond to the same fixed policy.

\paragraph{Gradient computation.}
We estimate the PW and cross terms using the reparameterization trick~\citep{kingma2013auto,rezende2014stochastic}.

To estimate all SF-type terms (including the discrete gradient and the cross term), we use Monte Carlo rollouts combined with a learned critic baseline, avoiding the bias introduced by PPO-style objectives. Specifically, whenever an action-value $Q_{\pi_{\theta}}(s_t,a_t)$ appears in the gradient expressions~\eqref{eq:grad-phi} and \eqref{eq:grad-kappa}, we replace it with the advantage $Q_{\pi_{\theta}}(s_t,a_t) - V_{\pi_{\theta}}(s_t)$.

The action-value is estimated via sampled cumulative costs, while $V_{\pi_{\theta}}$ is learned by the value network. This corresponds to a REINFORCE-style estimator with a baseline, and yields unbiased gradient estimates with reduced variance relative to vanilla REINFORCE. We refer to this gradient as SF for brevity.

\subsubsection{Batch-Level Gradient Estimators}

Fix a policy $\pi_\theta$. A batch-level gradient estimator with a batch size of $B$ trajectories is defined as
\[
g = \frac{1}{B} \sum_{i=1}^B \hat{g}^{(i)}
\]
where each $\hat{g}^{(i)}$ is a per-trajectory gradient estimate computed from an independent rollout.

We assume $g = \mu + \varepsilon$, where $\mu = \nabla_\theta J(\theta)$ denotes the true gradient and $\varepsilon$ is zero-mean noise with covariance matrix $\Sigma$. In practice, the true gradient $\mu$ is not directly observable, and we approximate it using the iteration-level average mixed gradient, which serves as a low-variance proxy.

All batch estimators are constructed from disjoint sets of trajectories and are therefore conditionally independent given the policy parameters.

\subsubsection{Metric Definitions}

Let $g$ and $h$ be independent batch-level gradient estimators with true gradient $\mu$. For each
metric, we form a Monte Carlo estimate by averaging the relevant quantity over $R$ independent
draws of $(g, h)$, and report the final metric as a function of this average as described below.

\paragraph{Signal.}
The individual-sample estimator $\langle g, h \rangle$ satisfies
$\mathbb{E}[\langle g, h \rangle] = \|\mu\|^2$,
providing an unbiased estimate of the squared gradient norm. We define
\[
    \text{Signal} = \sqrt{\hat{q}^{\mathrm{sig},2}}, \qquad
    \hat{q}^{\mathrm{sig},2} = \frac{1}{R}\sum_{r=1}^R \langle g^{(r)}, h^{(r)} \rangle.
\]

\paragraph{RMSE.}
The individual-sample estimator $\frac{1}{2}\|g - h\|^2$ satisfies
\[
    \mathbb{E}\!\left[\tfrac{1}{2}\|g - h\|^2\right]
    = \mathbb{E}\!\left[\|g - \mu\|^2\right]
    = \mathrm{Tr}(\Sigma),
\]
providing an unbiased estimate of the expected squared error, equal to the trace of the covariance
matrix $\Sigma$. We define
\[
    \text{RMSE} = \sqrt{\hat{q}^{\mathrm{rmse},2}}, \qquad
    \hat{q}^{\mathrm{rmse},2} = \frac{1}{R}\sum_{r=1}^R \frac{1}{2}\|g^{(r)} - h^{(r)}\|^2.
\]

\paragraph{Alignment.}
The individual-sample estimator $\frac{\langle g, h \rangle}{\|g\|\,\|h\|}$ satisfies
$\mathbb{E}\!\left[\frac{\langle g, h \rangle}{\|g\|\,\|h\|}\right] = \mathbb{E}\!\left[\mathrm{cos}(g, h)\right]$,
measuring directional consistency across independent estimates irrespective of magnitude. We define
\[
    \text{Align} = \hat{q}^{\mathrm{align}}, \qquad
    \hat{q}^{\mathrm{align}} = \frac{1}{R}\sum_{r=1}^R
    \frac{\langle g^{(r)}, h^{(r)} \rangle}{\|g^{(r)}\|\,\|h^{(r)}\|}.
\]

\paragraph{Signal-to-noise ratio.}
We define the SNR as the ratio of signal to RMSE:
\[
    \text{SNR} = \frac{\text{Signal}}{\text{RMSE}} =
    \sqrt{\frac{\hat{q}^{\mathrm{sig},2}}{\hat{q}^{\mathrm{rmse},2}}}.
\]
Higher values indicate more reliable gradient estimates, as the gradient signal dominates its
variability.

\subsubsection{Cross-Alignment via Leave-One-Out Estimation}

To evaluate how well a single-batch estimator aligns with the true gradient direction, we construct
a proxy of the true gradient using the iteration-level average gradient.

Let $\{g^{(1)}, \dots, g^{(M)}\}$ be batch-level mixed gradient estimators collected during a
training iteration, and define the iteration-level mean
\[
    G = \frac{1}{M} \sum_{i=1}^M g^{(i)}.
\]
To avoid self-inclusion bias, for a sampled mixed batch $g^{(j)}$ we construct the leave-one-out
estimator
\[
    G_{-j} = \frac{1}{M-1} \sum_{i \neq j} g^{(i)}.
\]
Given a test batch estimator $h$ (corresponding to batch-level estimates of either the mixed or
PW gradient in our experiments), we define
\[
    \text{CrossAlign}(h) = \hat{q}^{\mathrm{calign}}, \qquad
    \hat{q}^{\mathrm{calign}} = \frac{1}{R}\sum_{r=1}^{R}
    \frac{\langle G_{-j^{(r)}}, h^{(j^{(r)})} \rangle}{\|G_{-j^{(r)}}\|\,\|h^{(j^{(r)})}\|},
\]
where $j^{(r)}$ is drawn uniformly at random from $\{1, \dots, M\}$ in each repetition. This metric approximates the alignment between a
batch-level estimator and the true gradient direction.

\subsubsection{Metric estimation \label{appendix:metric-estimation}}

At each training iteration, we collect a set $Q_e$ of batch-level gradient estimators. Metrics are computed using a split-sample procedure.

For alignment, signal, and RMSE, we repeat the following procedure $R = 100$ times: we sample 
two gradients $g, h \in Q_e$ uniformly at random (with replacement) and compute the corresponding 
individual-sample estimator. The final estimate $\hat{q}$ is obtained by averaging over repetitions, 
and the reported metric is computed as a function of $\hat{q}$ as described in the previous subsection.

For cross-alignment, we sample an index $j^{(r)}$ uniformly at random from $\{1, \dots, M=|Q_e|\}$
and compute the leave-one-out quantity above using the paired mixed and PW estimators
corresponding to that index. This procedure is repeated $R = 100$ times and averaged.

\subsubsection{Aggregation}

To compare gradient quality across different training phases and problem sizes, we aggregate 
results according to two variables: the number of products $p$ and the performance gap (relative 
to the best performance achieved after perturbing the continuous policy and retraining the discrete 
policy component). Concretely, for each value of $p$ and each trained policy, we first compute 
the best validation loss achieved by any run of the perturbed policy. We then define the optimality 
gap at a given iteration as the relative difference between the current validation loss and this 
reference value. Performance gaps are discretized into buckets of varying width depending on the 
analysis. Metrics are first averaged within each iteration, then across runs, according to their 
performance gap bucket. Reported values correspond to averages over all runs and iterations for 
each $(p, \text{performance gap bucket})$ pair.

\subsubsection{Results \label{appendix:gradient-quality-results} }

We now present additional results to complement those presented in Section \ref{sec:gradient-quality-exps}. We organize results according to key insights.

\paragraph{Pathwise-only estimator's goodness is driven by signal ratio.}

We now provide guidance about when a PW-only estimator provides a good approximation of the mixed gradient. Our main hypothesis is that performance is governed by the \emph{signal ratio}, defined as the fraction of total squared signal attributable to the PW component; that is, the squared signal of the PW term divided by the sum of the squared signals of the PW and cross components. Intuitively, the PW estimator performs well when most of the gradient signal is carried by the PW term.

To support this, Figure \ref{fig:pathwise-alignment-scatter} plots the signal ratio against its cross-alignment between batch-level PW estimates and iteration-level mixed gradients, across product counts. Each point corresponds to a $(p,\text{performance gap})$ pair, and the dashed line shows a linear fit. We observe a strong positive relationship, with $R^2 = 0.73$, indicating that higher signal ratios directly translate into better alignment with the mixed gradient. Intuitively, when the cross term has comparable or larger magnitude, omitting it significantly distorts the gradient direction.

The signal ratio varies across both problem parameters and training regimes. First, it depends on the instance: larger fixed costs typically reduce the signal ratio, as they increase the magnitude of the cross term (via larger advantages) relative to the PW component. Second, it depends on the discrete policy's quality. Far from optimality, the signal ratio is typically large due to very large PW gradient norms. Additionally, as the discrete policy approaches a best response, the signal ratio approaches $1$ since the cross term's norm vanishes, consistent with Theorem \ref{thm:cross-term-vanishes-for-near-opt}.

Figure \ref{fig:pathwise-alignment-10} illustrates the evolution of both signal ratio and cross-alignment metrics for both PW and mixed batch-level estimators, for $p=10$. PW achieves high CrossAlign across most performance gaps, except in the $20$–$35\%$ range, where a drop in signal ratio coincides with poor alignment. For gaps below $15\%$, PW attains higher CrossAlign than mixed, although alignment degrades overall due to increased variance in the iteration-level mixed gradient. Still, the relatively stronger batch-level alignment suggests that PW remains a better estimator in this regime. This suggests that, if PW-only estimators are to be used, the evolution of the discrete policy must be carefully controlled during training.

\begin{figure}
\begin{subfigure}{.49\textwidth}
  \centering
  \includegraphics[width=1\linewidth]{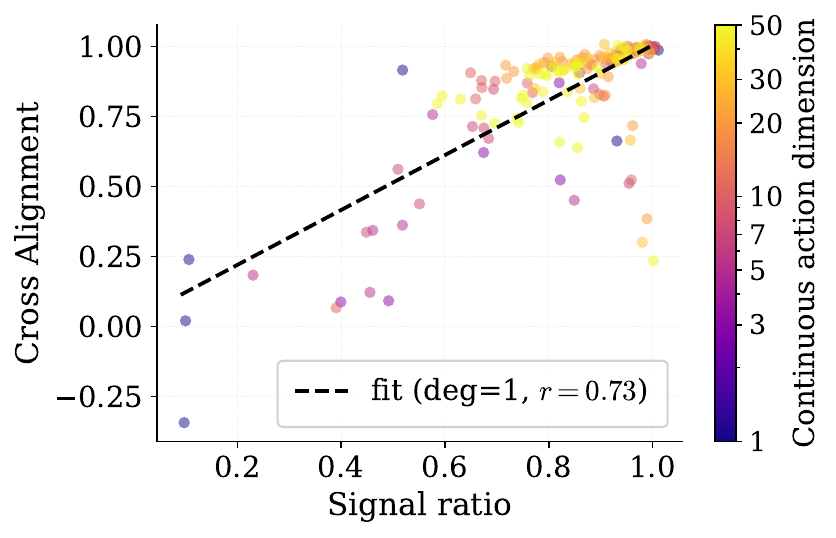}
  \caption{CrossAlign vs.\ signal ratio across all $(p, \text{gap})$ configurations. 
  Color indicates $p$ (darker = smaller dimension). \newline}
  \label{fig:pathwise-alignment-scatter}
\end{subfigure}%
\hspace{0.02\textwidth}
\begin{subfigure}{.49\textwidth}
  \centering
  \includegraphics[width=1\linewidth]{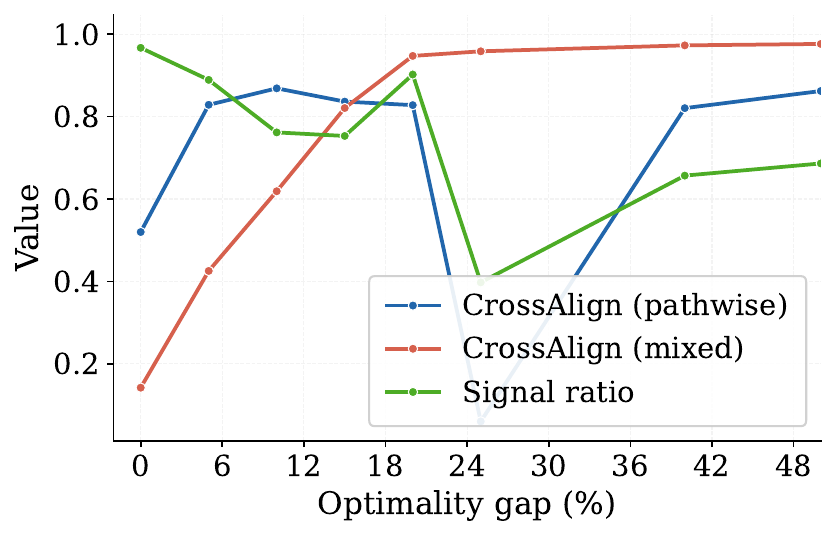}
  \caption{CrossAlign and signal ratio vs.\ optimality gap for $p=10$. Both series share 
  the $[0,1]$ range but measure different quantities: CrossAlign is a cosine similarity, 
  while signal ratio is a relative norm.}
  \label{fig:pathwise-alignment-10}
\end{subfigure}
\caption{Cross alignment between batch-level PW and iteration-level mixed gradient estimates across product counts 
and training regimes.}
\label{fig:pathwise-alignment}
\end{figure}

\paragraph{Cross term degrades close to optimality due to increased sensitivity of discrete policy.}

We analyze the drivers behind the degradation of the cross term near optimality. Figures~\ref{fig:signal-multi-stores} and~\ref{fig:rmse-multi-stores} report the gradient norm and RMSE, respectively, for continuous action dimensions $p=20$ and $p=50$. The RMSE exhibits sharp spikes as the performance gap approaches a best response, occurring precisely in regions where the gradient norm becomes small. This renders the gradient increasingly uninformative near optimality.

\begin{figure}
\centering
\begin{subfigure}{0.48\textwidth}
  \centering
  \includegraphics[width=\linewidth]{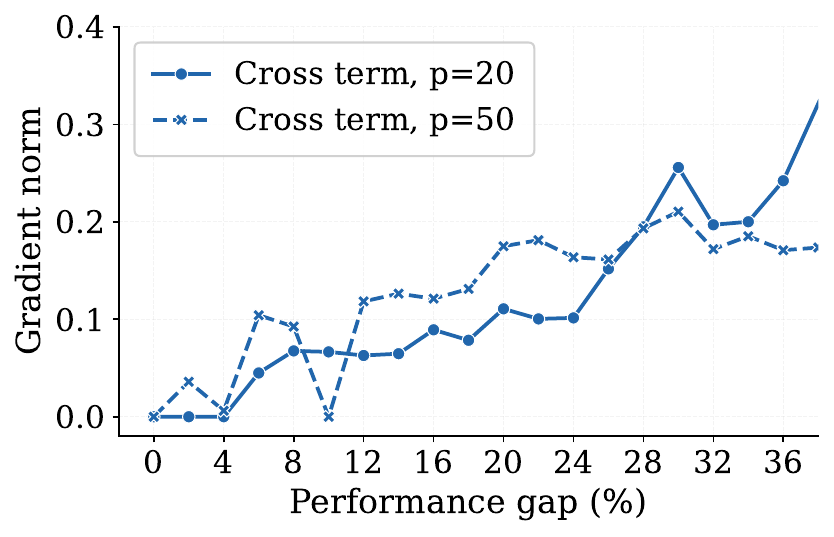}
  \caption{Gradient norm.}
  \label{fig:signal-multi-stores}
\end{subfigure}
\hfill
\begin{subfigure}{0.48\textwidth}
  \centering
  \includegraphics[width=\linewidth]{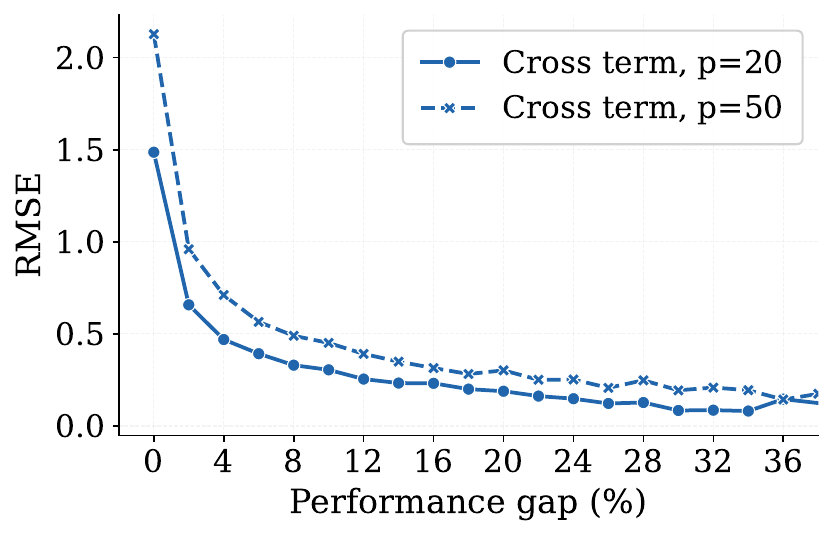}
  \caption{RMSE.}
  \label{fig:rmse-multi-stores}
\end{subfigure}

\caption{
Gradient norm and RMSE of batch-level gradient estimates of the cross term in the JRP setting for $p=20$ and $p=50$.
}
\label{fig:gradient-multi-stores}
\end{figure}

To understand this, Figure~\ref{fig:discrete-policy-evolution} plots the probability of not ordering as a function of total system on-hand and in-transit inventory across different performance gaps in the JRP with one product. As shown in \cite{scarf1960optimality}, under linear underage and holding costs with shared fixed costs and i.i.d.\ demand, the optimal policy is characterized by an $(s, S)$ policy: there exists a number $s$ such that if total inventory falls on or below $s$, we should place an order that increases total inventory to $S$. We observe that the learned policy seems to approximate such a thresholded structure, becoming increasingly sharp as the performance gap decreases.

This implies that the sensitivity of the discrete policy to the state becomes highly uneven around the threshold. The policy is nearly flat far from the threshold but becomes very steep in its vicinity. As a result, the SF term $\nabla_\kappa \log \pi^{\mathcal{X}}_\phi(x_t \mid s_t(\kappa))$ exhibits high variability across trajectories, depending on whether the state lies near this transition region. This variability drives the increase in the variance of the cross term, even as its expectation decreases.

\begin{figure}
\centering

\begin{subfigure}{0.48\textwidth}
    \centering
    \includegraphics[width=\linewidth, trim=0 0 0 0, clip]{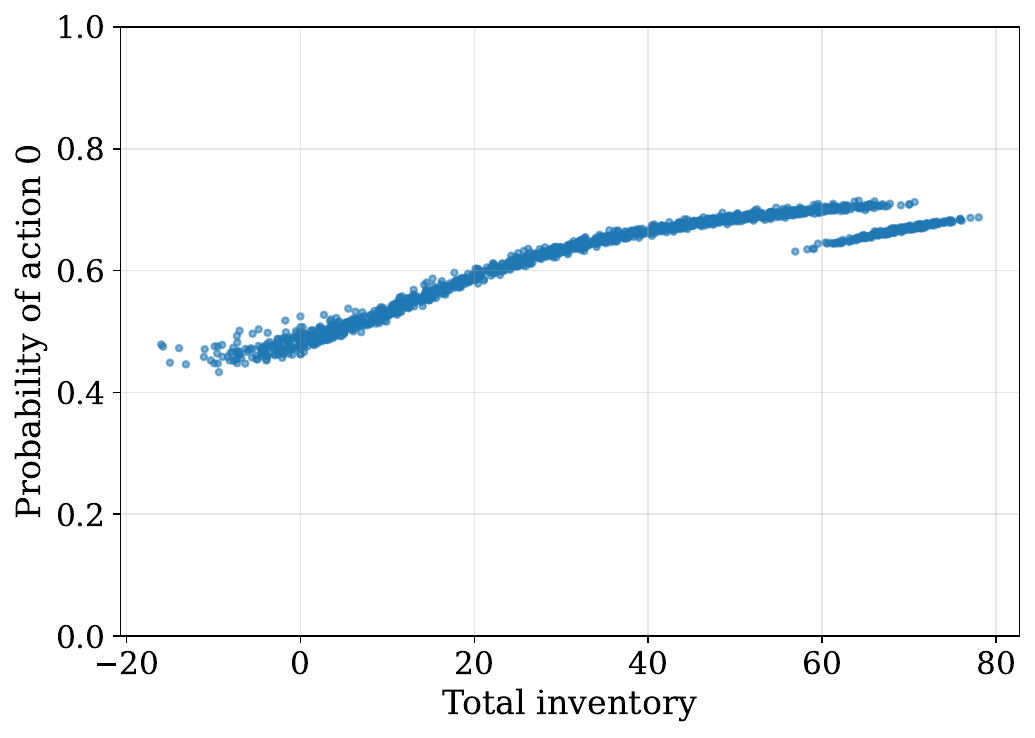}
    \caption{Gap $\approx 68\%$}
\end{subfigure}
\hfill
\begin{subfigure}{0.48\textwidth}
    \centering
    \includegraphics[width=\linewidth, trim=0 0 0 0, clip]{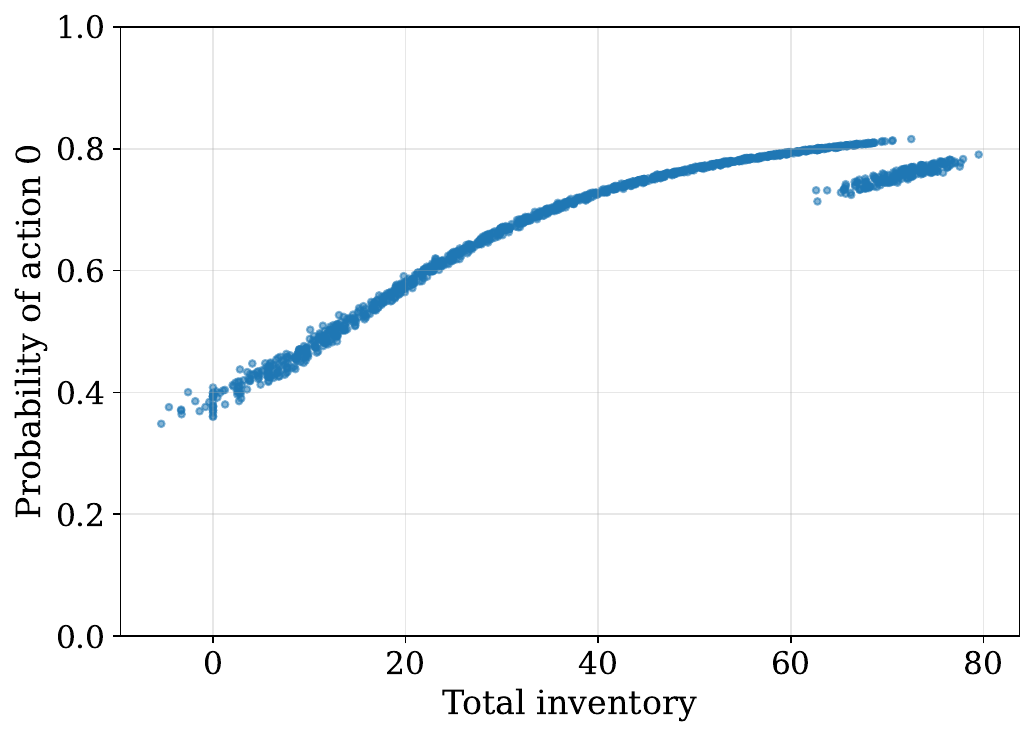}
    \caption{Gap $\approx 26\%$}
\end{subfigure}

\vspace{0.5em}

\begin{subfigure}{0.48\textwidth}
    \centering
    \includegraphics[width=\linewidth, trim=0 0 0 0, clip]{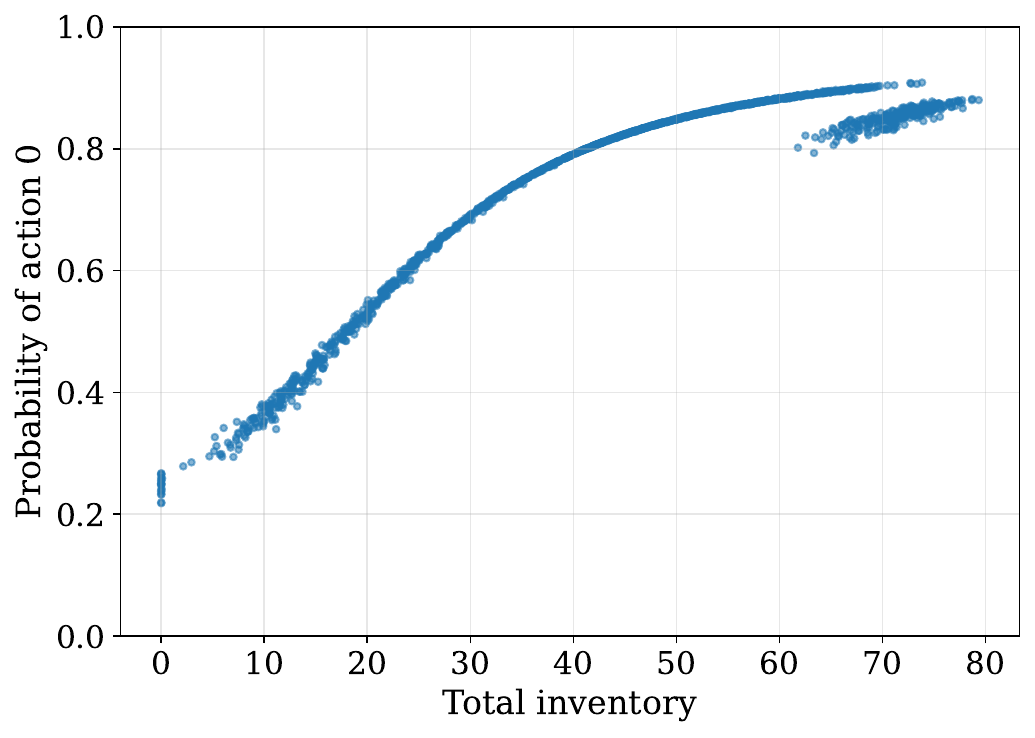}
    \caption{Gap $\approx 10\%$}
\end{subfigure}
\hfill
\begin{subfigure}{0.48\textwidth}
    \centering
    \includegraphics[width=\linewidth, trim=0 0 0 0, clip]{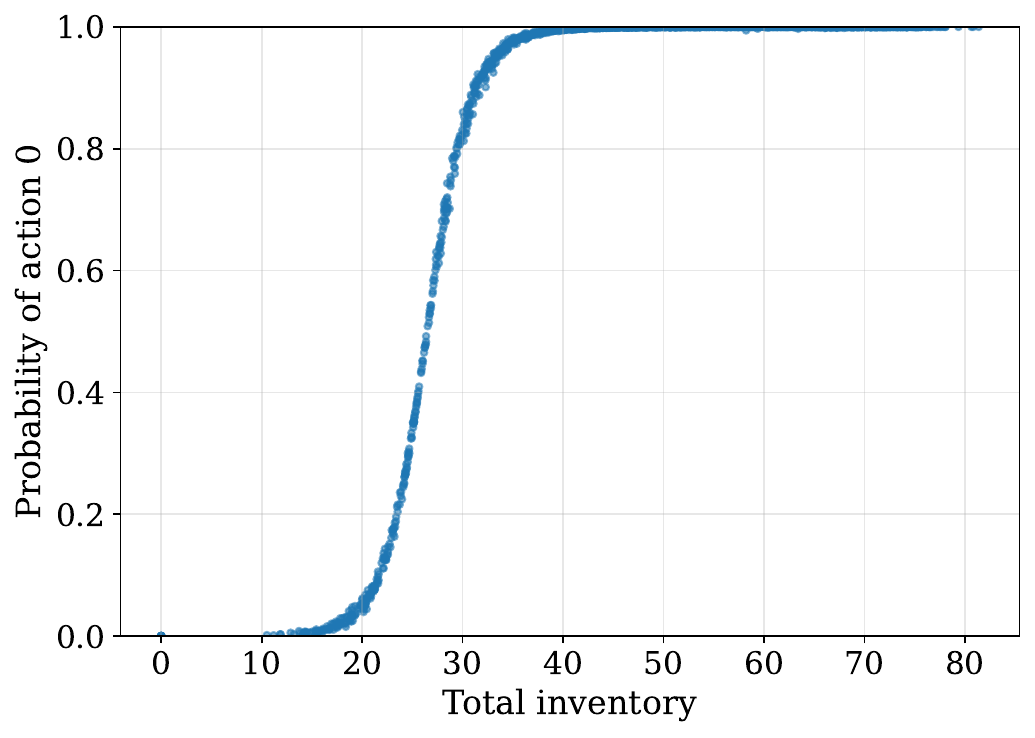}
    \caption{Gap $\approx 0\%$}
\end{subfigure}

\caption{Probability of not placing an order as a function of total system inventory, for several optimality gaps.}
\label{fig:discrete-policy-evolution}
\end{figure}

\subsection{PPO Hyperparameter Robustness}
\label{appendix:ppo-hyperparams}

Figure~\ref{fig:scaling-jrp-one-off} reports a hyperparameter ablation study for PPO, comparing one-off variants against the baseline PPO and HPO configurations (PPO~1 and HPO~1, respectively). The baseline PPO~1 uses learning rate $10^{-4}$, entropy coefficient $\beta_{\text{ent}} = 0.5$ and $5$ optimization epochs per iteration (i.e., each collected batch is reused for $5$ gradient passes) in which both the discrete and continuous policies are updated. Each one-off variant isolates a single change to one of these knobs: PPO~2 lowers entropy regularization to $\beta_{\text{ent}} = 0.01$; PPO~3 increases the number of PPO epochs per iteration from $5$ to $10$; PPO~4 enables discrete-only extra epochs at the base learning rate $10^{-4}$; PPO~5 increases the learning rate to $10^{-3}$ without extra discrete epochs; and PPO~6 combines both ($10^{-3}$ with discrete-only extra epochs). The discrete-only extra epoch variants (PPO~4 and PPO~6) deserve special attention: in the base setup, 5 gradient steps are taken with respect to both discrete and continuous parameters simultaneously. Enabling discrete-only extra epochs instead performs $5$ full gradient passes for the discrete parameters but only a single pass for the continuous ones. The motivation is to mirror how HPO operates, where discrete parameters take 5 gradient steps per batch but the continuous only take 1, thereby testing whether a comparable update structure can close the gap between PPO and HPO. An additional variant with $1$ optimization epoch per batch was included in the sweep but is omitted from the figure, as it failed to satisfy the convergence criterion for any value of $p$.

Across all variants, baseline PPO~1 is the strongest PPO configuration. All variants converge more slowly than PPO~1 for most values of $p$, with several exhibiting erratic behavior at small $p$ before diverging steeply. Notably, even the discrete-only extra epoch variants, which were specifically designed to mimic HPO's update structure, fail to match PPO~1 let alone HPO~1. In contrast, HPO~1 outperforms every PPO variant across the full range of $p$, with the gap widening consistently as $p$ grows. These results suggest that HPO's advantage over PPO is robust to hyperparameter tuning: even with careful one-off adjustments, no PPO variant approaches HPO's convergence speed at larger action dimensions.

\begin{figure}
\centering
\begin{subfigure}{\textwidth}
  \centering
  \includegraphics[width=0.9\linewidth]{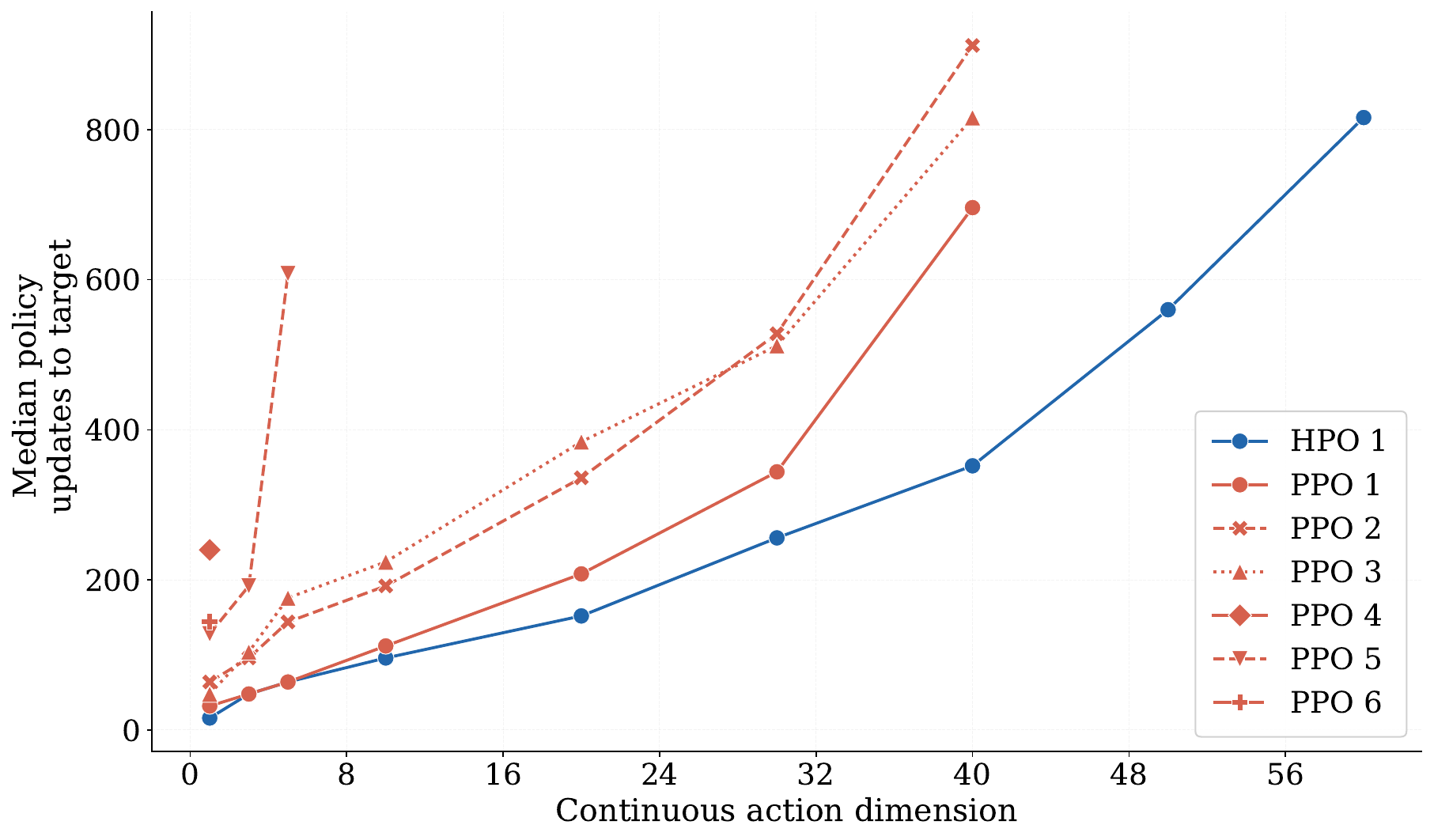}
\end{subfigure}
  \caption{Median policy updates to reach a $10\%$ target gap for HPO and six PPO 
hyperparameter configurations, as the continuous action dimension $p$ varies. 
Configurations for which the median did not converge within the training budget are 
omitted.}
  \label{fig:scaling-jrp-one-off}
\end{figure}

\end{document}